\DeclareMathOperator*{\argmin}{argmin}
\newcommand{\grad}{\nabla}
\newcommand{\cS}{\mathcal{S}}
\newcommand{\cN}{\mathcal{N}}
\newcommand{\cD}{\mathcal{D}}
\newcommand{\cL}{\mathcal{L}}
\newcommand{\cW}{\mathcal{W}}
\newcommand{\cP}{\mathcal{P}}
\newcommand{\cZ}{\mathcal{Z}}
\newcommand{\PR}{\mathrm{PR}}
\newcommand{\DPR}{\mathrm{DPR}}
\newcommand{\PRhat}{\widehat{\PR}}
\newcommand{\DPRhat}{\widehat{\DPR}}
\newcommand{\Dhat}{\widehat{\cD}}
\newcommand{\thetaPO}{\theta_{\mathrm{PO}}}
\newcommand{\thetaPS}{\theta_{\mathrm{PS}}}
\newcommand{\Eclean}{E_{\mathrm{clean},\delta}}
\newcommand{\ndeploy}{n^{\mathrm{deploy}}}
\newcommand{\nopen}{n^{\mathrm{open}}}
\newcommand{\ntotal}{n_{\mathrm{total}}}
\newcommand{\ndistinct}{n_{\mathrm{distinct}}}
\newcommand{\hmax}{h_{\max}}
\newcommand{\pmax}{p_{\max}}
\newcommand{\Dtheta}{D}
\newcommand{\bbE}{\mathbb{E}}
\newcommand{\bbP}{\mathbb{P}}
\newcommand{\StroquOOL}{\normalfont \texttt{StroquOOL}\xspace}
\newcommand{\SequOOL}{\normalfont{\texttt{SequOOL}}\xspace}
\newcommand{\StoSOO}{\texttt{StoSOO}\xspace}
\newcommand{\POO}{\texttt{POO}\xspace}
\newcommand{\DOO}{\texttt{DOO}\xspace}
\newcommand{\SOO}{\texttt{SOO}\xspace}
\newcommand{\DOOP}{\texttt{DOOP}\xspace}
\newcommand{\SOOP}{\texttt{SOOP}\xspace}
\newcommand{\Zooming}{\texttt{Zooming}\xspace}
\newcommand{\SZooming}{\texttt{SZooming}\xspace}
\newcommand{\GPO}{\texttt{GPO}\xspace}
\newcommand{\HCT}{\texttt{HCT}\xspace}
\newcommand{\PCT}{\texttt{PCT}\xspace}
\newcommand{\VHCT}{\texttt{VHCT}\xspace}
\newcommand{\HOO}{\texttt{HOO}\xspace}
\newcommand{\Direct}{\texttt{DiRect}\xspace}
\newtheorem{assumption}{Assumption}
\begin{document}

\title{Parameter-Free Algorithms for Performative Regret Minimization under Decision-Dependent Distributions}

\author{\name Sungwoo Park$^1$  \email s.park@kaist.ac.kr 
       \AND
        \name Junyeop Kwon$^1$ \email junyeopk@kaist.ac.kr
       \AND
       \name Byeongnoh Kim$^2$ \email b-n.kim@samsung.com 
       \AND
       \name Suhyun Chae$^2$ \email suhyun.chae@samsung.com 
       \AND
       \name Jeeyong Lee$^2$ \email jiyong.lee@samsung.com 
       \AND
       \name Dabeen Lee$^{1,\dagger}$ \email dabeenl@kaist.ac.kr
       \AND
       \addr 
        $^1$Department of Industrial and Systems Engineering, KAIST, Daejeon 34141, South Korea\\
        $^2$Device Solutions Research, Samsung Electronics, Hwaseong, Gyeonggi 18448, South Korea\\
        $^\dagger$ corresponding author
       }

\editor{My editor}

\maketitle

\begin{abstract}%
This paper studies performative risk minimization, a formulation of stochastic optimization under decision-dependent distributions. We consider the general case where the performative risk can be non-convex, for which we develop efficient parameter-free optimistic optimization-based methods. Our algorithms significantly improve upon the existing Lipschitz bandit-based method in many aspects. In particular, our framework does not require knowledge about the sensitivity parameter of the distribution map and the Lipshitz constant of the loss function. This makes our framework practically favorable, together with the efficient optimistic optimization-based tree-search mechanism. We provide experimental results that demonstrate the numerical superiority of our algorithms over the existing method and other black-box optimistic optimization methods.
\end{abstract}

\begin{keywords}
Decision-Dependent Distributions, Performative Risk Minimization, Optimistic Optimization, Black-Box Optimization, Stochastic Non-Convex Optimization
\end{keywords}

\section{Introduction}

In the realm of stochastic optimization, where navigating uncertainty is paramount, distributional shifts stand out as a significant challenge. Among the various sources of these shifts, one particularly intriguing phenomenon stems from feedback mechanisms intricately linked to decision-making processes. This feedback loop alters the distribution that governs the stochastic environment of the system, creating a dynamic landscape where decisions shape and are shaped by distributions. For example, the decisions made by a dynamic resource allocation algorithm for a renewable energy grid not only influence the immediate allocation of resources but also affect the underlying distribution of factors like energy demand and supply. Classifiers, such as insurance underwriting systems, often promote a shift in behavior within the population to improve their labels. Predictions of stock prices wield significant influence over trading decisions. Moreover, election predictions have the potential to shape and influence voter behavior, which in turn can impact voting results.

Decision-making processes under such phenomena can be formulated as stochastic optimization under \emph{decision-dependent distributions}. \cite{perdomo20a} proposed the notion of the \emph{distribution map} to consider decision-dependent distributions for stochastic optimization models. That is, the distribution $\cD(\theta)$ of the parameter $z$ capturing the stochastic environment depends on the decision $\theta$. Here, $\theta$ may encode the resource allocation decision for a renewable energy grid and the election prediction, in which case $z$ corresponds to the energy demand and the voting results, respectively. For machine learning, we can associate $\theta$ with predictive models and $z$ with data. Then the objective is to minimize the \emph{performative risk} under a loss function $f$, defined as 
$$\PR(\theta):= \mathbb{E}_{z\sim \cD(\theta)}\left[f(\theta, z)\right].$$
The expression \emph{performative} comes from the term \emph{performative prediction}~\citep{perdomo20a}, which implies the phenomenon where predictions influence the outcomes. The goal of this paper is to design an efficient algorithmic framework for minimizing the performative risk which models stochastic optimization under decision-dependent distributions.

\subsection{Existing Methods for Performative Risk Minimization}

Unlike the standard stochastic optimization problem, the decision $\theta$ may affect the underlying distribution $\cD(\theta)$. Hence, a natural starting point to minimize the performative risk is to consider the following iterative algorithm, referred to as \emph{repeated risk minimization} (RRM). Given an initial solution $\theta_0\in\Theta$ where $\Theta$ is the domain, we apply
\begin{equation}\label{RRM}
\theta_{t+1}\in \argmin_{\theta\in \Theta} \mathbb{E}_{z\sim \cD(\theta_{t})}\left[f(\theta, z)\right]\tag{RRM}
\end{equation}
for $t\geq 0$. Here, computing the next iterate $\theta_{t+1}$ requires solving a stochastic optimization instance where the underlying distribution is fixed with $\cD(\theta_t)$. Another approach is a gradient-based method such as
\begin{equation}\label{RGD}
\theta_{t+1} = \theta_t - \eta_t\mathbb{E}_{z\sim \cD(\theta_{t})}\left[\grad f(\theta_t, z)\right]\tag{RGD}
\end{equation}
where $\eta_t$ is a step size, and we refer to this procedure as \emph{repeated gradient descent} (RGD). Note that running \ref{RGD}, as well as \ref{RRM}, is based on access to the distribution $\cD(\theta_t)$ for every iteration $t\geq 0$, which may not be feasible in practice. A more sample-efficient method is to apply the standard \emph{stochastic gradient descent} (SGD) update, given by 
\begin{equation}\label{SGD}
\theta_{t+1} = \theta_t - \eta_t \grad f(\theta_t, z_t)\quad \text{where}\quad z_t\sim \cD(\theta_t).\tag{SGD}
\end{equation}
\cite{Drusvyatskiy} analyzed variants of~\ref{SGD} such as stochastic proximal gradient, proximal point, clipped gradient, and accelerated gradient methods.

Convergence of these iterative methods has been established; \citep{perdomo20a} for RRM, \citep{perdomo20a,perdomo20b} for RGD, and \citep{perdomo20b} for SGD. They showed convergence to a \emph{performatively stable} solution, under some strong convexity and smoothness assumptions on the loss function $f$. Here, we say that a solution $\thetaPS$ is performatively stable if it satisfies
$$\thetaPS \in \argmin_{\theta\in \Theta} \mathbb{E}_{z\sim \cD(\thetaPS)}\left[f(\theta, z)\right]$$
In particular, $\thetaPS$ is a fixed point of \ref{RRM}. However, let alone the validity of the structural assumptions on the loss functions, the performatively stable solution $\thetaPS$ is in general not a minimizer of the performative risk~\citep{perdomo20a, miller21a}. Let $\thetaPO$ denote a minimizer of the performative risk, i.e.,
$$\thetaPO \in \argmin_{\theta\in \Theta} \mathbb{E}_{z\sim \cD(\theta)}\left[f(\theta, z)\right].$$
It turns out that $\PR(\thetaPS)$ can be arbitrarily large compared to  $\PR(\thetaPO)$~\citep{miller21a}. 

Derivative-free zeroth-order optimization methods have been proposed to minimize the performative risk directly \citep{izzo21a,miller21a,izzo22a,Ray22}. For the derivative-free methods to minimize the performative risk $\PR(\theta)$, the requirement, however, is that $\PR(\theta)$ is a convex function of the decision $\theta$. \cite{miller21a} provided some sufficient conditions on the distribution map to guarantee the convexity of the performative risk. They argued that if the distribution map satisfies a certain stochastic dominance condition, which is related to \emph{stochastic orders}~\citep{shaked2007stochastic}, then convexity of the loss function $f$ leads to a convex performative risk. Nevertheless, as noted by~\cite{perdomo20a}, the performative risk is non-convex in general even if the loss function is convex. 

To tackle the general case of non-convex performative risk, \cite{jagadeesan22a} developed a bandit optimization-based algorithm. The problem of minimizing the performative risk is indeed a bandit optimization problem because until the decision $\theta$ is deployed it is difficult to estimate the distribution $\cD(\theta)$ and thus the performative risk $\PR(\theta)$. That said, the framework of \cite{jagadeesan22a} is inspired by the zooming algorithm (\Zooming) for Lipschitz bandits due to \cite{Kleinberg08}. The core idea is to adaptively discretize the solution space $\Theta$ thereby narrowing down the location of the optimal decision $\thetaPO$. In fact, if the loss function $f$ is $L_\theta$-Lipschitz continuous in $\theta$ and $L_z$-Lipschitz continuous in $z$, the \emph{$\varepsilon$-sensitivity} of the distribution map (defined formally in \Cref{sec:setting}) implies that $\PR(\theta)$ is $(L_\theta+L_z\varepsilon)$-Lipschitz continuous~\citep{jagadeesan22a}. Here, the $\varepsilon$-sensitivity measures how much the distributions $\cD(\theta)$ and $\cD(\theta')$ can differ for two distinct decisions $\theta$ and $\theta'$. Then applying \Zooming directly on $\PR(\theta)$ would guarantee a sublinear regret. 

Although this lays down a good starting point, direct application of \Zooming fails to utilize the fact that the feedback obtained after deploying decision $\theta$ is $\cD(\theta)$, based on which the learner can evaluate $\PR(\theta)$ but also infer the distribution $\cD(\theta')$ of other solutions $\theta'$ using the $\varepsilon$-sensitivity. \cite{jagadeesan22a} referred to this as \emph{performative feedback}. Building on this idea, they developed a variant of \Zooming, and they provided a regret upper bound that is parameterized by not $L_\theta+L_z\varepsilon$ but $L_z\varepsilon$. Here, note that $L_z\varepsilon$ vanishes as $\varepsilon\to 0$ while $L_\theta+L_z\varepsilon$ does not. Moreover, $L_z\varepsilon$ does not depend on $L_\theta$, so the algorithm works even when $L_\theta$ is not bounded. The two main components of their algorithm are adaptive discretization and sequential elimination based on \emph{performative confidence bounds} which we explain in \Cref{sec:setting}. 

The algorithm of \cite{jagadeesan22a} solves performative risk minimization, but several issues hinder its practical implementation. First, to implement the adaptive discretization procedure, we need to know the Rademacher complexity $\mathfrak{C}^*(f)$ of learning the performative risk $\PR(\theta)$ under the loss function $f$ based on data samples from distribution $\cD(\theta)$. The Rademacher complexity parameter $\mathfrak{C}^*(f)$ can be very high depending on the structure of $f$. Second, to build a performative confidence bound, we need the Lipschitz constant $L_z$ and the sensitivity parameter $\varepsilon$. One may argue that there is a way of estimating the Lipschitz constant $L_z$ for a known class of loss functions, but the sensitivity parameter $\varepsilon$ determines the global landscape of the distribution map, which means that it would be difficult to measure $\varepsilon$ in advance. Third, one iteration of the algorithm is computationally expensive. This is because each time a decision $\theta$ is deployed, we need to compute the performative confidence bound for every solution $\theta'\in\Theta$ remaining in the search space. Such an issue is inherent in Lipschitz bandit-based methods. Although \cite{jagadeesan22a} did not demonstrate an implementation of their algorithm, our numerical results in \Cref{sec:numerical} show that the algorithm is not efficient and incurs a high regret in practice.

The aforementioned limitations of the existing method due to~\cite{jagadeesan22a} for performative risk minimization motivate the following question.
\begin{quote}
\emph{Can we design a practical algorithm for performative risk minimization that relies on minimal knowledge about the problem parameters?}
\end{quote}
In this paper, we devise efficient parameter-free algorithms for performative risk minimization. We not only demonstrate strong theoretical performance guarantees but also show experimental results to highlight their numerical effectiveness. 

\subsection{Our Contributions}

As in~\citep{jagadeesan22a}, we study the problem of minimizing the performative risk with performative feedback. The algorithm of~\cite{jagadeesan22a} is an adaptation of \Zooming by~\cite{Kleinberg08}, and as a result, it requires knowledge of problem parameters such as the Rademacher complexity $\mathfrak{C}^*(f)$, the Lipschitz constant $L_z$, and the sensitivity parameter $\varepsilon$.

Our main contribution is to design practical algorithms that do not assume knowledge of the problem parameters. To develop such parameter-free algorithms, we build upon the idea of \emph{optimistic optimization} methods that may adapt to unknown smoothness of the objective function. Here, parameter-free optimistic optimization methods originate from the simultaneous optimistic optimization (\SOO) algorithm~\citep{munos-soo} and the stochastic extension of \SOO (\StoSOO) algorithm~\citep{stosoo}, and they are devised to optimize black-box objective functions that are possibly non-convex.

Our algorithms are inspired by two more recent optimistic optimization-based parameter-free methods due to~\cite{bartlett19a}, \SequOOL for the deterministic evaluation case and \StroquOOL for the noisy case. We start by considering the conceptual setting where the distribution $\cD(\theta)$ associated with the deployed decision $\theta$ can be fully observed. We call this case the \emph{full-feedback} setting. Next, we study the more practically relevant setting where we obtain a few samples from $\cD(\theta)$ after deploying decision $\theta$, and we refer to this case as the \emph{data-driven} setting. We develop our algorithms for the full-feedback setting and the data-driven setting based on \SequOOL and \StroquOOL, respectively.

To highlight our results early in this paper, let us provide an informal summary of our main theorems. The following states a performance guarantee for the full-feedback case. 
\begin{theorem}[Full-Feedback Case, Informal]\label{thm-informal:full-feedback}
Suppose that the distribution map satisfies the $\varepsilon$-sensitivity condition and the loss function $f(\theta,z)$ is $L_z$-Lipschitz continuous in $z$ for any $\theta$. Let $d$ denote the $L_z\varepsilon$-near-optimality dimension. For the full-feedback setting, \Cref{algorithm0} after $T$ decision deployments for a sufficiently large $T$ finds a solution $\theta$ with
$$\PR(\theta) - \PR(\thetaPO) =\begin{cases}
{L_z\varepsilon}\cdot {2^{O\left(-\frac{T}{\log T}\right)}},& \text{if $d=0$},\\
\tilde O\left(L_z \varepsilon \cdot T^{-\frac{1}{d}}\right), &\text{if $d>0$}.
\end{cases}$$
\end{theorem}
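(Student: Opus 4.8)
The plan is to recast the full-feedback performative problem as a deterministic optimistic-optimization instance to which the \SequOOL guarantee of~\cite{bartlett19a} applies, the crucial point being that the \emph{effective} smoothness seen by the algorithm is $L_z\varepsilon$ rather than the ambient Lipschitz constant $L_\theta + L_z\varepsilon$ of $\PR$. First I would exploit the performative feedback: after deploying $\theta$ we observe $\cD(\theta)$ in full, so not only is $\PR(\theta)$ known exactly, but for every other decision $\theta'$ the quantity $\bbE_{z\sim\cD(\theta)}[f(\theta',z)]$ is computable, and combining $L_z$-Lipschitzness in $z$ with $\varepsilon$-sensitivity gives
$$\left| \bbE_{z\sim\cD(\theta)}[f(\theta',z)] - \PR(\theta') \right| \le L_z\varepsilon\,\|\theta-\theta'\|.$$
This yields a valid lower bound on $\PR(\theta')$ whose slack scales only with $L_z\varepsilon$; it is precisely the performative confidence bound of \Cref{sec:setting} specialized to exact distributional feedback.

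Next I would set up the hierarchical machinery. Equip $\Theta$ with a hierarchical partition into cells of geometrically shrinking diameter $\delta\rho^h$ at depth $h$, and measure near-optimality through the semi-metric $\ell(\theta,\theta')=L_z\varepsilon\,\|\theta-\theta'\|$, so that the $L_z\varepsilon$-near-optimality dimension $d$ counts, at each depth $h$, the cells whose representative is within $O(L_z\varepsilon\,\rho^h)$ of $\PR(\thetaPO)$. Two conditions must be checked so that \SequOOL's analysis goes through unchanged: (i) the cell diameters decrease geometrically, which is a property of the chosen partition; and (ii) the local-smoothness/optimism condition, namely that for the cell $P_h$ containing $\thetaPO$ one has $\sup_{x\in P_h}\PR(x)-\PR(\thetaPO)\le L_z\varepsilon\cdot\mathrm{diam}(P_h)$, which is exactly the bound from the first step applied with $\theta$ the representative of $P_h$. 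Condition (ii) is what forces the $L_z\varepsilon$ factor, and not $L_\theta+L_z\varepsilon$, into every subsequent estimate.

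I would then invoke \SequOOL's simple-regret theorem with an evaluation budget matched to the $T$ deployments. \SequOOL opens cells down to a maximal depth $\hmax=\Theta(T/\overline{\log}\,T)$, where $\overline{\log}\,T=\sum_{i=1}^{T}1/i=\Theta(\log T)$ is the harmonic factor responsible for the $1/\log T$ in the exponent. When $d=0$ the returned point satisfies $\PR(\theta)-\PR(\thetaPO)\le L_z\varepsilon\cdot\rho^{\hmax}$, and substituting $\hmax$ gives the claimed $L_z\varepsilon\cdot 2^{-O(T/\log T)}$; when $d>0$, optimizing the depth-dependent bound over the admissible depths yields $\tilde{O}(L_z\varepsilon\cdot T^{-1/d})$.

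The main obstacle is the verification, in the second step, that the cross-evaluations $\bbE_{z\sim\cD(\theta)}[f(\theta',\cdot)]$ can legitimately replace the exact function values that \SequOOL assumes at each representative, \emph{without} destroying the optimism underlying its correctness: I must show that the lower bounds derived from deployed decisions never exclude the cell containing $\thetaPO$, so that \SequOOL's depth-to-regret guarantee transfers verbatim under the semi-metric $\ell$. A secondary, more routine difficulty is bookkeeping the budget so that $T$ deployments realize the depth $\hmax$ together with the harmonic factor, and confirming that the $L_z\varepsilon$-near-optimality dimension is finite under the stated assumptions so that the $d>0$ branch is non-vacuous.
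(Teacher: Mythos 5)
Your high-level route is the paper's route --- use the decoupled performative risk $\DPR(\theta,\theta')$ computed from the observed $\cD(\theta)$, bound $|\DPR(\theta,\theta')-\PR(\theta')|\le L_z\varepsilon\|\theta-\theta'\|^\alpha$ via Kantorovich--Rubinstein, and run a \SequOOL-style harmonic schedule down to depth $\hmax=\Theta(T/\log T)$ with a Lambert-$W$ depth bound --- but your condition (ii) contains a genuine gap. The inequality $\sup_{x\in P_h}\PR(x)-\PR(\thetaPO)\le L_z\varepsilon\cdot\mathrm{diam}(P_h)$ is false in general: within a cell, $\PR$ itself can vary by $\Theta\bigl((L_\theta+L_z\varepsilon)\cdot\mathrm{diam}(P_h)\bigr)$, and your first-step bound controls only the gap between the decoupled risk and the true risk at a \emph{fixed} comparison point, not the variation of $\PR$ over the cell ($\PR(x)=\DPR(x,x)$, and $\DPR(\theta_h,\cdot)$ drifts from $\PR(\cdot)$ only by $L_z\varepsilon$, but $\DPR(\theta_h,\cdot)$ itself varies at rate governed by $L_\theta$). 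Consequently, if cell representatives are chosen arbitrarily as in vanilla \SequOOL, the evaluated value $\PR(\theta_{h,i})$ can exceed $\inf_{\theta\in\cP_{h,i}}\PR(\theta)$ by $\Theta(L_\theta 2^{-h})$, and invoking \SequOOL ``verbatim'' delivers only the $(L_\theta+L_z\varepsilon)$-scaled guarantee you set out to avoid. You correctly flag this transfer as ``the main obstacle,'' but your proposal does not resolve it, and the justification offered for (ii) is a non sequitur.

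The missing idea is the paper's representative-selection rule: take $\theta_{h,i}\in\argmin_{\theta\in\cP_{h,i}}\DPR(\theta_{h-1,j},\theta)$, where $\theta_{h-1,j}$ is the parent's representative, which is computable precisely because $\cD(\theta_{h-1,j})$ was observed when the parent was opened. Two applications of the approximation bound (\Cref{lemma:PR-approximation}) then chain into the one-sided guarantee $\PR(\theta_{h,i})\le\inf_{\theta\in\cP_{h,i}}\PR(\theta)+2(2\sqrt{\Dtheta})^\alpha L_z\varepsilon 2^{-\alpha h}$ (\Cref{lemma:cell-approximation0}); this one-sided bound \emph{at the chosen representative} --- not a sup over the cell --- is what substitutes for \SequOOL's local-smoothness assumption, and the factor $2$ is absorbed by the $6\nu\rho^h$ slack in \Cref{def:near-optimality} when the depth-induction counting lemma (\Cref{lemma:depth0}) is rerun. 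A smaller framing issue: \SequOOL does not maintain optimistic lower bounds or eliminate cells --- your ``lower bounds never exclude the optimal cell'' criterion describes the zooming method of \cite{jagadeesan22a}; the actual argument is a counting contradiction, namely that if the cell containing $\thetaPO$ at depth $h$ is not among the $\lfloor\hmax/h\rfloor$ opened cells, then \Cref{lemma:cell-approximation0} certifies $\lfloor\hmax/h\rfloor+1$ cells that are $6\nu\rho^h$-near-optimal, exceeding the bound $\rho^{-dh}$ from the near-optimality dimension. With the argmin rule and this lemma in place, the remainder of your plan (budget bookkeeping, $d=0$ versus $d>0$ via Lambert $W$) matches the paper's proof of \Cref{thm:full-feedback}.
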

Here, the optimality gap bounds hide dependence on the ambient dimension $D$ of the decision domain $\Theta$. The notion of \emph{near-optimality dimension} was first introduced by \cite{x-armed-bandits}, and they argued that the near-optimality dimension and the \emph{zooming dimension} due to \cite{Kleinberg08} are closely related. In this paper, we use a more refined definition of the near-optimality dimension due to \cite{Grill15}. 

Note that the optimality gap bounds as well as the near-optimality dimension depend on parameters $L_z$ and $\varepsilon$ but not on $L_\theta$. In fact, direct application of \SequOOL to minimize the performative risk $\PR(\theta)$ would result in dependence on $L_\theta + L_z \varepsilon$ as the Lipschitz constant of $\PR(\theta)$ is $L_\theta + L_z \varepsilon$~\citep{jagadeesan22a}. More precisely, we would need the $(L_\theta + L_z\varepsilon)$-near-optimality dimension, and the resulting bounds would be $(L_\theta+L_z\varepsilon)2^{O(-T/\log T)}$ and $\tilde O((L_\theta+L_z\varepsilon) T^{-1/d})$. The important distinction is that our optimality gap bounds vanish as $\varepsilon$ becomes arbitrarily small, which setting corresponds to the standard stochastic optimization problem with decision-agnostic distributions. Moreover, the $L_z\varepsilon$-near-optimality dimension is always less than or equal to the $(L_\theta + L_z\varepsilon)$-near-optimality dimension. Another aspect to highlight in \Cref{thm:full-feedback} is that when $d=0$, we show that the optimality gap decays at an exponentially fast rate, which was not discovered by~\cite{jagadeesan22a}.

Next, we state our performance guarantee for the data-driven setting where we receive a finite number of samples from $\cD(\theta)$ after deploying decision $\theta$. The optimality gap bounds on \Cref{algorithm1} hide dependence on the dimension $D$, the number of samples received after each decision deployment, the Rademacher complexity of learning the performative risk. 
\begin{theorem}[Data-Driven Case, Informal]\label{thm-informal:data-driven}
Assume the same conditions on the distribution map and the loss function. Let $d$ denote the $L_z\varepsilon$-near-optimality dimension. For the data-driven setting, \Cref{algorithm1} after $T$ decision deployments for a sufficiently large $T$ finds a solution $\theta$  such that with high probability,
\begin{align*}
\PR(\theta) - \PR(\thetaPO)
&=\begin{cases}
{L_z\varepsilon}\cdot {2^{O\left(-\frac{T}{\log T}\right)}},& \text{low-noise regime with $d=0$},\\
\tilde O\left(L_z \varepsilon\cdot T^{-\frac{1}{d}}\right), &\text{low-noise regime with $d>0$},\\
\tilde O\left({T}^{-\frac{1}{2}} + (L_z \varepsilon)^{\frac{d}{d+2}} \cdot T^{-\frac{1}{d+2}}\right),&\text{high-noise regime.}
\end{cases}
\end{align*}
\end{theorem}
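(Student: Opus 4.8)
The plan is to analyze \Cref{algorithm1} as a performative adaptation of \StroquOOL in which the noisy objective evaluations arise from finite samples of $\cD(\theta)$, and the smoothness input is replaced by a performative confidence bound. I treat $T$ as the number of deployments and write $m$ for the samples returned per deployment; a depth-$h$ cell is scored at a repetition level $p$, meaning its representative decision is deployed $2^p$ times for $2^p m$ samples in total. The crux of the reduction is that, by $\varepsilon$-sensitivity and $L_z$-Lipschitzness, samples drawn from $\cD(\theta)$ estimate $\PR(\theta')$ for any $\theta'$ in the same cell with a controllable bias, so the generic smoothness of \StroquOOL is supplied here by the constant $L_z\varepsilon$ together with the cell diameter.

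First I would establish a uniform deviation bound. On a high-probability event $\mathcal{E}$, for every deployed $\theta$ and every $\theta'$ in its cell, the empirical estimate $\PRhat_\theta(\theta')$ satisfies
\[
\left| \PRhat_\theta(\theta') - \PR(\theta') \right| \le \underbrace{L_z\varepsilon\,\|\theta-\theta'\|}_{\text{performative bias}} + \underbrace{c\,\frac{\mathfrak{C}^*(f) + \sqrt{\log(1/\delta)}}{\sqrt{2^p m}}}_{\text{statistical error}},
\]
where the first term is the performative confidence bound and the second follows from a Rademacher-complexity concentration argument for learning $\PR$ from samples. I would then fix $\mathcal{E}$ via a union bound over all cells and repetition levels opened during the run, arranging the per-event budget so that the total failure probability stays below $\delta$.

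Conditioned on $\mathcal{E}$, the next step is the optimistic invariant: the cell containing $\thetaPO$ is opened at every depth the budget reaches. This reuses the robust \StroquOOL selection argument — at each depth the optimal cell's score dominates its competitors — now driven by the two-part width above. Accounting for the harmonic budget allocation of \StroquOOL, in which $T$ is split across depths with weights proportional to $1/h$, gives the maximal reachable depth $\hmax$ and is the source of the $\log T$ factor. Translating depth into optimality gap through the diameter decay $\rho^{h}$ and the Lipschitz constant $L_z\varepsilon$ yields $\PR(\theta)-\PR(\thetaPO)\lesssim L_z\varepsilon\,\rho^{\hmax}$, while the $L_z\varepsilon$-near-optimality dimension $d$ bounds the number of near-optimal depth-$h$ cells by $\rho^{-dh}$. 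When $d=0$ this count is constant, so $\hmax\sim T/\log T$ and the gap decays like $L_z\varepsilon\cdot 2^{O(-T/\log T)}$; when $d>0$ the budget supports $\hmax\sim\tfrac{1}{d}\log_{1/\rho}T$, giving $\tilde O(L_z\varepsilon\,T^{-1/d})$. In the low-noise regime the statistical term in the display is dominated by the performative bias at every relevant scale, so these full-feedback rates from \Cref{thm-informal:full-feedback} carry over.

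The main obstacle, and the origin of the high-noise regime, is balancing the performative bias $L_z\varepsilon\,\rho^{h}$ against the statistical error at the repetition level the budget permits. Here I would optimize the trade-off explicitly: matching bias and noise requires roughly $(L_z\varepsilon)^{-2}\rho^{-2h}$ samples per cell, and since there are $\rho^{-dh}$ near-optimal cells to sustain, the total-deployment constraint reads $T \gtrsim (L_z\varepsilon)^{-2}\rho^{-(d+2)h}$; solving for the attainable depth gives $\rho^{\hmax}\sim (L_z\varepsilon)^{-2/(d+2)}T^{-1/(d+2)}$, and multiplying by $L_z\varepsilon$ produces the $(L_z\varepsilon)^{d/(d+2)}T^{-1/(d+2)}$ term, with the irreducible $\tilde O(T^{-1/2})$ floor coming from estimating at the coarsest useful scale. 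I expect the most delicate technical point to be showing that the robust \StroquOOL selection provably preserves the optimal cell under this adaptive, depth-dependent sampling, while keeping the union bound over all opened cells and repetition levels tight enough that the failure probability does not erode the stated high-probability guarantee.
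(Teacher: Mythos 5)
Your proposal follows essentially the same route as the paper's proof: a Rademacher-complexity clean event combined with the $L_z\varepsilon$-sensitivity bias bound (the paper's \Cref{lemma:clean2} and \Cref{lemma:PR-approximation}), an induction showing the cell containing $\thetaPO$ survives at every reachable depth (\Cref{lemma:depth}), a regret decomposition valid for every repetition level $p$ (\Cref{lemma:regret-bound}), and the bias--variance balancing $B/\sqrt{2^{\tilde p}}=\nu\rho^{\tilde h}$ that defines the low-/high-noise regimes and yields, via the Lambert $W$ function, exactly your $(L_z\varepsilon)^{d/(d+2)}T^{-1/(d+2)}$ rate with the $\tilde O(T^{-1/2})$ floor from the cross-validation estimates. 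The only point you leave implicit --- that the algorithm cannot know the balancing level $\tilde p$ and so the bound must hold simultaneously for all $p$, with the final cross-validation phase selecting among the candidates $\theta_T(p)$ --- is handled in the paper by proving \Cref{lemma:regret-bound} uniformly in $p$, but this is a formalization of, not a departure from, your argument.
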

The \emph{low-noise} and \emph{high-noise} regimes are defined in \Cref{sec:data-driven}. In particular, for the high-noise regime, the bound is $\tilde O(T^{-1/2} + L_z \varepsilon T^{-{1}/{d}})$ which incurs the additional term $T^{-1/2}$ due to errors in estimating the performative risk through noisy feedback. In particular, the case of $\varepsilon=0$ is under the high-noise regime, in which case the bound reduces to $\tilde O(T^{-1/2})$.

Lastly, we test the numerical performance of our framework on instances in which the associated performative risk is non-convex. The experimental results show that our algorithms outperform the existing methods that include not only the sequential zooming algorithm of~\cite{jagadeesan22a} but also \SOO~\citep{munos-soo}, \StoSOO~\citep{stosoo}, \SequOOL, and \StroquOOL~\citep{bartlett19a} applied directly to the performative risk as a black-box function without utilizing the performative feedback.

\section{Related Work}

This section summarizes prior work on performative prediction and optimistic optimization.

\subsection{Performative Prediction and Performative Risk Minimization}

Previous work on performative prediction has mainly focused on first-order and zeroth-order gradient-based optimization methods~\citep{perdomo20a,perdomo20b,Drusvyatskiy,brown22a,miller21a,izzo21a,maheshwari22a, li22c,Ray22,dong23b,izzo22a}. Convergence of these gradient-based methods to performatively stable solutions is studied, and \cite{miller21a,izzo21a} discovered some convexity conditions under which some gradient-based methods converge to a performative optimal solution. Although a performatively stable solution provides a good proxy for a performatively optimal solution, its performance can be arbitrarily worse than the optimum. Moreover, in general, the performative risk is non-convex and does not satisfy the convexity conditions. For the general case, \cite{jagadeesan22a} developed a variant of \Zooming for minimizing the performative risk. \cite{mofakhami23a} studied \ref{RRM} for training non-convex neural networks, but they considered a different setting in terms of defining the $\varepsilon$-sensitivity of the distribution map. For a comprehensive survey on performative prediction, we refer the reader to \cite{hardt2023performative} and references therein.

One of the most closely related application domains is \emph{strategic classification}~\citep{Dalvi,brueckner12a,hardt-strategic}, which models a game between an institution deploying a classifier and an agent who adapts its features to increase its likelihood of being positively labeled. Recent work in this area includes~\citep{10.1145/3219166.3219193, NEURIPS2020_ae87a54e,Milli, bechavod21a,NEURIPS2021_812214fb}.

\subsection{Optimistic Optimization}

\emph{Black-box optimization} and \emph{continuum-armed bandits} aim to optimize an objective function under minimal knowledge about the function. Some early work provides algorithms that assume some weak or local smoothness conditions around a global optimal solution, such as~\Zooming~\citep{Kleinberg08}, \HOO~\citep{x-armed-bandits}, \DOO~\citep{munos-soo},~\HCT~\citep{azar14}. Here, \HOO, \DOO, and \HCT are optimistic optimization-based methods, which means that these algorithms use some optimistic estimates of the black-box objective function when running a global search of the solution space. However, \Zooming, \HOO, \DOO, and \HCT require the knowledge of the local smoothness parameter. Then \cite{munos-soo} presented \SOO that works even when the local smoothness parameter is unknown. \cite{stosoo} developed \StoSOO which extends \SOO for the case of stochastic function evaluation, but its convergence guarantee holds for the limited case of the near-optimality dimension being 0. \POO due to \cite{Grill15} and \GPO, \PCT developed by~\cite{xuedong19a} work for more general families of objective functions. Later, \cite{bartlett19a} presented \SequOOL for the deterministic function evaluation case and \StroquOOL for the stochastic case, which work for general families of functions and exhibit state-of-the-art numerical performance. 

Recently, \cite{li2023optimumstatistical} provided \VHCT, which does not require the budget on the number of decision deployments beforehand but needs the knowledge of the smoothness parameter. There exist more algorithms that work under more specific assumptions on smoothness. For example, \Direct~\citep{jones93} and methods for continuum-armed bandits due to~\citep{NIPS2011_7634ea65,10.1007/978-3-642-24412-4_14,pmlr-v70-malherbe17a} can take Lipschitz-continuous objective functions.

\section{Preliminaries: Optimization with Performative Feedback}\label{sec:setting}

In this section, we introduce the basics of performative prediction. Then we explain how to make use of performative feedback for performative risk minimization as established by~\cite{jagadeesan22a}. In addition, we elaborate briefly on some limitations of the performative confidence bound-based zooming algorithm by~\cite{jagadeesan22a}.

As mentioned in the introduction, the $\varepsilon$-sensitivity measures how much the distribution $\cD(\theta)$ can change with changes in decision $\theta$. Formally, we assume that the distribution map satisfies the following. Recall that $\Theta$ denotes the decision domain.
\begin{assumption}[$\varepsilon$-sensitivity]\label{assumption:wasserstein}
A distribution map $\cD(\cdot)$ is \emph{$\varepsilon$-sensitive} with $\alpha>0$ if for any $\theta,\theta'\in \Theta$  we have
$$\cW(\cD(\theta),\cD(\theta'))\leq \varepsilon\|\theta-\theta'\|^\alpha,$$ where $\cW$ denotes the 1-Wasserstein distance.
\end{assumption}
The original definition due to~\cite{perdomo20a} considers the case $\alpha=1$, while our framework allows arbitrary positive values of $\alpha$. We remark that our framework is parameter-free in that we do not require knowledge of the parameters $\varepsilon$ and $\alpha$ in advance. In theory, as we build upon optimistic optimization methods, we may take any \emph{semi-metric} $\ell$, satisfying $\ell(\theta,\theta')=\ell(\theta',\theta)$ and $\ell(\theta,\theta')=0$ if and only if $\theta=\theta'$ for $\theta,\theta'\in\Theta$. That being said, we may run our algorithms regardless of the sensitivity structure of the distribution map, but we derive the theoretical performance guarantees based on the sensitivity structure given in \Cref{assumption:wasserstein}.

Next, we define the notion of performative feedback used to infer the distribution $\cD(\theta)$ as well as the performative risk $\PR(\theta)$ after deploying decision $\theta$.
\begin{assumption}[performative feedback]\label{assumption:performative}
Deploying decision $\theta$ once, we receive feedback about the distribution as follows.
\begin{itemize}
    \item (Full-Feedback Setting) distribution $\cD(\theta)$ itself.
    \item (Data-Driven Setting)  $m_0$ i.i.d. samples $z_\theta^{(1)},\ldots, z_\theta^{(m_0)}$ from distribution $\cD(\theta)$.
\end{itemize}
\end{assumption}
For the data-driven setting, we may deploy the same decision $\theta$ multiple times, say $n$. Then we may construct the empirical distribution $\Dhat(\theta)$ with $nm_0$ i.i.d. samples from $\cD(\theta)$. Using the performative feedback for $\theta$, which provides $\cD(\theta)$ or $\Dhat(\theta)$, we may compute the performative risk $\PR(\theta)$ or its empirical estimate
$$\PRhat(\theta)=\mathbb{E}_{z\sim \Dhat(\theta)}\left[f(\theta, z)\right].$$
Moreover, based on the performative feedback for $\theta$, we may infer the performative risk of other decisions $\theta'$. To be specific, we use the notion of \emph{decoupled performative risk}~\citep{perdomo20a} defined as follows. 
$$\DPR(\theta,\theta')=\mathbb{E}_{z\sim \cD(\theta)}\left[f(\theta', z)\right]\quad\text{and}\quad\DPRhat(\theta,\theta')=\mathbb{E}_{z\sim \Dhat(\theta)}\left[f(\theta', z)\right]$$
for any $\theta,\theta'\in\Theta$ where $\DPR(\theta,\theta')$ is the decoupled performative risk of decision $\theta'$ under distribution $\cD(\theta)$ and $\DPRhat(\theta,\theta')$ is its empirical estimate. The decoupled performative risk offers a good approximation of the performative risk, which we elaborate on below. 

\begin{assumption}\label{assumption:lipschitz}
There is some $L_z>0$ such that $f(\theta,\cdot)$ for any fixed $\theta\in\Theta$ is $L_z$-Lipschitz continuous.
\end{assumption}
Under Assumptions~\ref{assumption:wasserstein} and~\ref{assumption:lipschitz}, the Kantorovich-Rubinstein duality theorem~\citep{KR:58,villani2008optimal} implies the following statement~\citep{jagadeesan22a}.
\begin{lemma}\label{lemma:lipschitz}
Under Assumptions~\ref{assumption:wasserstein} and \ref{assumption:lipschitz}, for $\theta,\theta'\in\Theta$,
$$\left|\PR(\theta') - \DPR(\theta,\theta')\right|\leq L_z\varepsilon\|\theta-\theta'\|^\alpha.$$ 
\end{lemma}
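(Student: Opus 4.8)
The plan is to establish the bound by relating the difference $\PR(\theta') - \DPR(\theta,\theta')$ directly to the 1-Wasserstein distance between the two relevant distributions, and then invoke the sensitivity assumption. First I would write out the two quantities explicitly using their definitions. By definition, $\PR(\theta') = \mathbb{E}_{z\sim\cD(\theta')}[f(\theta',z)]$ and $\DPR(\theta,\theta') = \mathbb{E}_{z\sim\cD(\theta)}[f(\theta',z)]$. The crucial observation is that these two expectations involve the \emph{same} integrand, namely the function $z\mapsto f(\theta',z)$, but integrate it against two different distributions, $\cD(\theta')$ and $\cD(\theta)$. This is exactly the structure for which the Kantorovich--Rubinstein duality is designed.

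The key steps, in order, are as follows. Fix $\theta,\theta'\in\Theta$ and define $g(z):=f(\theta',z)$; by \Cref{assumption:lipschitz}, $g$ is $L_z$-Lipschitz continuous. Then
\begin{equation*}
\left|\PR(\theta') - \DPR(\theta,\theta')\right| = \left|\mathbb{E}_{z\sim\cD(\theta')}[g(z)] - \mathbb{E}_{z\sim\cD(\theta)}[g(z)]\right|.
\end{equation*}
Next I would apply the Kantorovich--Rubinstein duality theorem, which states that for any two probability distributions $\mu,\nu$,
\begin{equation*}
\cW(\mu,\nu) = \sup_{h:\,\mathrm{Lip}(h)\leq 1}\left|\mathbb{E}_{z\sim\mu}[h(z)] - \mathbb{E}_{z\sim\nu}[h(z)]\right|.
\end{equation*}
Since $g/L_z$ is $1$-Lipschitz, it is one of the admissible test functions $h$ in this supremum, so the right-hand side above is at most $L_z\,\cW(\cD(\theta),\cD(\theta'))$. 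Finally, applying \Cref{assumption:wasserstein} gives $\cW(\cD(\theta),\cD(\theta'))\leq \varepsilon\|\theta-\theta'\|^\alpha$, and chaining these inequalities yields the claimed bound $\left|\PR(\theta') - \DPR(\theta,\theta')\right|\leq L_z\varepsilon\|\theta-\theta'\|^\alpha$.

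I do not anticipate a substantive obstacle here, since the result is essentially an immediate consequence of stringing together the two assumptions via duality; the main subtlety worth noting is the scaling argument that normalizes the $L_z$-Lipschitz integrand $g$ to the $1$-Lipschitz function $g/L_z$ so that it fits the duality formula, after which the factor $L_z$ is pulled back out. One should also confirm that the definition of $\cW$ used in \Cref{assumption:wasserstein} (the 1-Wasserstein distance) matches the one appearing in the duality statement, which it does. No integrability issues arise beyond what the Lipschitz condition already guarantees, assuming the distributions have finite first moments so that the expectations are well defined.
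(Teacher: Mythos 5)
Your proof is correct and follows exactly the route the paper itself indicates: the paper derives \Cref{lemma:lipschitz} by applying the Kantorovich--Rubinstein duality theorem to the two expectations of the common $L_z$-Lipschitz integrand $z\mapsto f(\theta',z)$ and then invoking the $\varepsilon$-sensitivity bound from \Cref{assumption:wasserstein}. Your normalization of $g$ to the $1$-Lipschitz function $g/L_z$ is the standard way to make that duality step explicit, so there is nothing to add.
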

Therefore, as long as decisions $\theta$ and $\theta'$ are close, the decoupled performative risk $\DPR(\theta,\theta')$ deduced based on the performative feedback $\cD(\theta)$ for $\theta$ would be a good proxy for the performative risk $\PR(\theta')$ of decision $\theta'$. By Lemma~\ref{lemma:lipschitz},
$$ \DPR(\theta,\theta') - L_z\varepsilon\|\theta-\theta'\|^\alpha\leq \PR(\theta')\leq \DPR(\theta,\theta') +L_z\varepsilon\|\theta-\theta'\|^\alpha$$
is a valid confidence interval for the performative risk of $\theta'\in\Theta$. 
Note that the confidence interval is tighter than the interval
$$ \PR(\theta) - L_\theta\|\theta-\theta'\|-L_z\varepsilon\|\theta-\theta'\|^\alpha\leq \PR(\theta')\leq \PR(\theta) +L_\theta\|\theta-\theta'\|+L_z\varepsilon\|\theta-\theta'\|^\alpha$$
which holds under the assumption that  $f(\cdot, z)$ is $L_\theta$-Lipschitz continuous for any fixed $z\in \cZ$ where $\cZ$ denotes the domain of the stochastic parameter $z$~\citep{jagadeesan22a}. Here, the latter interval can be deduced by a black-box evaluation of $\PR(\theta)$ while we derived the former using the performative feedback. 

When we have a set $\cS$ of multiple decisions $\theta$ with known $\cD(\theta)$, then for any $\theta'\in \Theta$,
$$\max_{\theta\in\cS}\left\{\DPR(\theta,\theta') - L_z\varepsilon\|\theta-\theta'\|^\alpha\right\}\leq \PR(\theta')\leq \min_{\theta\in\cS}\left\{\DPR(\theta,\theta') +L_z\varepsilon\|\theta-\theta'\|^\alpha\right\}$$
is also valid, and we refer to the bounds as \emph{performative confidence bounds}. The zooming algorithm of \cite{jagadeesan22a} updates the performative confidence bounds whenever a new decision is deployed, based on which suboptimal decisions are sequentially deleted. This approach, however, has two key limitations. First, we need to know $L_z$ and $\varepsilon$ to derive performative confidence bounds. Second, the computational complexity of computing the performative confidence bounds associated with $\cS$ is $O(|\cS|\cdot|\Theta|)$, which is an expensive per-time cost. Later, our experimental results reveal that the algorithm turns out to be not numerically efficient. 

\section{Optimistic Optimization-Based Parameter-Free Framework}\label{sec:full-feedback}

Motivated by the challenges of the existing method, our goal is to design an efficient parameter-free framework for performative risk minimization. For simple presentation, we assume that the loss function and the decision domain. 

\begin{assumption}[bounded domain and objective]\label{assumption:bounded}
$\Theta\subseteq[0,1]^{\Dtheta}$ where $\Dtheta$ is the ambient dimension. Moreover, $f(\theta,z)\in[0,1]$ for all $\theta\in \Theta$ and $z\in\cZ$.
\end{assumption}

Let us explain the basic setup of our optimistic optimization-based framework as follows. We assume that a \emph{hierarchical partitioning}~\citep{x-armed-bandits,munos-soo} of the decision domain is given. Basically, a hierarchical partitioning $\cP$ of $\Theta$ is given by $\{\cP_{h,i}: 0\leq h\leq \hmax, i\in[I_h]\}$ where $\hmax$ is the deepest depth, $I_h$ is the width at depth $h$ with $I_1=1$, $\{\cP_{h,i}:i\in[I_h]\}$ is a partition of $\Theta$, and $\cP_{h,i}$ is partitioned into $\{\cP_{h+1,j}:j\in J\}$ for some $J\subseteq [I_{h+1}]$. Throughout the paper, we refer to $\cP_{h,i}$ as a cell of depth $h$. The hierarchical partitioning naturally corresponds to a tree structure. When a cell $\cP_{h,i}$ is partitioned into $\{\cP_{h+1,j}:j\in J\}$, $\cP_{h,i}$ is the parent cell of its child cells $\cP_{h+1,j}$ for $j\in J$. Moreover, we assume that the partition at each depth level consists of cells of uniform size. 
\begin{assumption}[uniform partition]\label{assumption:partition}
$\sup\{\|\theta-\theta'\|:\theta,\theta'\in\cP_{h,i}\}\leq \sqrt{\Dtheta} 2^{-h}$ for $0\leq h\leq \hmax$ and $1\leq i\leq I_h$. Moreover, $I_h\leq 2^{\Dtheta h}$.
\end{assumption}
There exists a hierarchical partitioning that satisfies~\Cref{assumption:partition} as we may take $2^{\Dtheta}$ subsets of box $[0,1]^{\Dtheta}$ by dividing each coordinate direction equally and repeat the process for each subset.

Our framework adopts the tree-search mechanism as done for many optimistic optimization algorithms such as \SOO, \StoSOO, \SequOOL, and \StroquOOL. These algorithms select an arbitrary decision $\theta_{h,i}$ for each cell $\cP_{h,i}$ as its representative in advance, and evaluating cell $\cP_{h,i}$ means deploying decision $\theta_{h,i}$. If $f(\cdot, z)$ is $L_\theta$-Lipschitz continuous for any $z$, then Assumption~\ref{assumption:partition} implies that for any $\theta_{h,i}\in \cP_{h,i}$,
$$\PR(\theta_{h,i})- \inf_{\theta\in \cP_{h,i}} \PR(\theta)\leq L_\theta \sqrt{D} 2^{-h} + L_z\varepsilon D^{\alpha/2} 2^{-\alpha h}.$$
However, this bound may be too weak for our setting because the term $L_\theta \sqrt{D} 2^{-h}$ can be much larger than the other term $L_z\varepsilon D^{\alpha/2} 2^{-\alpha h}$ when the sensitivity parameter $\varepsilon$ is small. In contrast, based on performative feedback, we use a specific rule for choosing a representative decision $\theta_{h,i}$ given by
$$\theta_{h,i}\in \argmin_{\theta\in \cP_{h,i}} \DPR(\theta_{h-1,j},\theta).$$
Here, $\theta_{h-1,j}$ is the representative of the parent cell $\cP_{h-1,j}$ of depth $h-1$ containing $\cP_{h,i}$. Based on the performative feedback about decision $\theta_{h-1,j}$, we may compute the decoupled performative risk $\DPR(\theta_{h-1,j},\theta)$. Note that the procedure of choosing $\theta_{h,i}$ is much cheaper than computing performative confidence bounds because the former requires evaluating decisions in a local cell $\cP_{h,i}$ whereas the latter considers the entire domain $\Theta$.

Explaining the important components of our framework, we present \Cref{algorithm0} for the full-feedback setting. $\overline{\log}T$ denotes the $T$-th harmonic number, that is, $\overline{\log}T=\sum_{t=1}^T 1/t$. We use notation $[a:b]$ to denote the set $\{a, a+1,\ldots, b\}$ for integers $a,b$ with $a<b$.
\begin{algorithm}[tb]
   \caption{Deterministic Optimistic Optimization with Performative Feedback (DOOP)}\label{algorithm0}
\begin{algorithmic}
   \STATE {\bfseries Input:} test budget $T$, hierarchical partitioning $\cP$,
   $\hmax = \left\lfloor{T}/2^{\Dtheta}\overline{\log}T\right\rfloor$
   \STATE Set $\cL_0\leftarrow \{\cP_{0,1}\}$ and initialize $\cL_1\leftarrow \emptyset$
   \STATE Take a solution $\theta_{0,1}\in \cP_{0,1}$
   \STATE {\bfseries Run} $\mathrm{Open}(\cP_{0,1})$ 
   \FOR{$h=1$ {\bfseries to} $\hmax$}
   \STATE Initialize $\cL_{h+1}\leftarrow \emptyset$
    \STATE Take $\lfloor \hmax/h\rfloor$ cells with the $\lfloor \hmax/h\rfloor$ smallest values in $\left\{\PR(\theta_{h,i}):\cP_{h,i}\in\cL_h\right\}$
    \FOR{each $\cP_{h,i}$ of the $\lfloor \hmax/h\rfloor$ cells}
    \STATE {\bfseries Run} $\mathrm{Open}(\cP_{h,i})$
    \ENDFOR
   \ENDFOR
   \STATE Take $(h,i)\in \argmin_{(h,i)}\left\{\PR(\theta_{h,i}):h\in[0:\hmax+1],\cP_{h,i}\in\cL_h\right\}$
   \STATE {\bfseries Return} $\theta_T\leftarrow\theta_{h,i}$
\end{algorithmic}
\end{algorithm}
\begin{algorithm}[tb]
\renewcommand\thealgorithm{}
\floatname{algorithm}{Subroutine}
   \caption{$\mathrm{Open}(\cP_{h,i})$}
\begin{algorithmic}
\STATE {\bfseries Input:} cell $\cP_{h,i}$
\FOR{each child cell $\cP_{h+1,j}$ of $\cP_{h,i}$}
\STATE Take a solution $\theta_{h+1,j}\in \argmin_{\theta\in \cP_{h+1,j}}\DPR(\theta_{h,i},\theta)$ and deploy it  
\STATE Receive $\cD(\theta_{h+1,j})$ to compute $\DPR(\theta_{h+1,j},\theta)$ for $\theta\in \cP_{h+1,j}$
\STATE Update $\cL_{h+1}\leftarrow \cL_{h+1}\cup \{\cP_{h+1,j}\}$
\ENDFOR
\end{algorithmic}
\end{algorithm}
We adopt \SequOOL by~\cite{bartlett19a} as the backbone of \Cref{algorithm0}. As \SequOOL, our algorithm explores the depth sequentially by testing multiple cells at the same depth level before going down to the next level. As going deeper, fewer cells are tested, thus focusing on a narrower area. This can be viewed as an exploration-exploitation procedure. Moreover, opening a cell $\cP_{h,i}$ of depth $h$ means considering its child cells $\{\cP_{h+1,j}:j\in J\}$ at $h+1$ by deploying their representative decisions $\theta_{h+1,j}$. In \Cref{algorithm0}, $\cL_h$ denotes the set of cells $\cP_{h,i}$ of depth $h$ whose representative decision $\theta_{h,i}$ has been deployed. Then $\cL_0,\ldots, \cL_{\hmax +1}$ naturally form a tree whose vertices correspond to cells.

To analyze the performance of~\Cref{algorithm0}, we use the notion of near-optimality dimension, as mentioned in the introduction. Its definition has been refined, and we adopt the version considered by~\cite{Grill15,bartlett19a}, that is, the near-optimality dimension associated with a given hierarchical partitioning.
\begin{definition}[near-optimality dimension]\label{def:near-optimality}
For any $\nu>0$, $C\geq 1$, and $\rho\in(0,1)$, the \emph{$(\nu, \rho, C)$-near-optimality dimension}, denoted $d(\nu, \rho, C)$, of $f$ with respect to the hierarchical partitioning $\cP$ is defined as
$$d(\nu, \rho, C)=\inf\left\{d\in\mathbb{R}_+: \cN_h(6\nu\rho^h)\leq C\rho^{-dh}\ \forall h\geq 0\right\}$$
where $\cN_h(\epsilon)$ is the number of cells $\cP_{h,i}$ of depth $h$ such that $\inf_{\theta\in \cP_{h,i}}\PR(\theta)\leq \PR(\thetaPO)+\epsilon$.
\end{definition}
In particular, we will use the $((2\sqrt{\Dtheta})^\alpha L_z\varepsilon, 2^{-\alpha}, 1)$-near-optimality dimension. It gets large as the sensitivity parameter $\varepsilon$ increases. Note that by~\Cref{assumption:partition}, the number of cells of depth $h$ is at most $2^{\Dtheta h}$. This gives rise to a global upper bound on $d(\nu, 2^{-\alpha}, 1)$ that holds for any $\nu>0$, that is, $d(\nu, 2^{-\alpha}, 1)\leq \Dtheta/\alpha$. Hence, when $\varepsilon$ is small and $\PR(\cdot)$ has sufficient curvature around the performative-optimal solution $\thetaPO$, the $((2\sqrt{\Dtheta})^\alpha L_z\varepsilon, 2^{-\alpha}, 1)$-near-optimality dimension is supposed to be much smaller than $\Dtheta/\alpha$. When the ambient dimension $D$ is fixed, one may regard the factor $(2\sqrt{\Dtheta})^\alpha$ as a fixed constant and hide it by replacing $\cN_h(6\nu\rho^h)$ with $\cN_h(6(2\sqrt{\Dtheta})^\alpha\nu\rho^h)$ in the definition of $d(\nu, \rho, C)$.

The following lemma is the key to analyzing the performance of \Cref{algorithm0}. Following~\cite{bartlett19a}, we define $\bot_{h}$ as the depth of the deepest cell containing~$\thetaPO$ opened until \Cref{algorithm0} finishes opening cells of depth $h$.
\begin{lemma}\label{lemma:regret-bound0}
Let $d$ denote the $((2\sqrt{\Dtheta})^\alpha L_z\varepsilon, 2^{-\alpha},1)$-near-optimality dimension. Then $\theta_T$ returned by \Cref{algorithm0} satisfies the following bound.
$$\PR(\theta_T)- \PR(\thetaPO)\leq 2(2\sqrt{\Dtheta})^\alpha L_z\varepsilon2^{-\alpha (\bot_{\hmax}+1)}.$$
\end{lemma}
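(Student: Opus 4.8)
The plan is to adapt the \SequOOL-style analysis of~\cite{bartlett19a} to our performative setting: locate the deepest opened cell that contains $\thetaPO$, argue that the child of that cell containing $\thetaPO$ was necessarily deployed and hence competes in the final $\argmin$, and then control the optimality gap of that child's representative using the selection rule $\theta_{h,i}\in\argmin_{\theta\in\cP_{h,i}}\DPR(\theta_{h-1,j},\theta)$ together with \Cref{lemma:lipschitz}. The essential difference from plain \SequOOL is that cell quality is certified through the decoupled performative risk rather than a direct Lipschitz bound on $\PR$, so \Cref{lemma:lipschitz} plays the role that the smoothness condition plays in the original argument.

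First I would fix $\cP_{\bot_{\hmax},i^*}$ to be the deepest cell containing $\thetaPO$ that \Cref{algorithm0} opens; such a cell exists by the definition of $\bot_{\hmax}$ (note $\cP_{0,1}=\Theta$ is always opened, so $\bot_{\hmax}\geq 0$, and since the loop stops at depth $\hmax$ we have $\bot_{\hmax}\leq\hmax$). Because this cell is opened, the subroutine $\mathrm{Open}$ deploys the representative of every child cell, in particular of the child $\cP_{\bot_{\hmax}+1,j^*}$ containing $\thetaPO$. Write $\theta^\star=\theta_{\bot_{\hmax}+1,j^*}$ and let $\theta^p=\theta_{\bot_{\hmax},i^*}$ be the parent's representative. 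Then $\cP_{\bot_{\hmax}+1,j^*}\in\cL_{\bot_{\hmax}+1}$ with $\bot_{\hmax}+1\in[0:\hmax+1]$, so $\theta^\star$ is a candidate in the final $\argmin$, giving $\PR(\theta_T)\leq \PR(\theta^\star)$.

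Next I would bound $\PR(\theta^\star)-\PR(\thetaPO)$. By the selection rule, $\theta^\star$ minimizes $\DPR(\theta^p,\cdot)$ over $\cP_{\bot_{\hmax}+1,j^*}$, and since $\thetaPO$ lies in the same child cell, $\DPR(\theta^p,\theta^\star)\leq \DPR(\theta^p,\thetaPO)$. Applying \Cref{lemma:lipschitz} once to pass from $\DPR(\theta^p,\theta^\star)$ up to $\PR(\theta^\star)$ and once to pass from $\DPR(\theta^p,\thetaPO)$ down to $\PR(\thetaPO)$ yields
$$\PR(\theta^\star)-\PR(\thetaPO)\leq L_z\varepsilon\left(\|\theta^p-\theta^\star\|^\alpha+\|\theta^p-\thetaPO\|^\alpha\right).$$
The three points $\theta^p,\theta^\star,\thetaPO$ all lie in the parent cell $\cP_{\bot_{\hmax},i^*}$ (the child cell is contained in it), so by \Cref{assumption:partition} each distance is at most $\sqrt{\Dtheta}\,2^{-\bot_{\hmax}}$. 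Substituting gives $\PR(\theta^\star)-\PR(\thetaPO)\leq 2L_z\varepsilon\,\Dtheta^{\alpha/2}2^{-\alpha\bot_{\hmax}}$, which equals $2(2\sqrt{\Dtheta})^\alpha L_z\varepsilon\,2^{-\alpha(\bot_{\hmax}+1)}$ after absorbing the factor $2^\alpha$ into $(2\sqrt{\Dtheta})^\alpha$; combined with $\PR(\theta_T)\leq\PR(\theta^\star)$ this is the claim.

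The only point requiring care — rather than a genuine obstacle — is the accounting of which cell's diameter controls each distance: because the parent representative $\theta^p$ need not lie in the child cell, $\|\theta^p-\theta^\star\|$ and $\|\theta^p-\thetaPO\|$ must be bounded by the diameter of the parent cell at depth $\bot_{\hmax}$, not of the child cell at depth $\bot_{\hmax}+1$. This is exactly why the exponent is $2^{-\alpha\bot_{\hmax}}$, and the resulting factor $2^\alpha$ is reconciled by writing the bound in the $\nu\rho^{\bot_{\hmax}+1}$ form with $\nu=(2\sqrt{\Dtheta})^\alpha L_z\varepsilon$ and $\rho=2^{-\alpha}$, matching the constants of \Cref{def:near-optimality} so that the $((2\sqrt{\Dtheta})^\alpha L_z\varepsilon,2^{-\alpha},1)$-near-optimality dimension can be invoked in the subsequent depth analysis.
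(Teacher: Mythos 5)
Your proof is correct and follows essentially the same route as the paper: the paper first proves a general per-cell approximation bound (its Lemma~\ref{lemma:cell-approximation0}, via Lemma~\ref{lemma:PR-approximation}) and then applies it to the depth-$(\bot_{\hmax}+1)$ cell containing $\thetaPO$, whereas you simply inline that argument with $\thetaPO$ playing the role of the cell minimizer. Your careful accounting of the parent-cell diameter at depth $\bot_{\hmax}$ (absorbing the factor $2^\alpha$ into $(2\sqrt{\Dtheta})^\alpha$) matches exactly how the paper's constants arise.
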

Note that the bound on the optimality gap scales with $L_z\varepsilon$, not $L_\theta + L_z\varepsilon$. Based on this, we prove the following theorem which provides a theoretical guarantee on the performance of \Cref{algorithm0}. As in the analysis of \SequOOL by \cite{bartlett19a}, we use the \emph{Lambert $W$ function}. The function is to describe the solution $h$ to the equation $x=h\cdot e^h$ as $h=W(x)$.
\begin{theorem}\label{thm:full-feedback}
Let $d$ denote the $((2\sqrt{\Dtheta})^\alpha L_z\varepsilon, 2^{-\alpha},1)$-near-optimality dimension. For the full-feedback setting, \Cref{algorithm0} after $T$ decision deployments finds a solution $\theta$ with
$$\PR(\theta) - \PR(\thetaPO) \leq \begin{cases}
2(2\sqrt{\Dtheta})^\alpha L_z\varepsilon2^{-\alpha \hmax},& \text{if $d=0$},\\
2(2\sqrt{\Dtheta})^\alpha L_z\varepsilon e^{-(1/d)W\left(\hmax \alpha d\log 2\right)}, &\text{if $d>0$}
\end{cases}$$
where $\hmax = \left\lfloor T/2^D\overline{\log}T\right\rfloor$. Moreover, if $d>0$ and $\hmax \alpha d \log 2\geq e$, then $\theta$ satisfies
$$\PR(\theta) - \PR(\thetaPO) \leq 2(2\sqrt{\Dtheta})^\alpha L_z\varepsilon\left(\frac{\hmax \alpha d \log 2}{\log(\hmax \alpha d \log 2)}\right)^{-\frac{1}{d}}.$$
\end{theorem}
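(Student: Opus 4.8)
The plan is to treat Lemma~\ref{lemma:regret-bound0} as a black box and reduce the whole theorem to a lower bound on the depth $\bot_{\hmax}$ to which the cell containing $\thetaPO$ is opened. Writing $\nu := (2\sqrt{\Dtheta})^\alpha L_z\varepsilon$ and $\rho := 2^{-\alpha}$, Lemma~\ref{lemma:regret-bound0} reads $\PR(\theta_T)-\PR(\thetaPO)\leq 2\nu\rho^{\bot_{\hmax}+1}$, so every case will follow once $\bot_{\hmax}$ is bounded below by the appropriate function of $\hmax$ and $d$. I would first record the budget accounting: at depth $h$ the algorithm opens $\lfloor\hmax/h\rfloor$ cells, each deploying at most $2^{\Dtheta}$ child representatives, so the total number of deployments is at most $2^{\Dtheta}\sum_{h=1}^{\hmax}\lfloor\hmax/h\rfloor\leq 2^{\Dtheta}\hmax\,\overline{\log}\hmax\leq T$, which justifies that the stated $\hmax$ is admissible.

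The heart of the argument is an induction on depth showing that the cell $\cP_{h,i^\star}$ containing $\thetaPO$ is opened whenever $\hmax/h\geq\rho^{-dh}$ holds. Two ingredients drive the step. First, the representative rule $\theta_{h,i}\in\argmin_{\theta\in\cP_{h,i}}\DPR(\theta_{h-1,j},\theta)$, combined with Lemma~\ref{lemma:lipschitz} and the diameter bound of Assumption~\ref{assumption:partition}, yields the quality estimate
\[
\PR(\theta_{h,i})\leq\inf_{\theta\in\cP_{h,i}}\PR(\theta)+2\nu\rho^{h},
\]
where the factor $2$ and the constant $(2\sqrt{\Dtheta})^\alpha$ inside $\nu$ appear because both $\theta_{h,i}$ and the near-minimizer of $\PR$ over $\cP_{h,i}$ are compared through the parent representative $\theta_{h-1,j}$, which lives in the parent cell of diameter $\sqrt{\Dtheta}2^{-(h-1)}$. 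Second, applied to the optimal cell this gives $\PR(\theta_{h,i^\star})\leq\PR(\thetaPO)+2\nu\rho^{h}$, so any competing cell whose representative outranks $\theta_{h,i^\star}$ has infimal risk at most $\PR(\thetaPO)+6\nu\rho^{h}$ and is therefore counted by $\cN_h(6\nu\rho^{h})\leq\rho^{-dh}$ from Definition~\ref{def:near-optimality}. Since the algorithm opens $\lfloor\hmax/h\rfloor\geq\rho^{-dh}$ cells at this level (using $\hmax/h\geq\rho^{-dh}$ and that $\cN_h$ is integer-valued), $\cP_{h,i^\star}$ is opened, and the induction closes because its parent was opened at the previous level. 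I conclude $\bot_{\hmax}\geq h^\star$, the largest integer $h$ with $h\,2^{\alpha d h}\leq\hmax$ (the map $h\mapsto h\,2^{\alpha d h}$ is increasing, so the condition at $h^\star$ implies it at all smaller depths).

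It then remains to evaluate $h^\star$ and substitute into Lemma~\ref{lemma:regret-bound0}. When $d=0$ the condition is $\hmax/h\geq 1$, true for all $h\leq\hmax$, so $\bot_{\hmax}\geq\hmax$ and $\PR(\theta_T)-\PR(\thetaPO)\leq 2\nu\rho^{\hmax}=2(2\sqrt{\Dtheta})^\alpha L_z\varepsilon\,2^{-\alpha\hmax}$. When $d>0$, I let $\bar h$ solve $\bar h\,2^{\alpha d\bar h}=\hmax$; the substitution $u=\alpha d\bar h\log 2$ turns this into $u e^{u}=\hmax\alpha d\log 2$, whence $u=W(\hmax\alpha d\log 2)$ and $\alpha\bar h\log 2=(1/d)W(\hmax\alpha d\log 2)$. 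Since $\bot_{\hmax}\geq\lfloor\bar h\rfloor\geq\bar h-1$, the offset $+1$ in Lemma~\ref{lemma:regret-bound0} exactly cancels the floor loss, giving $\rho^{\bot_{\hmax}+1}\leq\rho^{\bar h}=2^{-\alpha\bar h}=e^{-(1/d)W(\hmax\alpha d\log 2)}$, which is the second displayed bound. Finally, the clean form follows from the standard estimate $W(x)\geq\log x-\log\log x$ for $x\geq e$: with $x=\hmax\alpha d\log 2$ this gives $e^{-(1/d)W(x)}\leq(x/\log x)^{-1/d}$, matching the last inequality.

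I expect the main obstacle to lie in the bookkeeping of the inductive step rather than in any single hard inequality. One must (i) pin down the constant in the quality estimate, tracking that the representative is chosen through the \emph{parent} cell so that the diameter $\sqrt{\Dtheta}2^{-(h-1)}$ (not $2^{-h}$) enters and produces exactly the factor $(2\sqrt{\Dtheta})^\alpha$, and (ii) argue that the number of cells outranking the optimal cell is controlled by $\cN_h$ even though $\cN_h$ counts all depth-$h$ cells while only cells in $\cL_h$ can actually block it. Keeping the floor $\lfloor\hmax/h\rfloor$ consistent with the integer count $\cN_h(6\nu\rho^h)$, and verifying the convenient cancellation between the floor loss and the $+1$ offset, are the remaining places where care is needed.
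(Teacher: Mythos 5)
Your proposal is correct and follows essentially the same route as the paper: it reduces everything to Lemma~\ref{lemma:regret-bound0}, proves the cell-quality estimate with the factor $2(2\sqrt{\Dtheta})^\alpha L_z\varepsilon 2^{-\alpha h}$ exactly as in Lemma~\ref{lemma:cell-approximation0}, runs the same induction-with-contradiction via $\cN_h(6\nu\rho^h)\leq\rho^{-dh}$ as in Lemma~\ref{lemma:depth0}, and closes with the Lambert-$W$ substitution and the Hoorfar--Hassani bound $W(x)\geq\log(x/\log x)$. The only cosmetic difference is that you reprove the depth lower bound $\bot_{\hmax}+1\geq\bar h$ with a short floor argument, where the paper cites it as Lemma~\ref{lemma:bartlett0} from \citet{bartlett19a}; your care points (i) and (ii) are exactly the places where the paper's proofs of Lemmas~\ref{lemma:cell-approximation0} and~\ref{lemma:depth0} do the corresponding work.
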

As $\hmax = \Omega(T/\log T)$ and $\hmax = O(T/\log T)$, we deduce from \Cref{thm:full-feedback} with $\alpha=1$ the optimality gap bounds in \Cref{thm-informal:full-feedback}. We provide the proof of the theorem in \Cref{appendix:full-feedback-proof}. We follow the proof outline of \cite{bartlett19a} for \SequOOL, but we need to adapt the analysis to our specific design of the procedure of opening a cell based on performative feedback.

The last remark is that the optimality gap $\PR(\theta) - \PR(\thetaPO)$ is the \emph{simple regret} whereas \cite{jagadeesan22a} studies the \emph{cumulative regret} incurred by their algorithm. Although \Cref{thm:full-feedback} characterizes an upper bound on the simple regret only, we later report our numerical results on the cumulative regret of \Cref{algorithm0}.

\section{Data-Driven Setting}\label{sec:data-driven}

For the data-driven setting, we receive a few data samples as performative feedback, which provides an estimation of the distribution. Through the data samples, we obtain the empirical distribution $\Dhat(\theta)$ after deploying decision $\theta$. Then we may compute the estimator $\DPRhat(\theta,\theta')$ of the decoupled performative risk $\DPR(\theta,\theta')$ for other decisions $\theta'$. Here, controlling the estimation error $|\DPR(\theta,\theta')-\DPRhat(\theta,\theta')|$ is crucial to achieve a better performance. To reduce the error, we evaluate a cell multiple times to obtain enough data samples from the distribution of the representative decision. Following \StroquOOL by~\cite{bartlett19a}, \Cref{algorithm1} implements this idea, extending \Cref{algorithm0} to the data-driven setting.

As \Cref{algorithm0}, \Cref{algorithm1} takes fewer cells at deeper depth levels, thereby implementing the exploration-exploitation trade-off principle. On top of this, the algorithm keeps track of the number of times each cell has been evaluated. When exploring cells at a certain depth, the algorithm distributes the evaluation budget over cells based on how many times they have been evaluated. Among the cells that have been evaluated many times, we focus on a few that have a low performative risk. For the cells that have not been evaluated many times, we distribute the evaluation budget over more cells, among which we encourage exploration. Furthermore, as in \StroquOOL, \Cref{algorithm1} has the cross-validation phase, in which we focus on cells whose representative decision has a low estimated performative risk. 

Let $\nopen_{h,i}$ denote the number of times cell $\cP_{h,i}$ is opened, and let $\ndeploy_{h,i}$ denote the number of times its representative decision $\theta_{h,i}$ is deployed. Note that if $\cP_{h+1,j}$ is a child cell of $\cP_{h,i}$, then we have $\nopen_{h,i}=\ndeploy_{h+1,j}$. Recall that $[a:b]$ denotes the set $\{a, a+1,\ldots, b\}$ for integers $a,b$ with $a<b$. For a positive integer $a$, let $[a]$ denote the set $\{1,\ldots,a\}$. Moreover, as in~\Cref{algorithm0}, $\cL_0,\ldots, \cL_{\hmax +1}$ represent the tree search structure of~\Cref{algorithm1}.
\begin{algorithm}[tb]
\renewcommand\thealgorithm{2}
   \caption{Stochastic Optimistic Optimization with Performative Feedback (SOOP)}\label{algorithm1}
\begin{algorithmic}
   \STATE {\bfseries Input:} test budget $T$, hierarchical partitioning $\cP$,
   $$\hmax = \left\lfloor\frac{T}{2^{\Dtheta+1}(\log_2T+1)^2}\right\rfloor,\quad \pmax = \lfloor\log_2\hmax\rfloor$$
   \STATE \textit{\textbf{{$\hfill\blacktriangleleft$Initialization Phase $\blacktriangleright$}}}
   \STATE Set $\cL_0\leftarrow \{\cP_{0,1}\}$ and initialize $\cL_1\leftarrow \emptyset$ 
   \STATE Take a solution $\theta_{0,1}\in \cP_{0,1}$, deploy it $\hmax$ times,  and  set $\ndeploy_{0,1}\leftarrow\hmax$
   \STATE \textit{\textbf{{$\hfill\blacktriangleleft$ Exploration Phase $\blacktriangleright$}}}
   \STATE {\bfseries Run} $\mathrm{Open}(\cP_{0,1}, \hmax)$ 

   \FOR{$h=1$ {\bfseries to} $\hmax$}
   \STATE Initialize $\cL_{h+1}\leftarrow \emptyset$
    \FOR{$p=\lfloor \log_2(\hmax/h)\rfloor$ {\bfseries down to} $0$}
    \STATE Take $\lfloor \hmax/h2^p\rfloor$ cells that correspond to the $\lfloor \hmax/h2^p\rfloor$ smallest values in $\left\{\PRhat(\theta_{h,i}):\cP_{h,i}\in\cL_h,\nopen_{h,i}=0,\ndeploy_{h,i}\geq 2^p\right\}$
    \FOR{each $\cP_{h,i}$ of the $\lfloor \hmax/h2^p\rfloor$ cells}
    \STATE {\bfseries Run} $\mathrm{Open}(\cP_{h,i},2^p)$
    \ENDFOR
    \ENDFOR
   \ENDFOR
  \STATE\textit{\textbf{{\hfill $\blacktriangleleft$  Cross-validation Phase $\blacktriangleright$}}}
   \FOR{$p=0$ {\bfseries to} $p_{\max}$} 
   \STATE Take $(h,i)\in \argmin_{(h,i)}\left\{\PRhat(\theta_{h,i}):h\in[0:\hmax+1],\cP_{h,i}\in\cL_h,\ndeploy_{h,i}\geq 2^p\right\}$
   \STATE Set $\theta_T(p)\leftarrow \theta_{h,i}$
   \STATE Deploy $h_{\max}$ times solution $\theta_T(p)$ to form $\PRhat(\theta_T(p))$
   \ENDFOR
   \STATE {\bfseries Return} $\theta_T\leftarrow\theta_T(p)$ with $p\in\argmin_{p\in[0:\pmax]}\PRhat(\theta_{T}(p))$
\end{algorithmic}
\end{algorithm}

\begin{algorithm}[tb]
\renewcommand\thealgorithm{}
\floatname{algorithm}{Subroutine}
   \caption{$\mathrm{Open}(\cP_{h,i},n)$}\label{open}
\begin{algorithmic}
\STATE {\bfseries Input:} cell $\cP_{h,i}$,  number $n$
\STATE Set $\nopen_{h,i}\leftarrow n$
\FOR{each child cell $\cP_{h+1,j}$ of $\cP_{h,i}$}
\STATE Take $\theta_{h+1,j}\in \argmin_{\theta\in \cP_{h+1,j}}\DPRhat(\theta_{h,i},\theta)$, deploy it $n$ times, and set $\ndeploy_{h+1,j}\leftarrow n$
\STATE Form $\DPRhat(\theta_{h+1,j},\theta)$ for $\theta\in \cP_{h+1,j}$
\STATE Initialize $\nopen_{h+1,j}\leftarrow0$
\STATE Update $\cL_{h+1}\leftarrow \cL_{h+1}\cup \{\cP_{h+1,j}\}$
\ENDFOR
\end{algorithmic}
\end{algorithm}

In what follows, we analyze the performance of~\Cref{algorithm1}. Note that we compute $\DPRhat(\theta,\theta')$ for many pairs of $\theta$ and $\theta'$, and at the same time, we need the estimation error $|\DPR(\theta,\theta')-\DPRhat(\theta,\theta')|$ uniformly bounded for all pairs. To achieve this, we introduce the Rademacher complexity associated with the loss function $f$.
\begin{definition}[Rademacher complexity] Given an objective function $f$, the \emph{Rademacher complexity} $\mathfrak{C}^*(f)$ is defined as
\[\mathfrak{C}^*(f) 
= \sup_{\theta \in \Theta} \;\sup_{n \in\mathbb{N}}\; \sqrt{n} \cdot \bbE_{\epsilon, z^{\theta}} \left[\sup_{\theta' \in \Theta} \left|\frac{1}{n} \sum_{j=1}^n \epsilon_j f(\theta', z_j^{\theta})\right|\right],\]
where $\epsilon_j\sim \mathrm{Rademacher}$ and $z_j^{\theta}\sim \cD(\theta)$ for $j\in[n]$, which are all independent of each other.
\end{definition}
Given the Rademacher complexity of the loss function, we may provide a uniform upper bound on the estimation error. Let us define the \emph{clean event} under which the estimation error $|\DPR(\theta,\theta')-\DPRhat(\theta,\theta')|$ is uniformly bounded over all pairs.
\begin{definition}[Clean event]
We define the {clean event}, denoted $\Eclean$, as the event that 
$$\sup_{\theta\in \cP_{h,i}}\left|\DPRhat(\theta_{h,i},\theta) - \DPR(\theta_{h,i},\theta)\right|\leq \frac{2\mathfrak{C}^*(f) + 2\sqrt{\log(T/\delta)}}{\sqrt{\ndeploy_{h,i}m_0}}\quad  \forall \cP_{h,i}\in \cL_h,\ \forall h\in[\hmax+1]$$
and
$$\left|\PRhat(\theta_{T}(p)) - \PR(\theta_{T}(p)) \right|\leq \frac{2\mathfrak{C}^*(f) + 2\sqrt{\log(T/\delta)}}{\sqrt{\hmax m_0}}\quad \forall p\in[0:\pmax].$$
\end{definition}
We may prove that the clean event holds with high probability, parameterized by $\delta$.
\begin{lemma}\label{lemma:clean2}
The clean event holds with probability at least $1-\delta$, i.e., $\bbP\left[\Eclean\right]\geq 1-\delta$.
\end{lemma}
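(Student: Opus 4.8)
The plan is to reduce both families of events defining $\Eclean$ to a single uniform-deviation statement, then control each deviation by combining a symmetrization bound (which introduces $\mathfrak{C}^*(f)$) with a bounded-differences concentration inequality, and finally to aggregate over all estimates formed by \Cref{algorithm1} via a union bound that respects its adaptive structure.

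First I would isolate the basic estimation event. Fix a decision $\theta$ and suppose we draw $N=nm_0$ i.i.d.\ samples $z_1,\dots,z_N$ from $\cD(\theta)$, forming $\Dhat(\theta)$ and hence $\DPRhat(\theta,\theta')=\frac1N\sum_{j=1}^N f(\theta',z_j)$. The quantity to control is $\Phi:=\sup_{\theta'\in\Theta}\left|\DPRhat(\theta,\theta')-\DPR(\theta,\theta')\right|$, which dominates the supremum over any subcell $\cP_{h,i}$ and also $\left|\PRhat(\theta)-\PR(\theta)\right|$ (take $\theta'=\theta$). Since $f(\theta',\cdot)\in[0,1]$ by \Cref{assumption:bounded}, changing a single sample $z_j$ changes each empirical average, and hence $\Phi$, by at most $1/N$; McDiarmid's bounded-differences inequality then gives that for any $\delta'>0$, with probability at least $1-\delta'$, $\Phi\le\bbE[\Phi]+\sqrt{\log(1/\delta')/(2N)}$. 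For the expectation, the standard symmetrization lemma yields $\bbE[\Phi]\le 2\,\bbE_{\epsilon,z}\big[\sup_{\theta'}\big|\frac1N\sum_j\epsilon_j f(\theta',z_j)\big|\big]\le 2\mathfrak{C}^*(f)/\sqrt N$, where the last step is exactly the definition of $\mathfrak{C}^*(f)$. Combining, with probability at least $1-\delta'$,
$$\Phi\le \frac{2\mathfrak{C}^*(f)}{\sqrt N}+\sqrt{\frac{\log(1/\delta')}{2N}}.$$

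Next I would handle the adaptivity. The representatives $\theta_{h,i}$ and deployment counts $\ndeploy_{h,i}$ are data-dependent, so the samples forming $\DPRhat(\theta_{h,i},\cdot)$ are not unconditionally independent of the choice of cell. The resolution is that, inside $\mathrm{Open}$, each representative is selected from past information and only then deployed to draw $\ndeploy_{h,i}m_0$ \emph{fresh} samples; likewise each $\theta_T(p)$ is fixed before its $\hmax$ cross-validation deployments. Thus, letting $\mathcal{F}$ denote the history preceding a given estimate, the samples are i.i.d.\ from the committed distribution conditionally on $\mathcal{F}$, so the displayed bound holds conditionally on $\mathcal{F}$ with a right-hand side depending on $N$ only---the $\mathfrak{C}^*(f)$ term is uniform over the decision because its definition takes a supremum over $\theta$. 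Hence the conditional failure probability of each estimate is at most $\delta'$, regardless of which random decision is being estimated. I would then union bound over all estimates: each is formed from its own fresh batch of deployments, and the total number of deployments is at most $T$ by the choice of $\hmax$ and $\pmax$ (routine budget accounting), so there are at most $T$ estimates. Ordering them and applying the tower property to the conditional bounds gives total failure probability at most $T\delta'$; taking $\delta'=\delta/T$ makes this at most $\delta$. For each estimate we then have $\sqrt{\log(T/\delta)/(2N)}\le 2\sqrt{\log(T/\delta)}/\sqrt N$, so the bound becomes $(2\mathfrak{C}^*(f)+2\sqrt{\log(T/\delta)})/\sqrt N$; specializing $N=\ndeploy_{h,i}m_0$ for the cell estimates and $N=\hmax m_0$ for the cross-validation estimates recovers the two inequalities defining $\Eclean$, giving $\bbP[\Eclean]\ge 1-\delta$.

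I expect the main obstacle to be the adaptivity argument: making precise that the union bound is legitimate even though the cells $\cP_{h,i}$ whose representatives are estimated are themselves random and are selected using earlier noisy estimates, so that one cannot simply enumerate the a priori exponentially many possible cells. The crucial structural facts are that estimation samples are always drawn \emph{after} the decision is committed, and that the per-estimate concentration bound is uniform over the committed decision through the supremum in the definition of $\mathfrak{C}^*(f)$; together these let the conditional-probability and tower-property argument close the union bound with the count $T$ rather than the cardinality of the partition.
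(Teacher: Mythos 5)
Your proposal is correct and matches the paper's own proof in all essentials: a per-estimate bound obtained from McDiarmid's bounded-differences inequality plus symmetrization against the definition of $\mathfrak{C}^*(f)$ (whose supremum over $\theta$ makes the bound uniform over the committed decision), the observation that estimation samples are drawn only after each decision is committed, and a union bound with $\delta'=\delta/T$ over the at most $T$ estimates counted by the budget accounting of Lemma~\ref{lemma:number}. The only cosmetic difference is that the paper formalizes the adaptivity step via a deterministic deployment schedule and a ``virtual tape'' of presampled data for each decision, conditioning on the event $\{\theta_s=\bar\theta\}$, whereas you condition on the history and invoke the tower property---the same argument in different dress.
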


Next, we present the key lemma for our analysis. Following~\cite{bartlett19a}, we define $\bot_{h,p}$ as the depth of the deepest cell containing the performative optimal solution~$\thetaPO$ opened for at least $2^p$ times until \Cref{algorithm1} finishes opening cells of depth $h$.
\begin{lemma}\label{lemma:regret-bound}
Assume that the clean event $\Eclean$ holds for some $\delta\in(0,1)$, and 
let $d$ denote the $((2\sqrt{\Dtheta})^\alpha L_z\varepsilon, 2^{-\alpha},1)$-near-optimality dimension $d((2\sqrt{\Dtheta})^\alpha L_z\varepsilon,2^{-\alpha},1)$. Then for any $p\in[0:\pmax]$, the following bound on the regret holds. 
\begin{align*}
&\PR(\theta_T)- \PR(\thetaPO)\\
&\leq 2(2\sqrt{\Dtheta})^\alpha L_z\varepsilon2^{-\alpha (\bot_{\hmax,p}+1)}+\frac{8\mathfrak{C}^*(f) + 8\sqrt{\log(T/\delta)}}{\sqrt{2^pm_0}} + \frac{4\mathfrak{C}^*(f) + 4\sqrt{\log(T/\delta)}}{\sqrt{\hmax m_0}}.
\end{align*}
\end{lemma}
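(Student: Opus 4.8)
The plan is to mirror the full-feedback argument behind Lemma~\ref{lemma:regret-bound0}, but now carry along the two estimation-error terms supplied by the clean event $\Eclean$. Fix the index $p$ appearing in the statement and write $h'=\bot_{\hmax,p}$ for brevity. The first thing I would establish is the structural fact on which everything hinges: by definition of $\bot_{\hmax,p}$, the depth-$h'$ cell $\cP_{h',j^*}$ containing $\thetaPO$ is opened with parameter at least $2^p$, so $\ndeploy_{h',j^*}\geq 2^p$; and in Subroutine Open its child cell $\cP_{h'+1,i^*}$ that still contains $\thetaPO$ is added to $\cL_{h'+1}$ with $\ndeploy_{h'+1,i^*}=\nopen_{h',j^*}\geq 2^p$. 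Consequently $\cP_{h'+1,i^*}$ is an admissible candidate in the level-$p$ cross-validation $\argmin$.

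Next I would bound the performative risk of the representative $\theta_{h'+1,i^*}$ of this optimal child cell. Using that $\thetaPO,\theta_{h'+1,i^*}\in\cP_{h'+1,i^*}\subseteq\cP_{h',j^*}$ together with Assumption~\ref{assumption:partition} (diameter $\leq\sqrt{\Dtheta}\,2^{-h'}$), I would chain: (i) Lemma~\ref{lemma:lipschitz} to pass from $\PR(\theta_{h'+1,i^*})$ to $\DPR(\theta_{h',j^*},\theta_{h'+1,i^*})$ at cost $L_z\varepsilon\Dtheta^{\alpha/2}2^{-\alpha h'}$; (ii) the clean-event bound $\frac{2\mathfrak{C}^*(f)+2\sqrt{\log(T/\delta)}}{\sqrt{2^pm_0}}$ to replace $\DPR$ by $\DPRhat$; (iii) the selection rule $\theta_{h'+1,i^*}\in\argmin_{\theta\in\cP_{h'+1,i^*}}\DPRhat(\theta_{h',j^*},\theta)$, which lets me substitute $\thetaPO\in\cP_{h'+1,i^*}$ for $\theta_{h'+1,i^*}$; and (iv) one more clean-event step and one more application of Lemma~\ref{lemma:lipschitz} to land at $\PR(\thetaPO)$. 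Collecting terms and using $\Dtheta^{\alpha/2}2^{-\alpha h'}=(2\sqrt{\Dtheta})^\alpha 2^{-\alpha(h'+1)}$ yields
\[
\PR(\theta_{h'+1,i^*})\leq \PR(\thetaPO)+2(2\sqrt{\Dtheta})^\alpha L_z\varepsilon\,2^{-\alpha(\bot_{\hmax,p}+1)}+\frac{4\mathfrak{C}^*(f)+4\sqrt{\log(T/\delta)}}{\sqrt{2^pm_0}},
\]
i.e.\ the first term plus one copy of the $2^p$-error.

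The remaining step is to propagate this through the two selection stages of Algorithm~\ref{algorithm1}. In the candidate-selection $\argmin$ for level $p$, $\PRhat$ is the estimate accumulated during exploration; since $\theta_T(p)$ minimizes it over level-$p$ candidates and $\cP_{h'+1,i^*}$ is admissible, I would use $\PRhat(\theta_T(p))\leq\PRhat(\theta_{h'+1,i^*})$ and convert both estimates to true values via the clean event at the $\sqrt{2^pm_0}$ scale (invoking $\ndeploy\geq 2^p$ for $\theta_T(p)$'s cell and for $\cP_{h'+1,i^*}$), producing a second copy of $\frac{4\mathfrak{C}^*(f)+4\sqrt{\log(T/\delta)}}{\sqrt{2^pm_0}}$; the two copies combine into $\frac{8\mathfrak{C}^*(f)+8\sqrt{\log(T/\delta)}}{\sqrt{2^pm_0}}$. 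Finally, the returned $\theta_T$ minimizes the re-deployed estimate $\PRhat(\theta_T(\cdot))$ formed from $\hmax$ deployments, so applying the second part of the clean event (at the $\sqrt{\hmax m_0}$ scale) to both $\theta_T$ and the fixed-$p$ candidate $\theta_T(p)$ gives $\PR(\theta_T)\leq\PR(\theta_T(p))+\frac{4\mathfrak{C}^*(f)+4\sqrt{\log(T/\delta)}}{\sqrt{\hmax m_0}}$, which is the last term. Summing the three contributions gives the claimed bound.

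I expect the main obstacle to be the careful separation of the two error scales and the constant bookkeeping: the exploration-phase estimates must be controlled at the $\sqrt{2^pm_0}$ scale (via $\ndeploy\geq 2^p$ for \emph{every} cell involved, which relies on the precise semantics of Open and of $\bot_{\hmax,p}$), whereas the cross-validation estimates live at the $\sqrt{\hmax m_0}$ scale; conflating them would break the bound. A secondary subtlety is verifying that every pair on which I invoke the clean event has its second argument inside the cell of its first argument, so that the uniform $\sup_{\theta\in\cP_{h,i}}$ bound genuinely applies. Note that the near-optimality dimension $d$ does not enter this lemma directly—the bound is expressed through $\bot_{\hmax,p}$—and $d$ is only used afterward, at the theorem level, to lower-bound $\bot_{\hmax,p}$, exactly as in the full-feedback case.
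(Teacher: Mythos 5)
Your proposal is correct and follows essentially the same route as the paper's proof: the same three-stage error decomposition (cross-validation at the $\sqrt{\hmax m_0}$ scale giving the $4$-constant term, the level-$p$ selection argmin at the $\sqrt{2^p m_0}$ scale, and the near-optimal cell's approximation bound contributing the geometric term plus the remaining $2^p$-scale error, summing to the $8$-constant term), with the same structural use of $\bot_{\hmax,p}$ to certify that the cell containing $\thetaPO$ at depth $\bot_{\hmax,p}+1$ is an admissible candidate with $\ndeploy\geq 2^p$. The only cosmetic difference is that you inline the chain (i)--(iv) that the paper factors out as Lemma~\ref{lemma:cell-approximation}, applying it at $\thetaPO$ rather than at $\inf_{\theta\in\cP_{h,i}}\PR(\theta)$, which yields the identical bound.
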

Therefore, to provide an upper bound on the simple regret, it is sufficient to provide upper bounds on the three terms on the right-hand side. In particular, the second term did not appear in the analysis of \StroquOOL by~\cite{bartlett19a}. Nevertheless, we show that under \Cref{algorithm1}, the three terms are controlled, thereby leading to the desired performance guarantees.

We saw that the simple regret of \Cref{algorithm0} behaves differently depending on whether the near-optimality dimension $d$ satisfies $d=0$ or $d>0$. Similarly, the simple regret of \Cref{algorithm1} varies depending on problem parameters. To illustrate, let us define the \emph{low-noise} and \emph{high-noise} regimes. For simplicity, we use notations $\nu$ and $B$ defined as
$$\nu = (2\sqrt{\Dtheta})^\alpha L_z\varepsilon\qquad \text{and}\qquad B=\frac{2\sqrt{2}\left(\mathfrak{C}^*(f) + \sqrt{\log(T/\delta)}\right)}{\sqrt{m_0}}.$$
Here, $\nu$ captures the term $L_z\varepsilon$, and $B$ is related to the estimation error. Intuitively, if $B$ is high, then the estimation error is large.
We define $\tilde h$ as
\begin{align*}
\tilde h&= \frac{1}{\alpha(d+2)\log 2}W\left(\frac{\hmax \nu^2\alpha (d+2)\log2 }{B^2}\right)\\
&=\frac{1}{\alpha(d+2)\log 2}\left(\log\left(\frac{\hmax \nu^2\alpha (d+2)\log2 }{B^2}\right)-\log\log\left(\frac{\hmax \nu^2\alpha (d+2)\log2 }{B^2}\right)\right)+o(1)\\
&=\Omega\left(\log\left(L_z^2\varepsilon^2\frac{ T}{\log T}\right)\right)
\end{align*}
We refer to the case
$B< L_z\varepsilon \cdot 2^{-\alpha \tilde h}$
as the low-noise regime and the case
$B\geq L_z\varepsilon \cdot 2^{-\alpha \tilde h}$
as the high-noise regime.
\begin{theorem}\label{thm:data-driven}
Let $d$ denote the $((2\sqrt{\Dtheta})^\alpha L_z\varepsilon, 2^{-\alpha},1)$-near-optimality dimension. For the data-driven setting, \Cref{algorithm1} after $T$ decision deployments finds a solution $\theta$ that satisfies the following with probability at least $1-\delta$. Under the low-noise regime with $d=0$,
$$\PR(\theta) - \PR(\thetaPO) \leq (2+2\sqrt{2})(2\sqrt{\Dtheta})^\alpha L_z\varepsilon2^{-\alpha \hmax}+ \frac{4\mathfrak{C}^*(f) + 4\sqrt{\log(T/\delta)}}{\sqrt{\hmax m_0}}$$
where $\hmax =\lfloor T/ 2^{D+1}(\log_2T +1)^2\rfloor$. Under the low-noise regime with $d>0$,
$$\PR(\theta) - \PR(\thetaPO) \leq (2+2\sqrt{2})(2\sqrt{\Dtheta})^\alpha L_z\varepsilon e^{-(1/d)W\left(\hmax \alpha d\log 2\right)}+ \frac{4\mathfrak{C}^*(f) + 4\sqrt{\log(T/\delta)}}{\sqrt{\hmax m_0}}.$$
Under the high-noise regime, 
$$\PR(\theta) - \PR(\thetaPO) \leq 6(2\sqrt{\Dtheta})^\alpha L_z\varepsilon2^{-\alpha \tilde h}+ \frac{4\mathfrak{C}^*(f) + 4\sqrt{\log(T/\delta)}}{\sqrt{\hmax m_0}}.$$
Moreover, if $\hmax\geq \max\{1,\ e/\alpha d\log 2,\ B^2e/\nu^2\alpha(d+2)\log 2\}$, then under the low-noise regime,
$$\PR(\theta) - \PR(\thetaPO) \leq \begin{cases}
(2+3\sqrt{2})(2\sqrt{\Dtheta})^\alpha L_z\varepsilon2^{-\alpha \hmax},& \text{if $d=0$},\\
(2+3\sqrt{2})(2\sqrt{\Dtheta})^\alpha L_z\varepsilon\left(\frac{\hmax \alpha d \log 2}{\log(\hmax \alpha d \log 2)}\right)^{-{1}/{d}}, &\text{if $d>0$}.
\end{cases}$$
Lastly, if $\hmax\geq B^2e/\nu^2\alpha(d+2)\log 2$, then under the high-noise regime, 
\begin{align*}
&\PR(\theta) - \PR(\thetaPO) \\
&\leq 6(2\sqrt{\Dtheta})^\alpha L_z\varepsilon\left(\frac{\hmax \nu^2\alpha (d+2)\log2 /B^2}{\log(\hmax \nu^2\alpha (d+2)\log2 /B^2)}\right)^{-\frac{1}{d+2}}+\frac{4\mathfrak{C}^*(f) + 4\sqrt{\log(T/\delta)}}{\sqrt{\hmax m_0}}.
\end{align*}
\end{theorem}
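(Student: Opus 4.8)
The plan is to start from \Cref{lemma:regret-bound}, which already reduces the problem to controlling, uniformly over $p\in[0:\pmax]$, the two leading terms of the regret; the third term $(4\mathfrak{C}^*(f)+4\sqrt{\log(T/\delta)})/\sqrt{\hmax m_0}$ does not depend on $p$ and reappears verbatim as the additive floor in every case of the theorem, so I set it aside at once. Writing $\nu=(2\sqrt{\Dtheta})^\alpha L_z\varepsilon$ and $B=2\sqrt2(\mathfrak{C}^*(f)+\sqrt{\log(T/\delta)})/\sqrt{m_0}$, the first two terms simplify to $2\nu\,2^{-\alpha(\bot_{\hmax,p}+1)}$ and $2\sqrt2\,B\,2^{-p/2}$. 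Since $\theta_T$ is a single fixed output and \Cref{lemma:regret-bound} holds for every $p$ simultaneously (the cross-validation phase guarantees this), I may choose, separately in each regime, the value of $p$ that makes this sum smallest.

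The crux is a lower bound on $\bot_{\hmax,p}$, the deepest depth at which a cell containing $\thetaPO$ is opened at least $2^p$ times. Following the \StroquOOL argument of~\cite{bartlett19a} but adapted to the performative-feedback representative rule $\theta_{h,i}\in\argmin_{\theta}\DPRhat(\theta_{h-1,j},\theta)$, I would argue inductively on depth: on the clean event $\Eclean$ the estimation error of a cell opened $2^p$ times is $O(B\,2^{-p/2})$, and by \Cref{lemma:lipschitz} the only cells whose estimated performative risk can undercut that of the optimal cell are near-optimal ones. By \Cref{def:near-optimality} with parameters $((2\sqrt{\Dtheta})^\alpha L_z\varepsilon,2^{-\alpha},1)$ there are at most $\cN_h(6\nu 2^{-\alpha h})\le 2^{\alpha d h}$ such cells at depth $h$, so the optimal cell survives the budget-$2^p$ round whenever $\lfloor\hmax/(h2^p)\rfloor\ge 2^{\alpha d h}$. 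Solving $h\,2^{\alpha d h}2^p=\hmax$ yields $\bot_{\hmax,p}\ge\lfloor\hmax 2^{-p}\rfloor$ when $d=0$ and $\bot_{\hmax,p}\ge\frac{1}{\alpha d\log 2}W(\hmax 2^{-p}\alpha d\log 2)$ when $d>0$. Establishing this rigorously is the main obstacle: unlike the deterministic \SequOOL analysis behind \Cref{thm:full-feedback}, selection here is driven by the noisy estimates $\PRhat$, so the induction must simultaneously track the budget doubling and verify that the clean-event error never discards the optimal cell.

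With the $\bot_{\hmax,p}$ bound in hand I split into the two regimes. In the low-noise regime $B< L_z\varepsilon 2^{-\alpha\tilde h}$ I take $p=0$, so the first term reproduces the full-feedback rate of \Cref{thm:full-feedback}, namely $2\nu 2^{-\alpha\hmax}$ for $d=0$ and $2\nu e^{-(1/d)W(\hmax\alpha d\log 2)}$ for $d>0$. For the residual term $2\sqrt2 B$ I would invoke the defining identity $\hmax\nu^2/B^2=\tilde h\,2^{\alpha(d+2)\tilde h}$ of $\tilde h$ (read off from $W(X)e^{W(X)}=X$), which together with the low-noise inequality forces $\tilde h>\bot_{\hmax,0}$ (respectively $\tilde h>\hmax$ when $d=0$); since $\nu\ge L_z\varepsilon$ this gives $2\sqrt2 B<2\sqrt2\nu 2^{-\alpha\tilde h}$, at most $2\sqrt2\nu$ times the first-term rate, hence the coefficient $2+2\sqrt2$. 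In the high-noise regime $B\ge L_z\varepsilon 2^{-\alpha\tilde h}$ I instead pick the integer $p$ with $2^p$ as large as the constraint $\tilde h\,2^{\alpha d\tilde h}2^p\le\hmax$ allows while keeping $\bot_{\hmax,p}\ge\tilde h$; substituting the same identity collapses $2\sqrt2 B 2^{-p/2}$ to $\Theta(\nu 2^{-\alpha\tilde h})$, the rounding of $p$ to an integer inflating it by at most $\sqrt2$, so with the first term bounded by $2\nu 2^{-\alpha\tilde h}$ one obtains the coefficient-$6$ bound $6\nu 2^{-\alpha\tilde h}$.

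Finally I would convert the raw Lambert-$W$ bounds into the explicit polynomial rates stated after \emph{Moreover} and \emph{Lastly}, exactly as in the proof of \Cref{thm:full-feedback}. From $e^{-W(x)}=W(x)/x$ and the inequality $W(x)\le\log x$ valid for $x\ge e$, one gets $e^{-(1/d)W(x)}\le(x/\log x)^{-1/d}$; applying this with $x=\hmax\alpha d\log 2$ in the low-noise case $d>0$, and with $x=\hmax\nu^2\alpha(d+2)\log 2/B^2$ so that $2^{-\alpha\tilde h}=e^{-W(x)/(d+2)}\le(x/\log x)^{-1/(d+2)}$ in the high-noise case, yields the two stated $(\,\cdot/\log\,\cdot)^{-1/d}$ and $(\,\cdot/\log\,\cdot)^{-1/(d+2)}$ forms. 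The hypotheses $\hmax\ge e/(\alpha d\log 2)$ and $\hmax\ge B^2e/(\nu^2\alpha(d+2)\log 2)$ are precisely what guarantee $x\ge e$, and the $d=0$ case needs no conversion since its rate is already $2^{-\alpha\hmax}$, with the slightly larger constant $2+3\sqrt2$ absorbing the slack introduced when passing from the raw expression to the polynomial rate.
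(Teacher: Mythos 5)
Your overall route coincides with the paper's: start from \Cref{lemma:regret-bound}, lower-bound $\bot_{\hmax,p}$ by a \StroquOOL-style induction, pick $p$ per regime ($p=0$ in low noise, $p\approx\tilde p$ in high noise), and finish with the Hoorfar--Hassani inequality $W(x)\geq\log(x/\log x)$ for $x\geq e$. But two concrete problems remain. First, the $\bot_{\hmax,p}$ lower bound you state is false as written: you solve only the counting constraint $h\,2^{\alpha d h}2^p=\hmax$, whereas the induction (the paper's \Cref{lemma:depth}) requires \emph{in addition} the noise condition $\bigl(2\mathfrak{C}^*(f)+2\sqrt{\log(T/\delta)}\bigr)/\sqrt{2^pm_0}\leq(2\sqrt{\Dtheta})^\alpha L_z\varepsilon\,2^{-\alpha h}$. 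Without it, even on $\Eclean$ a cell that is not near-optimal can undercut the optimal cell's $\PRhat$ by an error of order $B2^{-p/2}\gg\nu 2^{-\alpha h}$, so for instance in the high-noise regime $\bot_{\hmax,0}$ is capped near $\tilde h$, not at $\lfloor\hmax\rfloor$ as your $d=0$ formula claims. You flag this induction as ``the main obstacle'' and leave it as a sketch, but it is precisely the paper's main technical content (\Cref{lemma:cell-approximation} and \Cref{lemma:depth}, with the chain producing the factor $6$ inside $\cN_h(6\nu 2^{-\alpha h'})$ and the bookkeeping $\ndeploy_{h',i^\star_{h'}}=\nopen_{h'-1,i^\star_{h'-1}}\geq 2^p$); a complete proof must carry it out, stating both conditions and verifying your regime-wise choices of $p$ satisfy the noise condition at the target depth.

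Second, your accounting for the ``Moreover'' clause is wrong. You assert the floor term $\bigl(4\mathfrak{C}^*(f)+4\sqrt{\log(T/\delta)}\bigr)/\sqrt{\hmax m_0}$ ``reappears verbatim as the additive floor in every case'' and that $2+3\sqrt{2}$ absorbs ``slack'' from the Lambert-$W$ conversion. Neither is right: the stated Moreover bounds carry \emph{no} additive floor, and the $W$-to-$(x/\log x)$ step is a clean inequality contributing no constant. The extra $\sqrt{2}$ comes from absorbing the floor itself: under $\hmax\geq1$ (this is why $1$ appears in the max over hypotheses), the floor equals $\sqrt{2}B/\sqrt{\hmax}\leq\sqrt{2}B\leq\sqrt{2}\nu\rho^{\ddot h}$ using the low-noise bound $B\leq\nu\rho^{\ddot h}$, upgrading $2+2\sqrt{2}$ to $2+3\sqrt{2}$. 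As written, your argument yields $(2+2\sqrt{2})\nu(\cdot)^{-1/d}$ \emph{plus} the floor, which is not the claimed statement. The rest of your proposal does match the paper: the identity $\hmax\nu^2/B^2=\tilde h\,2^{\alpha(d+2)\tilde h}$, the ordering $\ddot h\leq\tilde h$ (the paper imports this as \Cref{lemma:bartlett} from \cite{bartlett19a}), the $\sqrt{2}$ from rounding $\tilde p$ down to $\lfloor\tilde p\rfloor$ that produces the coefficient $6$ in the high-noise case, and the identification of the $\hmax$ hypotheses with the requirement $x\geq e$.
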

As $\hmax = \Omega(T/\log T)$, $\hmax = O(T/\log T)$, and $\tilde h = \Omega(\log(L_z^2\varepsilon^2 T/\log T))$, we deduce from \Cref{thm:data-driven} with $\alpha=1$ the simple regret bounds in \Cref{thm-informal:data-driven}. We provide the proof of the theorem in \Cref{appendix:data-driven-proof}.

\section{Experiments}\label{sec:numerical}

\begin{figure}
    \centering
    \subfloat[\centering Ackley Function $A(x_1, x_2)$]{{\includegraphics[width=7cm]{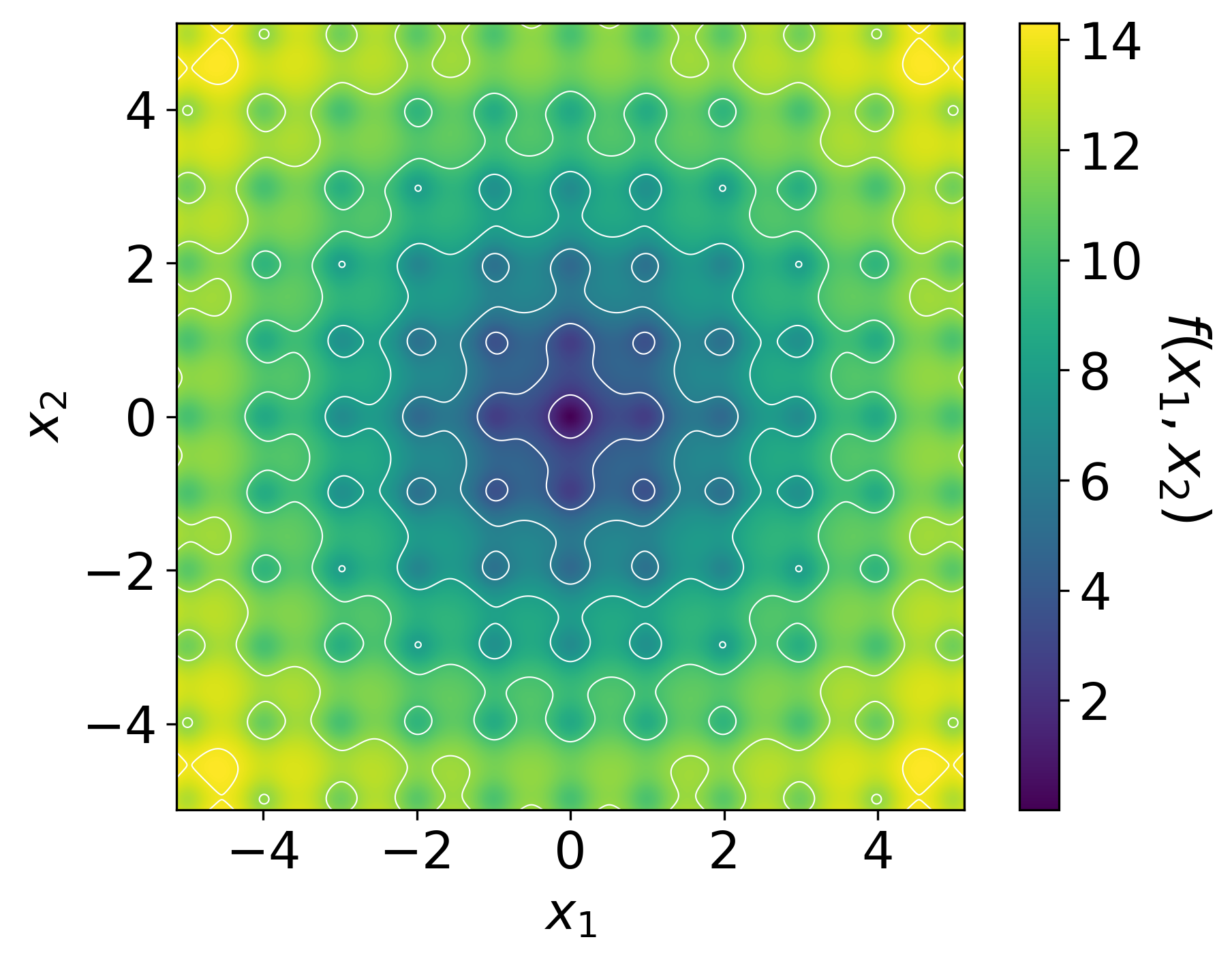} }}
    \qquad
    \subfloat[\centering Rastrigin Function $R(x_1, x_2)$]{{\includegraphics[width=7cm]{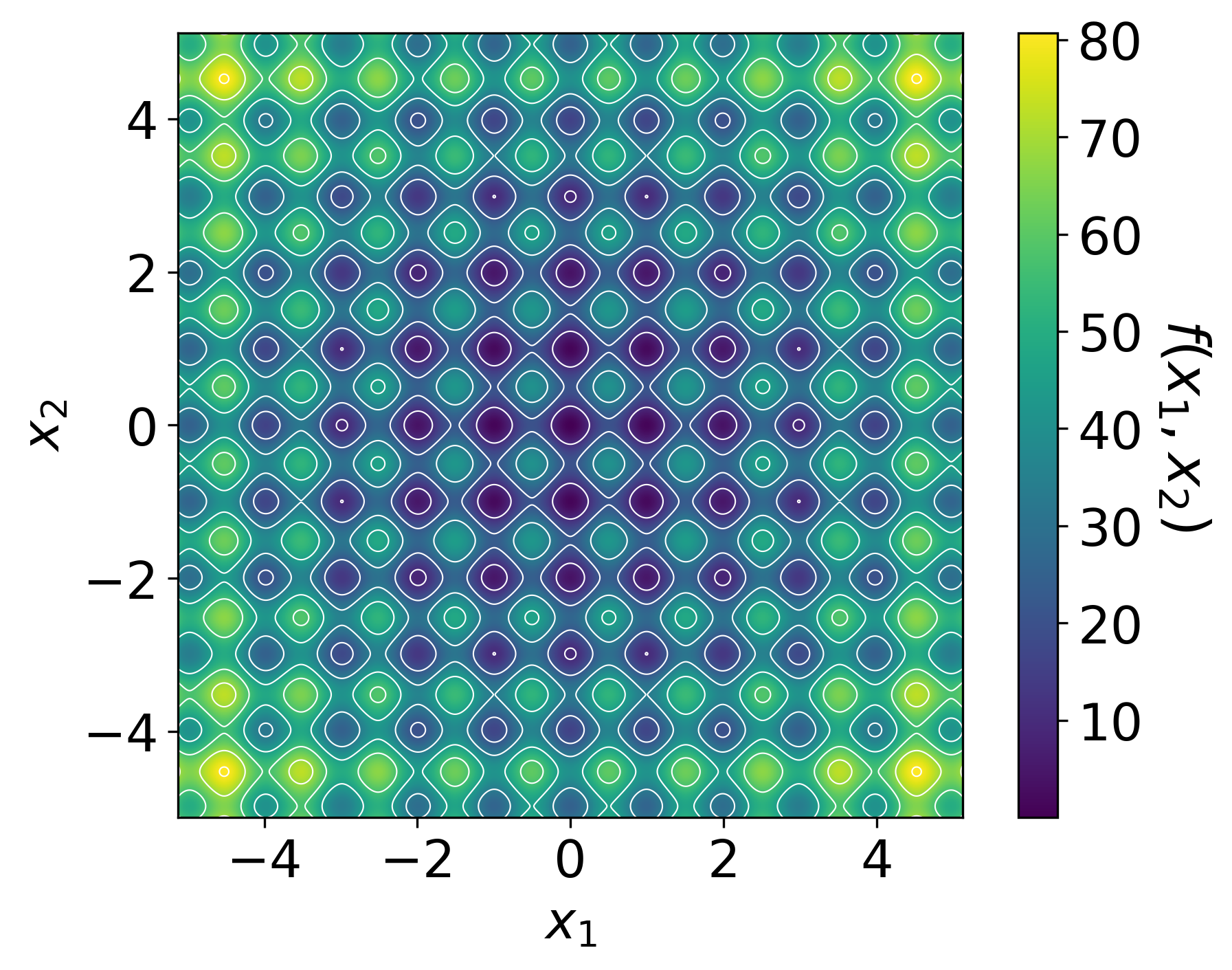} }}
    \caption{Contour plots of the Ackley and Rastrigin functions on $[-5.12, 5.12]^2$}
    \label{fig:syntheticobj}
\end{figure}
In this section, we empirically demonstrate how our algorithms, \DOOP for the full-feedback case and \SOOP for the data-driven setting, numerically perform for performative regret minimization. We compare \DOOP and the existing methods \SOO, \SequOOL, and \SZooming for the full-feedback case, and we test \SOOP against \StoSOO, \StroquOOL, and \SZooming for the data-driven setting. Here, \SZooming indicates the variant of the zooming algorithm by~\cite{jagadeesan22a}. For \SOO, \StoSOO, \SequOOL, and \StroquOOL, we used the package developed by \cite{Li2023PyXAB}.

We tested the algorithms for synthetic objectives on a bounded two-dimensional domain for optimization. In our experiments, we used two multi-modal functions as shown in \Cref{fig:syntheticobj} to express our loss function $f(\theta, z)$ and the distribution map $\cD(\theta)$; the first is the \emph{Ackley function} given by 
$$A(x_1, x_2)=-20\cdot\exp\left[-0.2\sqrt{0.5(x_1^2+x_2^2)}\right]-\exp\left[0.5\left(\cos(2\pi x_1)+\cos(2\pi x_2)\right)\right]$$
and the second is the \emph{Rastrigin function} given by
$$R(x_1, x_2)=20+\left(x_1^2-10\cos(2\pi x_1)\right)+\left(x_2^2-10\cos(2\pi x_2)\right).$$ 
Note that both functions have a global minimum at $A(0, 0)=R(0, 0)=0$, and their domains are both $[-5.12, 5.12]^2$. 
With the Ackley and Rastrigin functions, we define two types of the loss function.
\begin{itemize}
    \item $f(\theta,z)=A(\theta)+z$ with $z\sim \cD(\theta)=\text{Exp}\left({1}/{R(\theta)}\right)$ and $\theta\in[-5.12, 5.12]^2$,
    \item $f(\theta,z)=R(\theta)+z$ with $z\sim \cD(\theta)=\text{Exp}\left({1}/{A(\theta)}\right)$ and $\theta\in[-5.12, 5.12]^2$
\end{itemize}
where $\text{Exp}(1/\lambda)$ denotes the exponential distribution with mean $\lambda$. In both cases, we have
$$\PR(\theta ) = \mathbb{E}_{z\sim \cD(\theta)}[f(\theta,z)] = A(\theta) + R(\theta).$$

\begin{figure}
    \centering
    \subfloat[\centering $f(\theta,z)=A(\theta)+z$ with $z\sim \text{Exp}\left({1}/{R(\theta)}\right)$]{{\includegraphics[width=7cm]{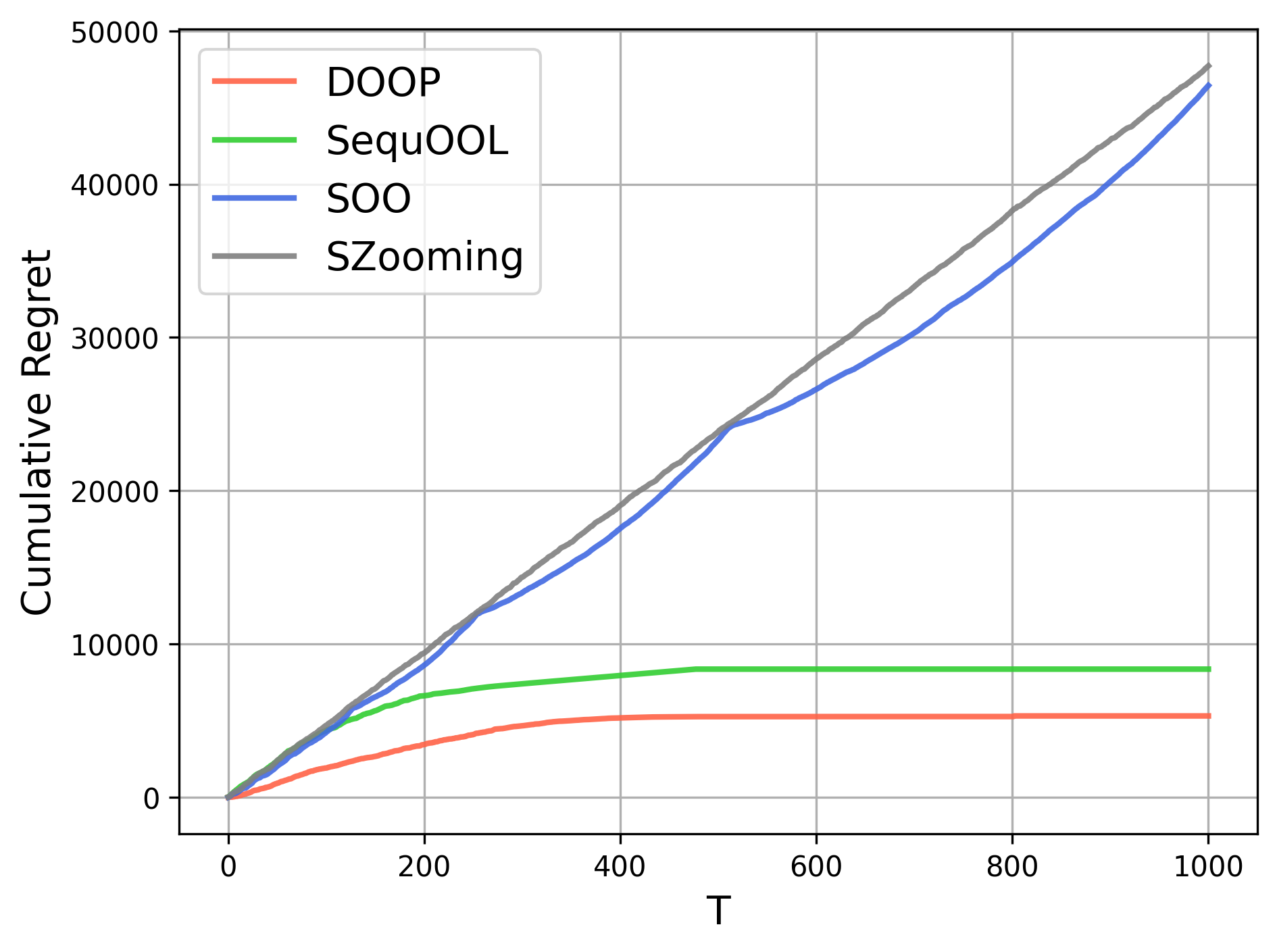} }}
    \qquad
    \subfloat[\centering $f(\theta,z)=R(\theta)+z$ with  $z\sim \text{Exp}\left({1}/{A(\theta)}\right)$]{{\includegraphics[width=7cm]{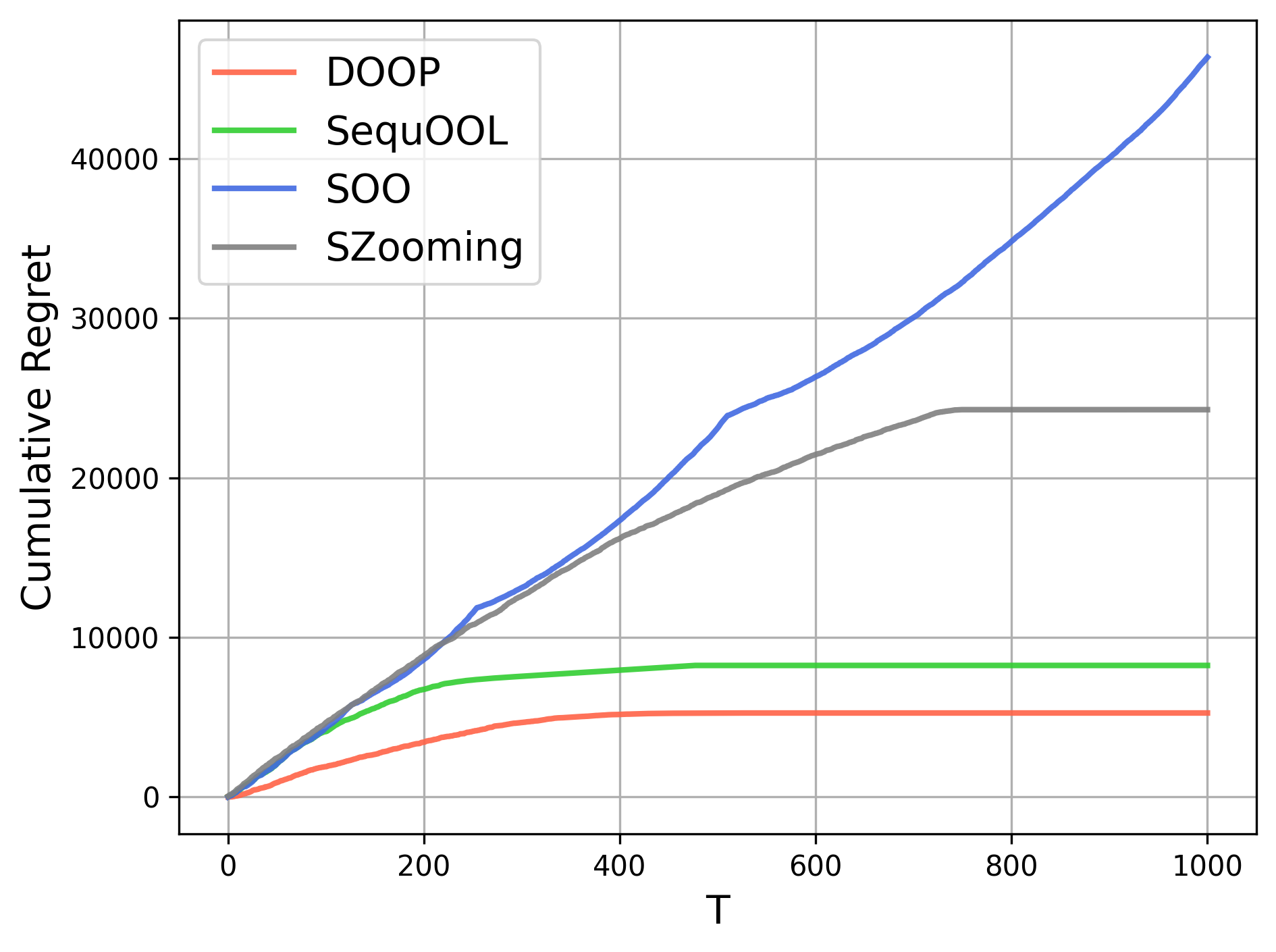} }}
    \caption{Cumulative regret comparison in the full-feedback case}
    \label{fig:deterministic}
\end{figure}
For the full-feedback case, we test \DOOP with \SOO, \SequOOL, and \SZooming, and the performative risk is constructed based on combining the \emph{Ackley function} and the \emph{Rastrigin function}. Recall that the tree search-based algorithms, not including \SZooming, require a hierarchical partitioning, and for them, we used the binary partitioning. For the tree search-based algorithms, we used the same maximum level of depth $h_{\max}$. For \SZooming, the decision domain $\Theta$ is set to be a finite set of 3,025 discrete points on domain $[-5.12, 5.12]^2$. In addition, the sensitivity parameter $\epsilon$ and the Lipschitz constant $L_z$ are chosen according to the shape of the objective functions on the decision domain $\Theta$; both $\epsilon$ and $L_z$ for $z\sim\text{Exp}(1/f(\theta))$ is calculated to be $\sup\left(|f(\theta_1)-f(\theta_2)|/\lVert\theta_1-\theta_2\rVert\right)$ for any $\theta_1, \theta_2\in\Theta$.

\begin{figure}
    \centering
    \subfloat[\centering $f(\theta,z)=A(\theta)+z$ with $z\sim \text{Exp}\left({1}/{R(\theta)}\right)$]{{\includegraphics[width=7cm]{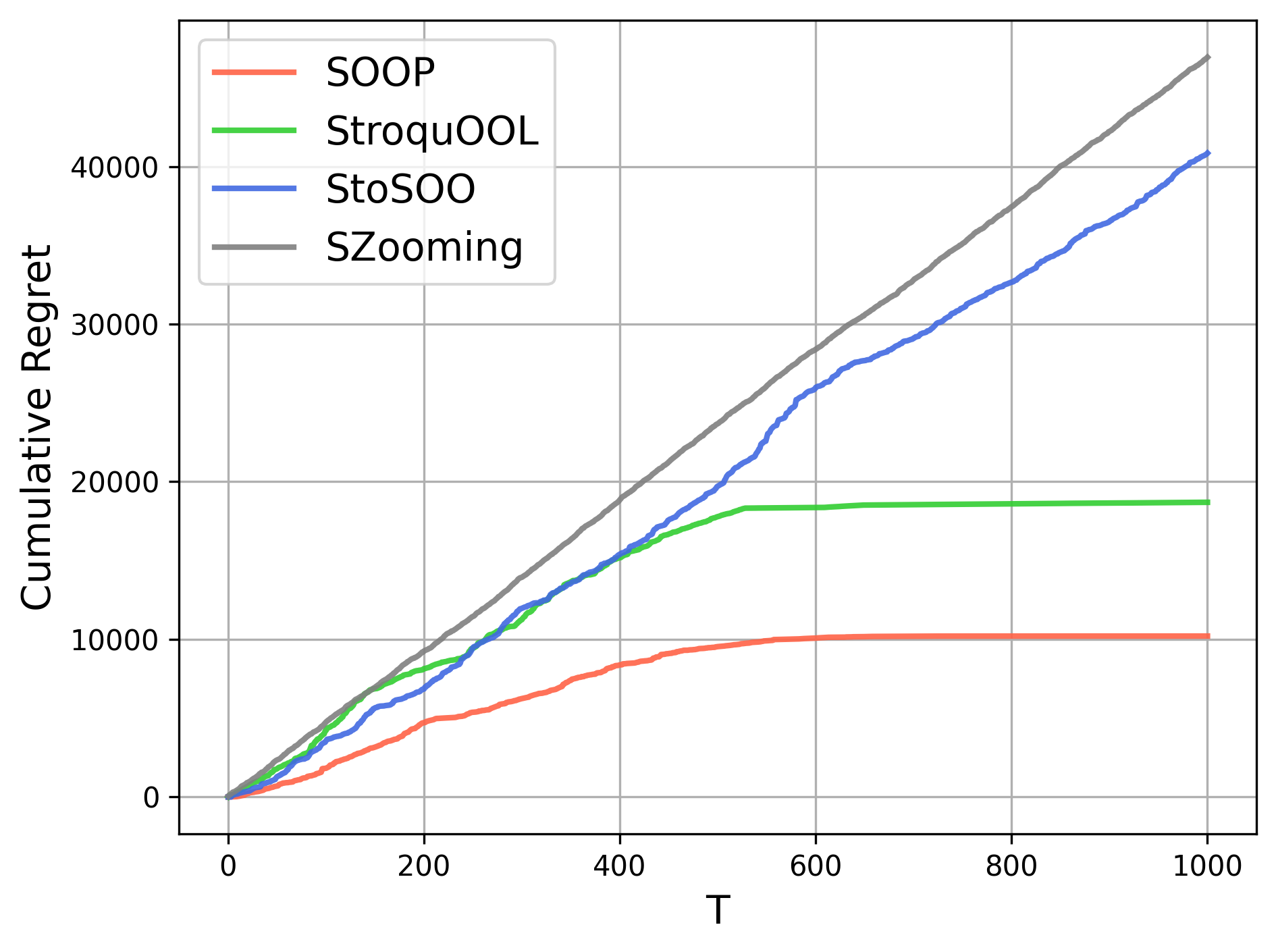} }}
    \qquad
    \subfloat[\centering $f(\theta,z)=R(\theta)+z$ with  $z\sim \text{Exp}\left({1}/{A(\theta)}\right)$]{{\includegraphics[width=7cm]{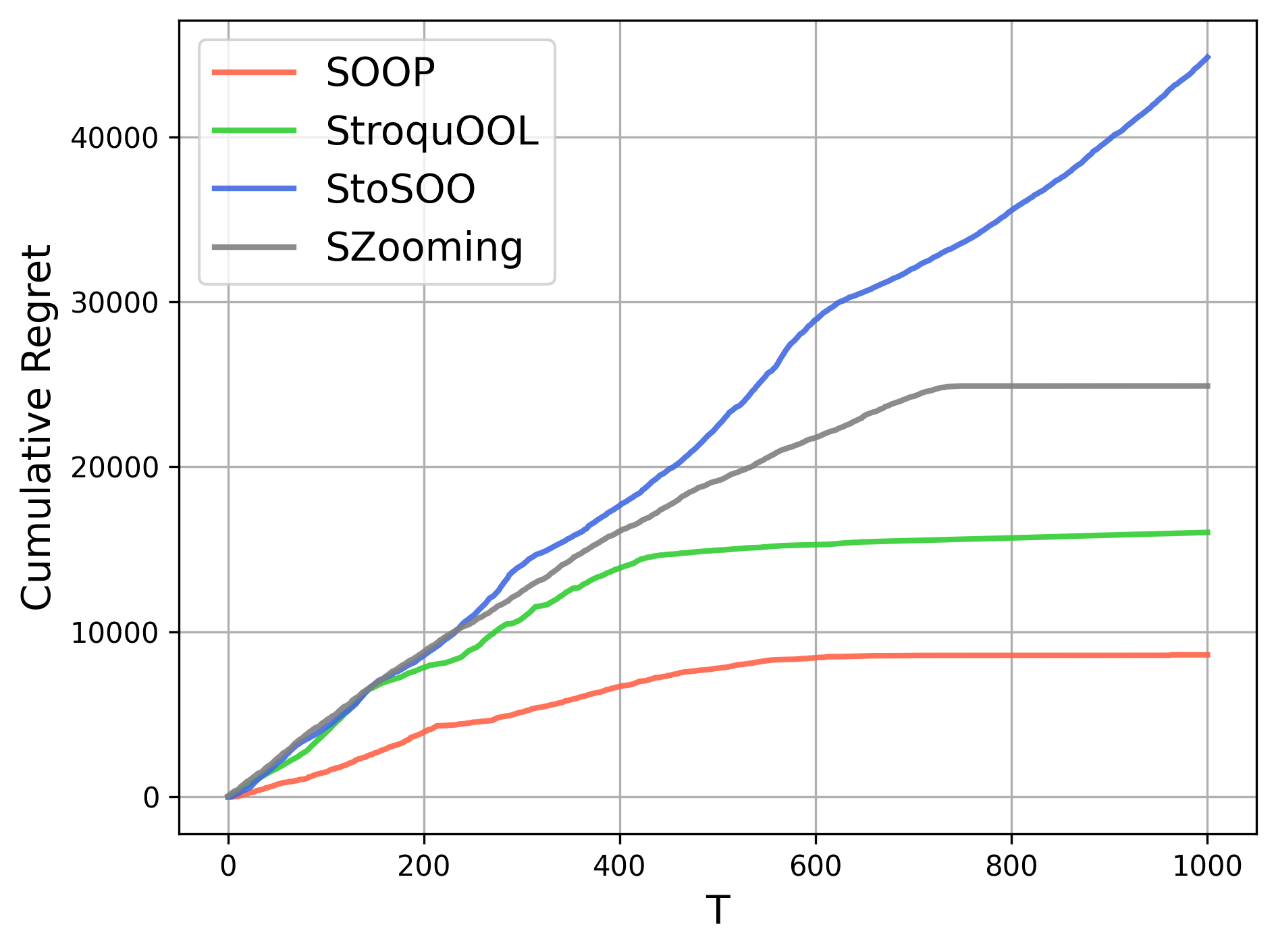} }}
    \caption{Cumulative regret comparison in the data-driven setting}
    \label{fig:stochastic}
\end{figure}
For the data-driven setting, we test \SOOP with \StoSOO, \StroquOOL, and \SZooming. The performative feedback consists of $m_0=10$ samples drawn randomly from $\mathcal{D}(\theta)$. While both $A(\theta)$ and $R(\theta)$ are multi-modal on the given domain, each has different sets of local optima and range. In particular, $R(\theta)$ yields values in a broader range, thus the associated distribution $\text{Exp}\left({1}/{R(\theta)}\right)$ has a larger variance since the variance of $\text{Exp}(1/\lambda)$ is ${\lambda^2}$. The other components of the experimental setup are the same as those for the full-feedback case.

\Cref{fig:deterministic,fig:stochastic} summarize our experimental results. 
As shown in \Cref{fig:deterministic,fig:stochastic},  \DOOP and \SOOP outperform the other methods in terms of cumulative regret. We have noticed that \SZooming incurs a very high cumulative regret in the first phase, and this is because there exist high estimation errors in the first phase of \SZooming and it turns out that the majority of exploration of \SZooming occurs during the first phase. Moreover, \SZooming is not computationally efficient, as it takes a huge amount of time to find an optimal decision. For the full-feedback case, \SZooming takes 73691 seconds for $f(\theta,z)=A(\theta)+z$ with $z\sim \cD(\theta)=\text{Exp}\left({1}/{R(\theta)}\right)$ and 3737.0 seconds for $f(\theta,z)=R(\theta)+z$ with $z\sim \cD(\theta)=\text{Exp}\left({1}/{A(\theta)}\right)$. In contrast, \DOOP takes 4.2390 seconds and 2.3609 seconds, respectively. For the data-driven case, \SZooming takes 73112 seconds for $f(\theta,z)=A(\theta)+z$ with $z\sim \cD(\theta)=\text{Exp}\left({1}/{R(\theta)}\right)$ and 3999.4 seconds for $f(\theta,z)=R(\theta)+z$ with $z\sim \cD(\theta)=\text{Exp}\left({1}/{A(\theta)}\right)$. In contrast, \SOOP takes 2.5896 seconds and 1.4877 seconds, respectively.

\acks{This research is supported, in part, by KAIST Starting Fund (KAIST-G04220016), FOUR Brain Korea 21 Program (NRF-5199990113928), and National Research Foundation of Korea (NRF-2022M3J6A1063021).}

\newpage

\appendix
\section{Regret Analysis of \DOOP}\label{appendix:full-feedback-proof}

\subsection{Approximation Bounds under the Hierarchical Partitioning Scheme}

In this section, we prove two lemmas that are related to the quality of the representative decision of each cell.

\begin{lemma}\label{lemma:PR-approximation}
For $\theta\in\cP_{h,i}$ and $\cP_{h,i}\in\cL_h$, we have
$$\left|\DPR(\theta_{h,i},\theta)-\PR(\theta)\right|\leq (\sqrt{\Dtheta})^\alpha L_z\varepsilon2^{-\alpha h}.$$
\end{lemma}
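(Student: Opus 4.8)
The plan is to derive this bound directly from the Kantorovich--Rubinstein estimate already recorded in \Cref{lemma:lipschitz}, combined with the diameter control of \Cref{assumption:partition}. The key structural fact I would use at the outset is that the representative decision $\theta_{h,i}$ is, by construction, selected from within its own cell: in the $\mathrm{Open}$ subroutine each representative is taken as $\theta_{h,i}\in\argmin_{\theta\in\cP_{h,i}}\DPR(\theta_{h-1,j},\theta)$, so $\theta_{h,i}\in\cP_{h,i}$. Since the point $\theta$ in the statement also lies in $\cP_{h,i}$, both arguments of the distance live in the same cell of depth $h$.

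First I would instantiate \Cref{lemma:lipschitz} with the first argument equal to $\theta_{h,i}$ and the second equal to $\theta$, which gives
$$\left|\PR(\theta)-\DPR(\theta_{h,i},\theta)\right|\leq L_z\varepsilon\,\|\theta_{h,i}-\theta\|^\alpha.$$
Next I would bound the distance using \Cref{assumption:partition}: because $\theta_{h,i},\theta\in\cP_{h,i}$ and the cell has diameter at most $\sqrt{\Dtheta}\,2^{-h}$, we have $\|\theta_{h,i}-\theta\|\leq \sqrt{\Dtheta}\,2^{-h}$, whence $\|\theta_{h,i}-\theta\|^\alpha\leq(\sqrt{\Dtheta})^\alpha 2^{-\alpha h}$ using monotonicity of $t\mapsto t^\alpha$ for $\alpha>0$. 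Substituting this into the previous display yields exactly
$$\left|\DPR(\theta_{h,i},\theta)-\PR(\theta)\right|\leq (\sqrt{\Dtheta})^\alpha L_z\varepsilon\,2^{-\alpha h},$$
which is the claim.

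There is essentially no genuine obstacle here; the statement is a one-line consequence of the two earlier results once the membership $\theta_{h,i}\in\cP_{h,i}$ is noted. The only point worth stating carefully is that \Cref{lemma:lipschitz} is symmetric in its arguments up to the choice of which decision's distribution defines the decoupled risk, so I would make explicit that I am applying it with $\theta_{h,i}$ playing the role of the decision whose distribution $\cD(\theta_{h,i})$ governs the expectation, matching the definition $\DPR(\theta_{h,i},\theta)=\bbE_{z\sim\cD(\theta_{h,i})}[f(\theta,z)]$ used throughout. Everything else is routine.
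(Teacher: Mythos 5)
Your proof is correct and follows essentially the same route as the paper's: both apply \Cref{lemma:lipschitz} to the pair $(\theta_{h,i},\theta)$ and then bound $\|\theta_{h,i}-\theta\|^\alpha$ by $(\sqrt{\Dtheta})^\alpha 2^{-\alpha h}$ via the diameter bound of \Cref{assumption:partition}, using $\theta_{h,i},\theta\in\cP_{h,i}$. Your explicit remark that $\theta_{h,i}\in\cP_{h,i}$ by the construction in the $\mathrm{Open}$ subroutine is a point the paper leaves implicit, but otherwise the arguments coincide.
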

\begin{proof}
Note that
$$\DPR(\theta_{h,i},\theta) \geq \PR(\theta)-L_z\varepsilon\|\theta_{h,i}-\theta\|^\alpha\geq \PR(\theta)-(\sqrt{\Dtheta})^\alpha L_z\varepsilon2^{-\alpha h}$$
where the first inequality is from Lemma~\ref{lemma:lipschitz} and the second inequality follows from~\Cref{assumption:partition} with $\theta_{h,i},\theta\in\cP_{h,i}$. Similarly, we can argue that
$ \DPR(\theta_{h,i}, \theta)\leq \PR(\theta)+L_z\varepsilon\|\theta_{h,i}-\theta\|^\alpha\leq \PR(\theta)+(\sqrt{\Dtheta})^\alpha L_z\varepsilon2^{-\alpha h},$
as required.
\end{proof}

\begin{lemma}\label{lemma:cell-approximation0}
Let $\cP_{h,i}\in \cL_{h}$. Then 
$$\PR(\theta_{h,i})\leq \inf_{\theta\in \cP_{h,i}}\PR(\theta)+2(2\sqrt{\Dtheta})^\alpha L_z\varepsilon2^{-\alpha h}.$$
\end{lemma}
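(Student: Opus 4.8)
The plan is to exploit the specific rule by which the representative $\theta_{h,i}$ is chosen. When the parent cell $\cP_{h-1,j}\supseteq\cP_{h,i}$ is opened, $\theta_{h,i}$ is taken to be a minimizer of $\theta\mapsto\DPR(\theta_{h-1,j},\theta)$ over $\cP_{h,i}$, where $\theta_{h-1,j}$ is the parent's representative. The intuition is that minimizing the \emph{decoupled} risk $\DPR(\theta_{h-1,j},\cdot)$ is, up to a controlled additive error, the same as minimizing the true performative risk $\PR(\cdot)$ over the cell, with the error governed by the diameter of the parent cell.

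The key observation I would use is to invoke the approximation bound at depth $h-1$ rather than $h$. Since every $\theta\in\cP_{h,i}$ also lies in the parent cell $\cP_{h-1,j}$, and since $\theta_{h-1,j}\in\cP_{h-1,j}$ as well, Assumption~\ref{assumption:partition} gives $\|\theta_{h-1,j}-\theta\|\leq\sqrt{D}2^{-(h-1)}=2\sqrt{D}2^{-h}$, so Lemma~\ref{lemma:lipschitz} yields $|\DPR(\theta_{h-1,j},\theta)-\PR(\theta)|\leq(2\sqrt{D})^\alpha L_z\varepsilon2^{-\alpha h}$ for all such $\theta$. Equivalently, applying Lemma~\ref{lemma:PR-approximation} at depth $h-1$ gives the same estimate: the factor $2^\alpha$ coming from $2^{-\alpha(h-1)}=2^\alpha 2^{-\alpha h}$ is exactly what upgrades the constant $(\sqrt{D})^\alpha$ to $(2\sqrt{D})^\alpha$.

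Then I would chain three inequalities. Fix any $\theta^*\in\cP_{h,i}$. Applying the bound above with $\theta=\theta_{h,i}$ gives $\PR(\theta_{h,i})\leq\DPR(\theta_{h-1,j},\theta_{h,i})+(2\sqrt{D})^\alpha L_z\varepsilon2^{-\alpha h}$; the defining argmin property of $\theta_{h,i}$ gives $\DPR(\theta_{h-1,j},\theta_{h,i})\leq\DPR(\theta_{h-1,j},\theta^*)$; and the bound applied again with $\theta=\theta^*$ gives $\DPR(\theta_{h-1,j},\theta^*)\leq\PR(\theta^*)+(2\sqrt{D})^\alpha L_z\varepsilon2^{-\alpha h}$. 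Composing the three yields $\PR(\theta_{h,i})\leq\PR(\theta^*)+2(2\sqrt{D})^\alpha L_z\varepsilon2^{-\alpha h}$, and taking the infimum over $\theta^*\in\cP_{h,i}$ produces the claimed bound.

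The only delicate point is the edge case $h=0$ (the root $\cP_{0,1}$), where $\theta_{0,1}$ is chosen arbitrarily and has no parent, so the argmin argument does not apply and the bound need not hold for an arbitrary root representative. Since the lemma is only ever invoked for children of opened cells, i.e.\ at depths $h\geq1$ (in particular for the depth-$(\bot_{\hmax}+1)$ cell containing $\thetaPO$ in the proof of Lemma~\ref{lemma:regret-bound0}), I would either state the claim for $h\geq1$ or note that depth $0$ is irrelevant to the downstream regret analysis. Beyond this, I anticipate no real obstacle: the remaining work is just careful bookkeeping of the constant $2(2\sqrt{D})^\alpha$, whose only subtlety is remembering to apply the approximation bound at the parent depth $h-1$, where cells have twice the diameter, rather than at depth $h$.
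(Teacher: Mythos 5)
Your proof is correct and is essentially identical to the paper's own argument: both chain the decoupled-risk approximation applied at the parent depth $h-1$ (which is exactly where the constant $(2\sqrt{\Dtheta})^\alpha$ arises, via $2^{-\alpha(h-1)}=2^\alpha 2^{-\alpha h}$) with the argmin property of $\theta_{h,i}$ minimizing $\DPR(\theta_{h-1,j},\cdot)$ over $\cP_{h,i}$. Your remark about the root case $h=0$, where $\theta_{0,1}$ is arbitrary and has no parent, is a valid refinement the paper leaves implicit, since the lemma is only ever invoked at depths $h\geq 1$ in the downstream analysis.
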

\begin{proof}
Let $\theta^\star_{h,i}\in \argmin_{\theta\in \cP_{h,i}}\PR(\theta)$, and let $\cP_{h-1,j}$ be the parent cell of $\cP_{h,i}$. Note that
$$\PR(\theta_{h,i}) \leq \DPR(\theta_{h-1,j},\theta_{h,i}) + (2\sqrt{\Dtheta})^\alpha L_z\varepsilon2^{-\alpha h}\leq \DPR(\theta_{h-1,j},\theta^\star_{h,i})+ (2\sqrt{\Dtheta})^\alpha L_z\varepsilon2^{-\alpha h}$$
where the first inequality follows from Lemma~\ref{lemma:PR-approximation} and the second inequality holds due to our choice of $\theta_{h,i}$ minimizing $\DPR(\theta_{h-1,j},\theta)$ over $\theta\in\cP_{h,i}$.
Lastly, by 
Lemma~\ref{lemma:PR-approximation}, we have
$$\DPR(\theta_{h-1,j},\theta^\star_{h,i})\leq \PR(\theta_{h,i}^\star) + (2\sqrt{\Dtheta})^\alpha L_z\varepsilon2^{-\alpha h}.$$
Consequently, it follows that
$\PR(\theta_{h,i})\leq \PR(\theta^\star_{h,i})+ 2(2\sqrt{\Dtheta})^\alpha L_z\varepsilon2^{-\alpha h},$
as required.
\end{proof}

\subsection{Proof of \Cref{thm:full-feedback}}

Recall that $\bot_{h}$ is defined on the depth of the deepest cell containing~$\thetaPO$ opened until \Cref{algorithm0} finishes opening cells of depth $h$.
\begin{lemma}\label{lemma:depth0}
Let $d$ denote the $((2\sqrt{\Dtheta})^\alpha L_z\varepsilon, 2^{-\alpha},1)$-near-optimality dimension. For any $h\in[\hmax]$, if $\hmax/h\geq 2^{\alpha hd}$, then $\bot_{h}=h$ with $\bot_{0}=0$.
\end{lemma}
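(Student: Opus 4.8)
The plan is to prove \Cref{lemma:depth0} by induction on $h$, tracking how deep \Cref{algorithm0} has dug into the chain of cells containing $\thetaPO$. The base case $\bot_0 = 0$ is immediate, since $\cL_0 = \{\cP_{0,1}\}$ and $\cP_{0,1} = \Theta$ trivially contains $\thetaPO$ at depth $0$. For the inductive step, I would assume that after finishing depth $h-1$ the deepest opened cell containing $\thetaPO$ is at depth $\bot_{h-1} = h-1$, so that the cell $\cP_{h-1,j}$ containing $\thetaPO$ has been opened; opening it places its child cell $\cP_{h,i^\star}$ (the one containing $\thetaPO$) into $\cL_h$. The goal is then to show that $\cP_{h,i^\star}$ is among the $\lfloor \hmax/h\rfloor$ cells selected for opening at depth $h$, which would give $\bot_h = h$.

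The key step is the selection argument. At depth $h$, \Cref{algorithm0} opens the $\lfloor \hmax/h\rfloor$ cells in $\cL_h$ with the smallest values of $\PR(\theta_{h,i})$. To guarantee that $\cP_{h,i^\star}$ is selected, I would argue that the number of cells $\cP_{h,i}\in\cL_h$ whose representative value $\PR(\theta_{h,i})$ is \emph{at least as good as} that of $\cP_{h,i^\star}$ cannot exceed $\lfloor \hmax/h\rfloor$. First, by \Cref{lemma:cell-approximation0} applied to the near-optimal cell, $\PR(\theta_{h,i^\star}) \leq \inf_{\theta\in\cP_{h,i^\star}}\PR(\theta) + 2(2\sqrt{\Dtheta})^\alpha L_z\varepsilon 2^{-\alpha h} \leq \PR(\thetaPO) + 2\nu 2^{-\alpha h}$ where $\nu = (2\sqrt{\Dtheta})^\alpha L_z\varepsilon$, using that $\cP_{h,i^\star}$ contains $\thetaPO$. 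So $\cP_{h,i^\star}$ has a representative value close to the global optimum. Now any competing cell $\cP_{h,i}$ with $\PR(\theta_{h,i}) \leq \PR(\theta_{h,i^\star})$ must, again by \Cref{lemma:cell-approximation0} read in the other direction, satisfy $\inf_{\theta\in\cP_{h,i}}\PR(\theta) \leq \PR(\theta_{h,i}) \leq \PR(\thetaPO) + 2\nu 2^{-\alpha h}$. Matching the constant $6\nu\rho^h$ in \Cref{def:near-optimality} with $\rho = 2^{-\alpha}$ (note $2\nu 2^{-\alpha h} \leq 6\nu 2^{-\alpha h} = 6\nu\rho^h$), such a cell is counted by $\cN_h(6\nu\rho^h)$, so the number of competing cells is at most $\cN_h(6\nu 2^{-\alpha h}) \leq \rho^{-dh} = 2^{\alpha h d}$ by the definition of the $(\nu, 2^{-\alpha}, 1)$-near-optimality dimension with $C=1$.

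The final step is to invoke the hypothesis $\hmax/h \geq 2^{\alpha h d}$: the number of cells that could possibly be opened before $\cP_{h,i^\star}$ is bounded by $2^{\alpha h d} \leq \hmax/h$, and since $\lfloor \hmax/h\rfloor \geq 2^{\alpha h d}$ holds (as $2^{\alpha h d}$ is an integer when $\alpha h d$ is an integer, or more carefully because the count of competing cells is an integer bounded by $\hmax/h$ and hence by $\lfloor\hmax/h\rfloor$), the cell $\cP_{h,i^\star}$ is guaranteed to be within the top $\lfloor\hmax/h\rfloor$ and therefore gets opened. Hence $\bot_h = h$, completing the induction. The main obstacle I anticipate is the careful bookkeeping at the boundary between the integer count of competing cells, the real-valued bound $2^{\alpha h d}$, and the floor $\lfloor\hmax/h\rfloor$: I need the strict/weak inequalities to line up so that $\cP_{h,i^\star}$ is not displaced by a tie, which likely requires counting cells strictly better than $\cP_{h,i^\star}$ and absorbing $\cP_{h,i^\star}$ itself into the allotted budget. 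A secondary subtlety is confirming that $\thetaPO$'s containing cell at depth $h$ indeed landed in $\cL_h$, which relies on the inductive hypothesis that its parent at depth $h-1$ was opened.
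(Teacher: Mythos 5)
Your proposal is correct and follows essentially the same route as the paper's proof: induction on depth with $\bot_0=0$, \Cref{lemma:cell-approximation0} applied to the cell containing $\thetaPO$ to get $\PR(\theta_{h,i^\star})\leq \PR(\thetaPO)+2\nu 2^{-\alpha h}$, counting all competing cells against $\cN_h(6\nu 2^{-\alpha h})\leq 2^{\alpha h d}$, and resolving the floor/tie bookkeeping by integrality of the count (the paper phrases this as a contradiction — if $\cP_{h,i^\star}$ were displaced there would be $\lfloor\hmax/h\rfloor+1\geq 2^{\alpha h d}+1$ near-optimal cells — which is the same counting you do directly, and it likewise uses $h'\leq h$ monotonicity to supply the hypothesis at intermediate depths).
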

\begin{proof}
Let $h\in[\hmax]$, and assume that $h$ satisfies the condition of the lemma. Then we will argue by induction that $\bot_{h'}=h'$ for all $h'\in [h]$, thereby proving that $\bot_{h}=h$. 

Note that $\cP_{0,1}=\Theta$ contains $\thetaPO$ and $\cP_{0,1}$ is opened, so $\bot_{0}=0$. Next, we assume that $\bot_{h'-1}=h'-1$ for some $h'\in[h]$. Then it is sufficient to show that $\bot_{h'}=h'$. Let $i^\star_{h'-1}$ denote the index such that $\cP_{h'-1,i^\star_{h'-1}}$ is the cell containing $\thetaPO$ at depth $h'-1$. \Cref{algorithm0} opens $\lfloor \hmax/h'\rfloor$ cells from depth $h'$ cells. Suppose for a contradiction that cell $\cP_{h',i^\star_{h'}}$ is not one of them. This implies that for each solution $\theta_{h',i}$ of the $\lfloor \hmax/h'\rfloor$ cells from depth $h'$, we have $\PR(\theta_{h',i})\leq \PR(\theta_{h',i^\star_{h'}}).$ Consequently, it follows that
$$\PR(\theta_{h',i})\leq \PR(\theta_{h',i^\star_{h'}})\leq \PR(\thetaPO)+2(2\sqrt{\Dtheta})^\alpha L_z\varepsilon2^{-\alpha h'}$$
where the second inequality follows from~Lemma~\ref{lemma:cell-approximation0} as $\thetaPO$ is contained in cell $\cP_{h',i^\star_{h'}}$. 
This implies that
$$\cN_h(6(2\sqrt{\Dtheta})^\alpha L_z\varepsilon2^{-\alpha h'})\geq \left\lfloor\frac{\hmax}{h'}\right\rfloor+1\geq \left\lfloor\frac{\hmax}{h}\right\rfloor+1\geq 2^{\alpha hd}+1\geq 2^{\alpha h'd}+1$$
where $\lfloor{\hmax}/{h'}\rfloor$ comes from cells $\cP_{h',i}$ and 1 is due to cell $\cP_{h',i^\star_{h'}}$ in the first inequality, the second and the fourth inequalities hold because $h'\leq h$, and the third inequality comes from the condition of the lemma. This in turn implies that $\cN_h(6(2\sqrt{\Dtheta})^\alpha L_z\varepsilon2^{-\alpha h'})>2^{\alpha h'd}$, a contradiction. Therefore, it follows that $\bot_{h'}=h'$. Then the induction argument shows that $\bot_{h}=h$, as required.
\end{proof}

Based on Lemma~\ref{lemma:depth0}, we prove Lemma~\ref{lemma:regret-bound0} that shows
$$\PR(\theta_T)- \PR(\thetaPO)\leq 2(2\sqrt{\Dtheta})^\alpha L_z\varepsilon2^{-\alpha (\bot_{\hmax}+1)}.$$
\begin{proof}[{\bfseries Proof of Lemma~\ref{lemma:regret-bound0}}]
Let $\cP_{\bot_{\hmax}+1,i^\star}$ be the cell at depth $\bot_{\hmax}+1$ containing $\thetaPO$.
Note that 
$$\PR(\theta_T)\leq \PR(\theta_{\bot_{\hmax}+1,i^\star}) \leq \PR(\thetaPO)+2(2\sqrt{\Dtheta})^\alpha L_z\varepsilon2^{-\alpha (\bot_{\hmax}+1)}$$
where the first inequality holds due to the choice of $\theta_T$ and the second inequality follows from Lemma~\ref{lemma:cell-approximation0}.
\end{proof}

For simplicity, we introduce notations $\rho$ and $\nu$ defined as 
$$\rho =  2^{-\alpha}\quad\text{and}\quad \nu = (2\sqrt{D})^\alpha L_z\varepsilon.$$
Moreover, we define $\bar h$ as the number satisfying
$$\frac{\hmax}{\bar h} =\rho^{-d\bar h}.$$
Note that if $d=0$, then $\bar h = \hmax$. If $d>0$, then 
$$\bar h = \frac{1}{d\log(1/\rho)}W\left(\hmax d\log(1/\rho)\right)$$
where $W(\cdot)$ denotes the Lambert $W$ function. 
\begin{lemma}[\citet{bartlett19a}]\label{lemma:bartlett0}
Let $d$ denote the $(\nu, \rho,1)$-near-optimality dimension. Then 
$\bot_{\hmax}+1\geq  \bar h.$
\end{lemma}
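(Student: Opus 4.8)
The plan is to link the defining equation of $\bar h$ to the sufficient condition for deep opening that \Cref{lemma:depth0} already supplies. First I would rewrite that condition in the notation of $\rho$. Since $\rho = 2^{-\alpha}$, we have $\rho^{-dh} = 2^{\alpha d h}$, so the hypothesis $\hmax/h \geq 2^{\alpha h d}$ of \Cref{lemma:depth0} is precisely $\hmax/h \geq \rho^{-dh}$, equivalently $\phi(h) \geq 1$ where $\phi(h) := (\hmax/h)\,\rho^{dh}$. The point of $\bar h$ is that it is exactly the root $\phi(\bar h) = 1$ of this quantity.

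Next I would establish that the opening condition holds at all integer depths up to $\bar h$. Taking logarithms, $\log \phi(h) = \log \hmax - \log h - \alpha d h \log 2$, whose derivative $-1/h - \alpha d \log 2$ is negative for $d>0$, so $\phi$ is strictly decreasing on $(0,\infty)$; combined with $\phi(\bar h)=1$ this gives $\phi(h)\geq 1$ for every $h \leq \bar h$. Hence the hypothesis of \Cref{lemma:depth0} is satisfied for each integer depth $h\in[\hmax]$ with $h\leq\bar h$. I would then set $h_0 := \lfloor \bar h\rfloor$ and check that $h_0$ is a legitimate depth: the bound $\phi(\hmax) = 2^{-\alpha d\hmax} < 1$ forces $\bar h < \hmax$, so $1\le h_0\le \hmax$ whenever $\bar h\ge 1$. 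Applying \Cref{lemma:depth0} at $h=h_0$ then yields $\bot_{h_0} = h_0$.

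Finally I would use the cumulative nature of $\bot_h$: every cell opened before \Cref{algorithm0} finishes depth $h_0$ is also opened before it finishes depth $\hmax\ge h_0$, so the deepest opened cell containing $\thetaPO$ only gets deeper, giving $\bot_{\hmax}\ge \bot_{h_0} = h_0 = \lfloor\bar h\rfloor \ge \bar h - 1$, i.e. $\bot_{\hmax}+1\ge \bar h$. The degenerate cases are immediate: when $d=0$ the defining equation gives $\bar h=\hmax$ and the condition $\hmax/h\ge 1$ holds at every depth, so $\bot_{\hmax}=\hmax=\bar h$; and if $\bar h<1$ the claim follows from $\bot_0=0$ and $\bot_{\hmax}+1\ge 1\ge \bar h$. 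The only delicate part is the bookkeeping around the floor and the monotonicity direction of $\phi$ and of $\bot_h$, so that $h_0$ is guaranteed to sit inside the region where \Cref{lemma:depth0} fires; no nontrivial computation beyond the strict monotonicity of $\phi$ is needed.
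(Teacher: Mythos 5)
Your proposal is correct and takes essentially the same route as the argument the paper defers to \citet{bartlett19a}: observe that the hypothesis of \Cref{lemma:depth0} is equivalent to $\phi(h)=(\hmax/h)\rho^{dh}\geq 1$, that $\phi$ is decreasing with $\phi(\bar h)=1$, apply the lemma at $h_0=\lfloor \bar h\rfloor$, and use monotonicity of $h\mapsto \bot_h$ to get $\bot_{\hmax}+1\geq \lfloor\bar h\rfloor+1\geq \bar h$. Your handling of the edge cases ($d=0$, where $\bar h=\hmax$ and the condition holds at every depth, and $\bar h<1$, where $\bot_0=0$ suffices) closes the only places the floor bookkeeping could fail, so the proof is complete.
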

Combining Lemmas~\ref{lemma:regret-bound0} and~\ref{lemma:bartlett0}, we are ready to provide the desired regret bounds on \Cref{algorithm0}.
\begin{proof}[{\bfseries Proof of \Cref{thm:full-feedback}}]
As $\bar h =\hmax$ when $d=0$ and $\bar h = W(\hmax \alpha d\log 2)/\alpha d\log 2$, it follows directly from Lemmas~\ref{lemma:regret-bound0} and~\ref{lemma:bartlett0} that
$$\PR(\theta) - \PR(\thetaPO) \leq \begin{cases}
2(2\sqrt{\Dtheta})^\alpha L_z\varepsilon2^{-\alpha \hmax},& \text{if $d=0$},\\
2(2\sqrt{\Dtheta})^\alpha L_z\varepsilon e^{-(1/d)W\left(\hmax \alpha d\log 2\right)}, &\text{if $d>0$.}
\end{cases}$$
Lastly, \cite{Hoorfar2008} showed that if $x\geq e$, then $W(x)\geq \log(x/\log (x))$. Hence, if $d>0$ and $\hmax \alpha d \log 2\geq e$, then $\theta$ satisfies
$$\PR(\theta) - \PR(\thetaPO) \leq 2(2\sqrt{\Dtheta})^\alpha L_z\varepsilon\left(\frac{\hmax \alpha d \log 2}{\log(\hmax \alpha d \log 2)}\right)^{-\frac{1}{d}},$$
as required.
\end{proof}

\section{Regret Analysis of \SOOP}\label{appendix:data-driven-proof}

\subsection{Total Number of Solution Deployments}

Recall that
$$\hmax = \left\lfloor\frac{T}{2^{\Dtheta+1}(\log_2T+1)^2}\right\rfloor\quad\text{and}\quad \pmax = \lfloor\log_2\hmax\rfloor.$$

\begin{lemma}\label{lemma:number}
The total number of solution deployments before the cross-validation phase is at most $3T/4$, and in the cross-validation phase, the total number of solution deployments is at most $T/4$.
\end{lemma}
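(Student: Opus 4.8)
The plan is to count solution deployments phase by phase and call by call, and then compare each running total against the stated fractions of $T$. The basic building block is the subroutine $\mathrm{Open}(\cP_{h,i},n)$: under the dyadic hierarchical partitioning of \Cref{assumption:partition} every cell has at most $2^{\Dtheta}$ children, and each child's representative $\theta_{h+1,j}$ is deployed exactly $n$ times, so a single call to $\mathrm{Open}(\cdot,n)$ accounts for at most $2^{\Dtheta}n$ deployments. With this I would split the pre-cross-validation count into three pieces: the $\hmax$ deployments of $\theta_{0,1}$ in the initialization phase, the single root call $\mathrm{Open}(\cP_{0,1},\hmax)$ costing at most $2^{\Dtheta}\hmax$, and the nested double loop of the exploration phase.

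The heart of the argument is bounding the exploration loop $\sum_{h=1}^{\hmax}\sum_{p=0}^{\lfloor\log_2(\hmax/h)\rfloor}\lfloor\hmax/(h2^p)\rfloor\cdot 2^{\Dtheta}2^p$, where the factor $\lfloor\hmax/(h2^p)\rfloor$ counts the opened cells at level $(h,p)$ and $2^{\Dtheta}2^p$ is the per-cell cost of $\mathrm{Open}(\cdot,2^p)$. The key observation is that the floor removes the dependence on $p$: one has $\lfloor\hmax/(h2^p)\rfloor\,2^p\le \hmax/h$, so each inner summand is at most $2^{\Dtheta}\hmax/h$, uniformly in $p$. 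Since the inner sum has at most $\log_2(\hmax/h)+1\le \log_2 T+1$ terms, the whole loop is at most $(\log_2 T+1)\,2^{\Dtheta}\hmax\sum_{h=1}^{\hmax}1/h=(\log_2 T+1)\,2^{\Dtheta}\hmax\,\overline{\log}\,\hmax$. Bounding the harmonic number by $\overline{\log}\,\hmax\le 1+\ln\hmax\le 1+\log_2\hmax\le \log_2 T+1$ gives a loop cost of at most $(\log_2 T+1)^2\,2^{\Dtheta}\hmax$, which collapses to $T/2$ after substituting $\hmax\le T/(2^{\Dtheta+1}(\log_2 T+1)^2)$. The remaining two pieces combine as $(1+2^{\Dtheta})\hmax\le T/(\log_2 T+1)^2\le T/4$ for $T\ge 2$, using $1+2^{\Dtheta}\le 2^{\Dtheta+1}$. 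Hence the pre-cross-validation total is at most $T/2+T/4=3T/4$.

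For the cross-validation phase the count is immediate: the loop runs over $p\in[0:\pmax]$ and deploys $\theta_T(p)$ exactly $\hmax$ times for each $p$, giving $(\pmax+1)\hmax$ deployments. Using $\pmax=\lfloor\log_2\hmax\rfloor\le\log_2 T$ together with $\hmax\le T/(2^{\Dtheta+1}(\log_2 T+1)^2)$ yields $(\pmax+1)\hmax\le(\log_2 T+1)\hmax\le T/(2^{\Dtheta+1}(\log_2 T+1))\le T/4$. I expect the only genuine obstacle to be the bookkeeping in the exploration loop: one must verify that the floors never break the per-level inequality $\lfloor\hmax/(h2^p)\rfloor\,2^p\le\hmax/h$, and that the two logarithmic factors produced---one from the geometric sweep over $p$ and one from the harmonic sum over $h$---line up exactly with the $(\log_2 T+1)^2$ planted in the denominator of $\hmax$, so that the dominant term is precisely $T/2$. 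Everything else reduces to routine arithmetic.
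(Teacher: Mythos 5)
Your proof is correct and follows essentially the same route as the paper's: drop the floors via $\lfloor \hmax/(h2^p)\rfloor 2^p \le \hmax/h$, pay one $(\log_2 T+1)$ factor for the sweep over $p$ and one for the harmonic sum over $h$, and let the $(\log_2 T+1)^2$ in the definition of $\hmax$ absorb both, with the cross-validation count handled identically. The only cosmetic difference is that you group the initialization deployments and the root call as a single $(1+2^{\Dtheta})\hmax \le T/4$ term, where the paper bounds them separately as $T/8+T/8$; both yield the same $3T/4$ total.
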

\begin{proof}
Note that as $T\geq 2$, we have $\log_2T+1\geq 2$, in which case
$\hmax\leq {T}/{2^{\Dtheta+3}}.$ Hence, we deploy solution $\theta_{0,1}$ at most $T/2^{\Dtheta+3}\leq T/8$ times. Moreover, we open $\cP_{0,1}$ at most $T/2^{\Dtheta+3}$ times, and since $\cP_{0,1}$ has $2^{\Dtheta}$ child cells, it corresponds to at most ${T}/8$ solution deployments. Next, during the exploration phase, we make  $\sum_{h=1}^{\hmax}\sum_{p=0}^{\pmax}\lfloor{\hmax}/{h2^p}\rfloor 2^p$ openings. Here,
$$\sum_{h=1}^{\hmax}\sum_{p=0}^{\pmax}\left\lfloor \frac{\hmax}{h2^p}\right\rfloor\cdot 2^p\leq \sum_{h=1}^{\hmax}\sum_{p=0}^{\pmax}\frac{\hmax}{h}= (\pmax+1)\hmax\sum_{h=1}^{\hmax}\frac{1}{h}\leq \hmax(\pmax+1)^2\leq \frac{T}{2^{\Dtheta+1}}$$
where the last inequality holds due to $\pmax \leq \log_2T$. Since each opening requires $2^{\Dtheta}$ solution deployments, it incurs $T/2$ solution deployments. In total, before the cross-validation phase, we make $T/8+T/8 + T/2 = 3T/4$ solution deployments. 

In the cross-validation phase, the number of solution deployments is given by
$$\hmax (\pmax + 1)\leq \frac{T}{2^{\Dtheta+1}(\log_2T+1)}\leq \frac{T}{2^{\Dtheta+2}}\leq\frac{T}{4},$$
as required. 
\end{proof}

\subsection{Rademacher Complexity-Based Concentration Bounds for Estimating the Performative Risk}

In this section, we prove Lemma~\ref{lemma:clean2} which shows that the clean event holds with probability at least $1-\delta$, i.e., $\bbP[\Eclean]\geq 1-\delta$.
\begin{proof}[{\bfseries Proof of Lemma~\ref{lemma:clean2}}]
By Lemma~\ref{lemma:number}, \Cref{algorithm1} deploys at most $T$ solutions. Let $\ndistinct$ denote the number of distinct solutions deployed by \Cref{algorithm1}. As new solutions are deployed during the exploration phase, we have $\ndistinct\leq 3T/4$ by Lemma~\ref{lemma:number}. Among the $\ndistinct$ solutions, we use notation $(h(s),i(s))$ to indicate the cell $\cP_{h(s),i(s)}$ containing the $s$th deployed solution for $1\leq s\leq \ndistinct$. As $\hmax$ is fixed, $\ndistinct$, $h(s)$, and $\ndeploy_{h(s),i(s)}$ are all deterministic functions of $s$. In particular, we use notation $\ndeploy_s:=\ndeploy_{h(s),i(s)}$ to emphasize that $\ndeploy_{h(s),i(s)}$ is deterministic in $s$. Then we maintain a \emph{virtual tape of samples} for each solution $\theta$. Basically, for each solution $\theta$, we maintain $\{z_j^{\theta}:j=1,\ldots, T m_0\}$, %
and if $\theta$ becomes the $s$th solution deployed, then we use $\ndeploy_sm_0$ samples in $\{z_j^{\theta}:j=(\ndeploy_{1}+\cdots +\ndeploy_{s-1})m_0+1,\ldots, (\ndeploy_{1}+\cdots +\ndeploy_{s})m_0\}$ to estimate $\Dhat(\theta)$.
For $1\leq s\leq \ndistinct$, let us define $\Eclean^s$ as the event that 
$$\sup_{\theta\in \cP_{s}}\left|\DPRhat(\theta_{s},\theta) - \DPR(\theta_{s},\theta)\right|\leq \frac{2\mathfrak{C}^*(f) + 2\sqrt{\log(T/\delta)}}{\sqrt{\ndeploy_sm_0}}$$
where $\theta_s$ denotes the $s$th solution deployed $\theta_{h(s),i(s)}$, $\cP_s$ denotes $\cP_{h(s),i(s)}$ containing the $s$th solution deployed, and $\ndeploy_s$ denotes the number of times solution $\theta_{h(s),i(s)}$ is deployed. Moreover, for $p\in[0:\pmax]$, let us define $\Eclean^{T,p}$ as the event that
$$\left|\PRhat(\theta_{T}(p)) - \PR(\theta_{T}(p)) \right|\leq \frac{2\mathfrak{C}^*(f) + 2\sqrt{\log(T/\delta)}}{\sqrt{\hmax m_0}}.$$
Then we know that 
\begin{align*}
\bbP[\Eclean]&=\bbP[\Eclean^1\cap\cdots\cap \Eclean^{\ndistinct}\cap \Eclean^{T,0}\cap\cdots \cap \Eclean^{T,\pmax}]\\
&\geq 1 - \sum_{s=1}^{\ndistinct} \bbP[\neg \Eclean^s]-\sum_{p=0}^{\pmax} \bbP[\neg \Eclean^{T,p}]
\end{align*}
where the inequality is the union bound. For simplicity, let $J$ denote $J=\{(\ndeploy_{1}+\cdots +\ndeploy_{s-1})m_0+1,\ldots, (\ndeploy_{1}+\cdots +\ndeploy_{s})m_0\}$. Note that
\begin{align*}
\bbP[\neg \Eclean^s]
&=\bbP\left[\sup_{\theta\in \cP_{s}}\left|\DPRhat(\theta_{s},\theta) - \DPR(\theta_{s},\theta)\right|> \frac{2\mathfrak{C}^*(f) + 2\sqrt{\log(T/\delta)}}{\sqrt{\ndeploy_sm_0}}\right]\\
&=\bbP\left[\sup_{\theta\in \cP_{s}}\left|\frac{1}{\ndeploy_sm_0}\sum_{j\in J}f(\theta,z_{j}^{\theta_s}) - \DPR(\theta_{s},\theta)\right|> \frac{2\mathfrak{C}^*(f) + 2\sqrt{\log(T/\delta)}}{\sqrt{\ndeploy_sm_0}}\right]\\
&\leq \bbP\left[\sup_{\theta\in \Theta}\left|\frac{1}{\ndeploy_sm_0}\sum_{j\in J}f(\theta,z_{j}^{\theta_s}) - \DPR(\theta_{s},\theta)\right|> \frac{2\mathfrak{C}^*(f) + 2\sqrt{\log(T/\delta)}}{\sqrt{\ndeploy_sm_0}}\right]
\end{align*}
where the inequality holds because $\cP_s\subseteq \Theta$. Here, the right-most side of this inequality is equal to
$$\bbE_{\bar \theta\sim \theta_s}\left[\bbP\left[\sup_{\theta\in \Theta}\left|\frac{1}{\ndeploy_sm_0}\sum_{j\in J}f(\theta,z_{j}^{\bar\theta}) - \DPR(\bar\theta,\theta)\right|> \frac{2\mathfrak{C}^*(f) + 2\sqrt{\log(T/\delta)}}{\sqrt{\ndeploy_sm_0}}\mid \theta_s=\bar \theta\right] \right].$$
Therefore, to provide an upper bound on $\bbP[\neg \Eclean^s]$, it suffices to provide an upper bound on
\begin{equation}\label{eq:clean1}\bbP\left[\sup_{\theta\in \Theta}\left|\frac{1}{\ndeploy_sm_0}\sum_{j\in J}f(\theta,z_{j}^{\bar\theta}) - \DPR(\bar\theta,\theta)\right|> \frac{2\mathfrak{C}^*(f) + 2\sqrt{\log(T/\delta)}}{\sqrt{\ndeploy_sm_0}}\mid \theta_s=\bar \theta\right]\end{equation}
for every $\bar \theta \in\Theta$. Note that data samples in $\{z_j^{\bar \theta}:j=(\ndeploy_{1}+\cdots +\ndeploy_{s-1})m_0+1,\ldots, (\ndeploy_{1}+\cdots +\ndeploy_{s})m_0\}$ are independent of the event that $\theta_s = \bar \theta$ because the samples are obtained after the $s$th solution for deployment is chosen. Therefore, the probability term~\eqref{eq:clean1} is equal to 
\begin{equation}\label{eq:clean2}
\bbP\left[\sup_{\theta\in \Theta}\left|\frac{1}{\ndeploy_sm_0}\sum_{j\in J}f(\theta,z_{j}^{\bar\theta}) - \DPR(\bar\theta,\theta)\right|> \frac{2\mathfrak{C}^*(f) + 2\sqrt{\log(T/\delta)}}{\sqrt{\ndeploy_sm_0}}\right].\end{equation}
What remains is to bound this probability term for every $\bar \theta\in \Theta$. By the bounded differences inequality and \Cref{assumption:bounded}, with probability at least $1-(\delta/T)$, we have
\begin{align}\label{eq:clean3}
\begin{aligned}
&\sup_{\theta\in \Theta}\left|\frac{1}{\ndeploy_sm_0}\sum_{j\in J}f(\theta,z_{j}^{\bar\theta}) - \DPR(\bar\theta,\theta)\right|\\
&\leq \bbE\left[\sup_{\theta\in \Theta}\left|\frac{1}{\ndeploy_sm_0}\sum_{j\in J}f(\theta,z_{j}^{\bar\theta}) - \DPR(\bar\theta,\theta)\right|\right]+\sqrt{\frac{2 \log(T/\delta)}{\ndeploy_s m_0}}.
\end{aligned}
\end{align}
Let $\epsilon_j$ denote i.i.d. Rademacher random variables. Then by a symmetrization argument, the right-hand side of~\eqref{eq:clean3} is at most 
\begin{align}\label{eq:clean4}
\begin{aligned}
&\bbE\left[\sup_{\theta\in \Theta}\left|\frac{1}{\ndeploy_sm_0}\sum_{j\in J}f(\theta,z_{j}^{\bar\theta}) - \DPR(\bar\theta,\theta)\right|\right]+\sqrt{\frac{2 \log(T/\delta)}{\ndeploy_s m_0}}\\
&\leq 2\cdot \bbE\left[\sup_{\theta\in \Theta}\left|\frac{1}{\ndeploy_sm_0}\sum_{j\in J}f(\theta,z_{j}^{\bar\theta})\cdot \epsilon_j\right|\right]+\sqrt{\frac{2 \log(T/\delta)}{\ndeploy_s m_0}}\\
&\leq \frac{2}{\sqrt{\ndeploy_sm_0}}\cdot \sup_{n\geq1}\sqrt{n}\cdot \bbE\left[\sup_{\theta\in \Theta}\left|\frac{1}{n}\sum_{j=1}^{n}f(\theta,z_{j}^{\bar\theta})\cdot \epsilon_j\right|\right]+\sqrt{\frac{2 \log(T/\delta)}{\ndeploy_s m_0}}\\
&\leq \frac{2 \mathfrak{C}^*(f) + 2\sqrt{\log(T/\delta)}}{\sqrt{\ndeploy_s m_0 }}
\end{aligned}
\end{align}
where $\{z_j^{\theta}\}_{j\in\mathbb{N}}$ denotes an infinite sequence of samples from $\cD(\theta)$ for $\theta\in \Theta$.
By~\eqref{eq:clean3} and~\eqref{eq:clean4}, the probability term~\eqref{eq:clean2} as well as~\eqref{eq:clean1} is at most $\delta/T$. Therefore, it follows that $\bbP[\neg \Eclean^s]\leq \delta/T$.

Next, we consider $\bbP[\neg \Eclean^{T,p}]$. Note that the total number of solution deployments during the exploration phase, denoted $\ntotal$ is deterministic. Note that $\theta_T(p)$ is deployed for $\hmax$ times and obtain samples $\{z_j^{\theta_T(p)}: j\in J' \}$ where $J'=\{(\ntotal+p\hmax)m_0+1,\ldots,(\ntotal+(p+1)\hmax)m_0\}$. Then we have
\begin{align*}
&\bbP[\neg \Eclean^{T,p}]\\
&=\bbP\left[\left|\PRhat(\theta_{T}(p)) - \PR(\theta_{T}(p)) \right|>\frac{2\mathfrak{C}^*(f) + 2\sqrt{\log(T/\delta)}}{\sqrt{\hmax m_0}}\right]\\
&=\bbP\left[\left|\frac{1}{\hmax m_0}\sum_{j\in J'}f(\theta_T(p),z_{j}^{\theta_T(p)}) - \DPR(\theta_{T}(p),\theta_T(p))\right|> \frac{2\mathfrak{C}^*(f) + 2\sqrt{\log(T/\delta)}}{\sqrt{\hmax m_0}}\right]\\
&\leq \bbP\left[\sup_{\theta\in\Theta}\left|\frac{1}{\hmax m_0}\sum_{j\in J'}f(\theta,z_{j}^{\theta_T(p)}) - \DPR(\theta_{T}(p),\theta)\right|> \frac{2\mathfrak{C}^*(f) + 2\sqrt{\log(T/\delta)}}{\sqrt{\hmax m_0}}\right].
\end{align*}
As before, we can argue that $\bbP[\neg \Eclean^{T,p}]\leq \delta/T$. Since $\ndistinct+(\pmax+1)\hmax \leq T$ by~Lemma~\ref{lemma:number}, it follows that $\bbP[\Eclean]\geq 1-\delta$ as $\ndistinct\leq T$.
\end{proof}

\subsection{Approximation Bounds under the data-driven setting}

In this section, we prove the following lemma analyzing the quality of the representative decision of each cell under the data-driven setting.

\begin{lemma}\label{lemma:cell-approximation}
Assume that $\Eclean$ holds for some $\delta\in(0,1)$. Let $\cP_{h,i}\in \cL_{h}$, and let $\cP_{h-1,j}$ be the parent cell of $\cP_{h,i}$. Then 
$$\PR(\theta_{h,i})\leq \inf_{\theta\in \cP_{h,i}}\PR(\theta)+2(2\sqrt{\Dtheta})^\alpha L_z\varepsilon2^{-\alpha h}+ \frac{4\mathfrak{C}^*(f) + 4\sqrt{\log(T/\delta)}}{\sqrt{\ndeploy_{h-1,j}m_0}}.$$
\end{lemma}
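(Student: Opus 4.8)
The plan is to replicate the proof of Lemma~\ref{lemma:cell-approximation0} for the full-feedback case, with the single modification that the representative $\theta_{h,i}$ now minimizes the \emph{empirical} decoupled risk $\DPRhat(\theta_{h-1,j},\cdot)$ rather than the true $\DPR(\theta_{h-1,j},\cdot)$. To bridge this gap I would invoke the clean event $\Eclean$ twice, to pass from $\DPR$ to $\DPRhat$ and back, while Lemma~\ref{lemma:PR-approximation} continues to control the smoothness/partition error exactly as before. First I would set $\theta^\star_{h,i}\in\argmin_{\theta\in\cP_{h,i}}\PR(\theta)$, so that $\PR(\theta^\star_{h,i})=\inf_{\theta\in\cP_{h,i}}\PR(\theta)$, and record that both $\theta_{h,i}$ and $\theta^\star_{h,i}$ lie in $\cP_{h,i}\subseteq\cP_{h-1,j}$.

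I would then chain the following five inequalities. (i) Lemma~\ref{lemma:PR-approximation}, applied to the parent cell $\cP_{h-1,j}$ at the point $\theta_{h,i}$ and using $(2\sqrt{\Dtheta})^\alpha2^{-\alpha h}=(\sqrt{\Dtheta})^\alpha2^{-\alpha(h-1)}$, gives $\PR(\theta_{h,i})\leq\DPR(\theta_{h-1,j},\theta_{h,i})+(2\sqrt{\Dtheta})^\alpha L_z\varepsilon2^{-\alpha h}$. (ii) The clean event, whose uniform bound over $\theta\in\cP_{h-1,j}$ carries the denominator $\sqrt{\ndeploy_{h-1,j}m_0}$, converts the truth into the estimate: $\DPR(\theta_{h-1,j},\theta_{h,i})\leq\DPRhat(\theta_{h-1,j},\theta_{h,i})+(2\mathfrak{C}^*(f)+2\sqrt{\log(T/\delta)})/\sqrt{\ndeploy_{h-1,j}m_0}$. (iii) The selection rule for $\theta_{h,i}$ gives $\DPRhat(\theta_{h-1,j},\theta_{h,i})\leq\DPRhat(\theta_{h-1,j},\theta^\star_{h,i})$. (iv) A second application of the clean event returns to the truth, $\DPRhat(\theta_{h-1,j},\theta^\star_{h,i})\leq\DPR(\theta_{h-1,j},\theta^\star_{h,i})+(2\mathfrak{C}^*(f)+2\sqrt{\log(T/\delta)})/\sqrt{\ndeploy_{h-1,j}m_0}$. (v) Lemma~\ref{lemma:PR-approximation} at $\theta^\star_{h,i}$ gives $\DPR(\theta_{h-1,j},\theta^\star_{h,i})\leq\PR(\theta^\star_{h,i})+(2\sqrt{\Dtheta})^\alpha L_z\varepsilon2^{-\alpha h}$. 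Summing, the two smoothness terms combine into $2(2\sqrt{\Dtheta})^\alpha L_z\varepsilon2^{-\alpha h}$ and the two estimation terms into $(4\mathfrak{C}^*(f)+4\sqrt{\log(T/\delta)})/\sqrt{\ndeploy_{h-1,j}m_0}$, yielding exactly the claimed bound.

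The argument is essentially bookkeeping and I do not anticipate a genuine obstacle. The one point deserving care is justifying the use of the clean event for both $\theta_{h,i}$ and $\theta^\star_{h,i}$: this requires that both points belong to the parent cell $\cP_{h-1,j}$ (so that the supremum over $\cP_{h-1,j}$ in the definition of $\Eclean$ applies to each), and that the governing sample count is $\ndeploy_{h-1,j}$ --- the number of deployments of the \emph{parent} representative $\theta_{h-1,j}$, from whose samples $\DPRhat(\theta_{h-1,j},\cdot)$ is formed --- rather than $\ndeploy_{h,i}$. Getting this index right is precisely what makes the denominator $\sqrt{\ndeploy_{h-1,j}m_0}$ in the final estimation term correct.
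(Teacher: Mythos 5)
Your proposal is correct and matches the paper's own proof essentially line for line: the same five-step chain (Lemma~\ref{lemma:PR-approximation} at $\theta_{h,i}$, clean event to pass to $\DPRhat$, the selection rule for $\theta_{h,i}$, clean event back to $\DPR$, and Lemma~\ref{lemma:PR-approximation} at $\theta^\star_{h,i}$), with the same constants and the same denominator $\sqrt{\ndeploy_{h-1,j}m_0}$ tied to the parent cell's deployments. Your closing remark about why the clean event applies to both points (both lie in $\cP_{h,i}\subseteq\cP_{h-1,j}$, over which the supremum in $\Eclean$ is taken) is exactly the bookkeeping the paper relies on implicitly.
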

\begin{proof}
Let $\theta^\star_{h,i}\in \argmin_{\theta\in \cP_{h,i}}\PR(\theta)$.  By Lemma~\ref{lemma:PR-approximation},
$$\PR(\theta_{h,i}) \leq \DPR(\theta_{h-1,j},\theta_{h,i}) + (2\sqrt{\Dtheta})^\alpha L_z\varepsilon2^{-\alpha h}.$$
As $\Eclean$ holds, we have
$$\DPR(\theta_{h-1,j},\theta_{h,i})\leq \DPRhat(\theta_{h-1,j},\theta_{h,i}) + \frac{2\mathfrak{C}^*(f) + 2\sqrt{\log(T/\delta)}}{\sqrt{\ndeploy_{h-1,j}m_0}}.$$
Moreover,
$$\DPRhat(\theta_{h-1,j},\theta_{h,i})\leq \DPRhat(\theta_{h-1,j},\theta^\star_{h,i})\leq \DPR(\theta_{h-1,j},\theta^\star_{h,i})+ \frac{2\mathfrak{C}^*(f) + 2\sqrt{\log(T/\delta)}}{\sqrt{\ndeploy_{h-1,j}m_0}}$$
where the first inequality is due to our choice of $\theta_{h,i}$ minimizing $\DPRhat(\theta_{h-1,j},\theta)$ over $\theta\in\cP_{h,i}$ and the second inequality holds because $\Eclean$ holds. Lastly, by 
Lemma~\ref{lemma:PR-approximation},
$$ \DPR(\theta_{h-1,j},\theta^\star_{h,i})\leq \PR(\theta_{h,i}^\star) + (2\sqrt{\Dtheta})^\alpha L_z\varepsilon2^{-\alpha h}.$$
Consequently, it follows that
$$\PR(\theta_{h,i})\leq \PR(\theta^\star_{h,i})+ 2(2\sqrt{\Dtheta})^\alpha L_z\varepsilon2^{-\alpha h}+\frac{4\mathfrak{C}^*(f) + 4\sqrt{\log(T/\delta)}}{\sqrt{\ndeploy_{h-1,j}m_0}},$$
as required.
\end{proof}

\subsection{Proof of \Cref{thm:data-driven}}

Recall that $\bot_{h,p}$ is defined as the depth of the deepest cell containing the performative optimal solution~$\thetaPO$ opened for at least $2^p$ times until \Cref{algorithm1} finishes opening cells of depth $h$.

\begin{lemma}\label{lemma:depth}
Assume that the clean event $\Eclean$ holds for some $\delta\in(0,1)$, and let $d$ denote the $((2\sqrt{\Dtheta})^\alpha L_z\varepsilon, 2^{-\alpha},1)$-near-optimality dimension $d((2\sqrt{\Dtheta})^\alpha L_z\varepsilon,2^{-\alpha},1)$. For any $h\in[\lfloor\hmax/2^p\rfloor]$ and $p\in[0:\lfloor \log_2(\hmax/h)\rfloor]$, if the following condition holds, then $\bot_{h,p}=h$ with $\bot_{0,p}=0$.
$$\frac{2\mathfrak{C}^*(f) + 2\sqrt{\log(T/\delta)}}{\sqrt{2^pm_0}}\leq (2\sqrt{\Dtheta})^\alpha L_z\varepsilon 2^{-\alpha h}\quad \text{and}\quad \frac{\hmax}{h 2^p}\geq 2^{\alpha hd}.$$
\end{lemma}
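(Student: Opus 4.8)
The plan is to mirror the induction proof of Lemma~\ref{lemma:depth0}, but now to track the number of openings $2^p$ and to absorb the estimation errors permitted under $\Eclean$ into the sensitivity term. Fix the pair $(h,p)$ satisfying the two stated conditions and abbreviate $\nu=(2\sqrt{\Dtheta})^\alpha L_z\varepsilon$ and $E=(2\mathfrak{C}^*(f)+2\sqrt{\log(T/\delta)})/\sqrt{2^pm_0}$. First I would record two monotonicity consequences valid for every intermediate depth $h'\in[h]$: since $h'\le h$ and $\alpha,d\ge 0$, the first condition gives $E\le \nu 2^{-\alpha h}\le \nu 2^{-\alpha h'}$, and the second gives $\hmax/(h'2^p)\ge \hmax/(h2^p)\ge 2^{\alpha dh}\ge 2^{\alpha dh'}$. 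These are exactly the facts needed to run the argument at each depth.

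I would then argue by induction on $h'\in[h]$ that $\bot_{h',p}=h'$. For the base case, $\cP_{0,1}=\Theta$ contains $\thetaPO$ and is opened $\hmax\ge 2^p$ times (as $p\le\lfloor\log_2(\hmax/h)\rfloor\le\pmax$), so $\bot_{0,p}=0$. For the inductive step, suppose $\bot_{h'-1,p}=h'-1$, so the depth-$(h'-1)$ cell $\cP_{h'-1,j^\star}$ containing $\thetaPO$ is opened at least $2^p$ times; hence $\ndeploy_{h'-1,j^\star}\ge 2^p$, and via $\nopen_{h'-1,j^\star}=\ndeploy_{h',i^\star}$ the depth-$h'$ cell $\cP_{h',i^\star}$ containing $\thetaPO$ satisfies $\ndeploy_{h',i^\star}\ge 2^p$. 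If $\cP_{h',i^\star}$ is opened with some $2^{p'}\ge 2^p$ during the depth-$h'$ phase, then $\bot_{h',p}=h'$ immediately. Otherwise $\cP_{h',i^\star}$ is a candidate at the $p'=p$ step (it has $\nopen=0$ and $\ndeploy\ge 2^p$), and I would suppose for contradiction that it is still not among the $\lfloor\hmax/(h'2^p)\rfloor$ selected cells.

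Under this supposition, every selected cell $\cP_{h',i}$ has $\PRhat(\theta_{h',i})\le \PRhat(\theta_{h',i^\star})$ and, being a candidate, satisfies $\ndeploy_{h',i}\ge 2^p$. Since $\PRhat(\theta)=\DPRhat(\theta,\theta)$ for any representative $\theta$, applying $\Eclean$ to $\theta=\theta_{h',i}$ and $\theta=\theta_{h',i^\star}$ (both with $\ndeploy\ge 2^p$) yields $|\PRhat(\theta)-\PR(\theta)|\le E$, so I would chain $\PR(\theta_{h',i})\le \PRhat(\theta_{h',i})+E\le \PRhat(\theta_{h',i^\star})+E\le \PR(\theta_{h',i^\star})+2E$. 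Applying Lemma~\ref{lemma:cell-approximation} to $\cP_{h',i^\star}$ and bounding its additive error by $2E$ (using $\ndeploy_{h'-1,j^\star}\ge 2^p$) gives $\PR(\theta_{h',i^\star})\le \PR(\thetaPO)+2\nu 2^{-\alpha h'}+2E$. Combining and using $E\le \nu 2^{-\alpha h'}$, $\PR(\theta_{h',i})\le \PR(\thetaPO)+2\nu 2^{-\alpha h'}+4E\le \PR(\thetaPO)+6\nu 2^{-\alpha h'}$. Thus the $\lfloor\hmax/(h'2^p)\rfloor$ selected cells together with $\cP_{h',i^\star}$ give $\cN_{h'}(6\nu 2^{-\alpha h'})\ge \lfloor\hmax/(h'2^p)\rfloor+1>\hmax/(h'2^p)\ge 2^{\alpha dh'}$, contradicting $\cN_{h'}(6\nu 2^{-\alpha h'})\le 2^{\alpha dh'}$ from the $(\nu,2^{-\alpha},1)$-near-optimality dimension. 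Hence $\cP_{h',i^\star}$ is selected and opened with $2^p$, so $\bot_{h',p}=h'$, and the induction yields $\bot_{h,p}=h$.

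The main obstacle is the careful bookkeeping of all estimation-error contributions and confirming they total at most $4E$ — one $E$ each from the $\PRhat\!\leftrightarrow\!\PR$ conversions at the selected and optimal cells, and $2E$ from the additive error of Lemma~\ref{lemma:cell-approximation} — so that the first condition $E\le \nu 2^{-\alpha h}$ absorbs them into $4\nu 2^{-\alpha h'}$ and keeps the threshold in $\cN_{h'}$ exactly at the required $6\nu 2^{-\alpha h'}=6\nu\rho^{h'}$. A secondary subtlety is justifying $\ndeploy\ge 2^p$ for each relevant cell (the optimal cell, its parent, and every selected candidate) from the induction hypothesis and the candidate rule $\ndeploy_{h,i}\ge 2^p$, and treating separately the case where the optimal cell is already opened at some larger $p'>p$.
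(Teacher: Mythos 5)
Your proof is correct and follows essentially the same route as the paper's: induction on the depth $h'$, then a contradiction obtained by counting the selected cells plus the optimal cell among those that are $6\nu 2^{-\alpha h'}$-near-optimal and comparing against the $(\nu,2^{-\alpha},1)$-near-optimality dimension, with the clean event and Lemma~\ref{lemma:cell-approximation} supplying the identical error budget $2\nu 2^{-\alpha h'}+4E\leq 6\nu 2^{-\alpha h'}$. Your explicit treatment of the case where the optimal cell was already opened at some round $p'>p$, of the identity $\PRhat(\theta)=\DPRhat(\theta,\theta)$, and of why $\ndeploy\geq 2^p$ holds for the parent and candidate cells merely spells out steps the paper leaves implicit.
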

\begin{proof}
Let $(h,p)$ with $h\in[\lfloor\hmax/2^p\rfloor]$ and $p\in[0:\lfloor \log_2(\hmax/h)\rfloor]$ satisfy the condition of the lemma. Then we will argue by induction that $\bot_{h',p}=h'$ for all $h'\in [h]$, thereby proving that $\bot_{h,p}=h$. 

Note that $\cP_{0,1}=\Theta$ contains $\thetaPO$ and $\cP_{0,1}$ is opened $\hmax$ times with $\hmax\geq 2^{\pmax}$, so $\bot_{0,p}=0$. Next, we assume that $\bot_{h'-1,p}=h'-1$ for some $h'\in[h]$. Then it is sufficient to show that $\bot_{h',p}=h'$. Let $i^\star_{h'-1}$ denote the index such that $\cP_{h'-1,i^\star_{h'-1}}$ is the cell containing $\thetaPO$ at depth $h'-1$. By the induction hypothesis, cell $\cP_{h'-1,i^\star_{h'-1}}$ is opened at least $2^p$ times, i.e., $\nopen_{h'-1,i^\star_{h'-1}}\geq 2^p$. This implies that $\nopen_{h'-1,i^\star_{h'-1}}\geq 2^p$ because $\ndeploy_{h'-1,i^\star_{h'-1}}\geq 2^{p'}=\nopen_{h'-1,i^\star_{h'-1}}$ for some $p'$ according to the design of \Cref{algorithm1}. Let $i^\star_{h'}$ denote the index such that $\cP_{h',i^\star_{h'}}$ is the cell containing $\thetaPO$ at depth $h'$. This means that $\cP_{h',i^\star_{h'}}$ is a child cell of $\cP_{h'-1,i^\star_{h'-1}}$ and $\ndeploy_{h',i^\star_{h'}}=\nopen_{h'-1,i^\star_{h'-1}}\geq 2^p$.

We open $\lfloor \hmax/h'2^p\rfloor$ cells from depth $h'$ cells with at least $2^p$ deployments. Suppose for a contradiction that cell $\cP_{h',i^\star_{h'}}$ is not one of them. This implies that for each solution $\theta_{h',i}$ of the $\lfloor \hmax/h'2^p\rfloor$ cells with $2^p$ deployments from depth $h'$, we have $\PRhat(\theta_{h',i})\leq \PRhat(\theta_{h',i^\star_{h'}}).$
Moreover,  such $\theta_{h',i}$ satisfies the following.
\begin{align}\label{eq:lemma-1}
\begin{aligned}
\PR(\theta_{h',i})-(2\sqrt{\Dtheta})^\alpha L_z\varepsilon2^{-\alpha h'}&\leq \PR(\theta_{h',i})-(2\sqrt{\Dtheta})^\alpha L_z\varepsilon2^{-\alpha h}\\
&\leq \PR(\theta_{h',i})-\frac{2\mathfrak{C}^*(f) + 2\sqrt{\log(T/\delta)}}{\sqrt{2^pm_0}}
\end{aligned}
\end{align}
where the first inequality holds because $h'\leq h$ and the second inequality holds due to the condition of the lemma. Furthermore,
\begin{align}\label{eq:lemma-2}
\begin{aligned}
\PR(\theta_{h',i})-\frac{2\mathfrak{C}^*(f) + 2\sqrt{\log(T/\delta)}}{\sqrt{2^pm_0}}&\leq \PR(\theta_{h',i})-\frac{2\mathfrak{C}^*(f) + 2\sqrt{\log(T/\delta)}}{\sqrt{\ndeploy_{h',i}m_0}}\\
&\leq \PRhat(\theta_{h',i})\\
&\leq \PRhat(\theta_{h',i^\star_{h'}})
\end{aligned}
\end{align}
where the first inequality holds because $\ndeploy_{h',i}\geq 2^p$ and the second inequality holds due to the assumption that $\Eclean$ holds. Combining~\eqref{eq:lemma-1} and~\eqref{eq:lemma-2}, we deduce that
$$\PR(\theta_{h',i})-(2\sqrt{\Dtheta})^\alpha L_z\varepsilon2^{-\alpha h'}\leq\PRhat(\theta_{h',i^\star_{h'}}).$$
Similarly, we can argue that
$$\PR(\theta_{h',i^\star_{h'}})+(2\sqrt{\Dtheta})^\alpha L_z\varepsilon2^{-\alpha h'}\geq\PRhat(\theta_{h',i^\star_{h'}}).$$
Consequently, it follows that
\begin{align*}
\PR(\theta_{h',i})&\leq \PR(\theta_{h',i^\star_{h'}})+2 (2\sqrt{\Dtheta})^\alpha L_z\varepsilon2^{-\alpha h'}\\
&\leq \inf_{\theta\in\Theta}\PR(\theta)+4(2\sqrt{\Dtheta})^\alpha L_z\varepsilon2^{-\alpha h'}+ \frac{4\mathfrak{C}^*(f) + 4\sqrt{\log(T/\delta)}}{\sqrt{2^pm_0}}
\end{align*}
where the second inequality follows from~Lemma~\ref{lemma:cell-approximation}, $\ndeploy_{h'-1,i^\star_{h'-1}}\geq 2^p$, and $\thetaPO$ is contained in cell $\cP_{h',i^\star_{h'}}$. Furthermore, by the condition of this lemma, it follows that
$$\PR(\theta_{h',i})\leq \PR(\thetaPO)+6(2\sqrt{\Dtheta})^\alpha L_z\varepsilon2^{-\alpha h'}.$$
In addition, since  $\thetaPO$ is contained in cell $\cP_{h',i^\star_{h'}}$, Lemma~\ref{lemma:cell-approximation} implies that
$$\PR(\theta_{h',i^*})\leq \PR(\thetaPO)+4(2\sqrt{\Dtheta})^\alpha L_z\varepsilon2^{-\alpha h'}.$$
This implies that
$$\cN_h(6(2\sqrt{\Dtheta})^\alpha L_z\varepsilon2^{-\alpha h'})\geq \left\lfloor\frac{\hmax}{h'2^p}\right\rfloor+1\geq \left\lfloor\frac{\hmax}{h2^p}\right\rfloor+1\geq 2^{\alpha hd}+1\geq 2^{\alpha h'd}+1$$
where $\lfloor{\hmax}/{h'2^p}\rfloor$ comes from cells $\cP_{h',i}$ and 1 is due to cell $\cP_{h',i^\star_{h'}}$ in the first inequality, the second and the fourth inequalities hold because $h'\leq h$, and the third ineuality comes from  the condition of the lemma. This in turn implies that $\cN_h(6(2\sqrt{\Dtheta})^\alpha L_z\varepsilon2^{-\alpha h'})>2^{\alpha h'd}$, a contradiction. Therefore, it follows that $\bot_{h',p}=h'$. Then the induction argument shows that $\bot_{h,p}=h$, as required.
\end{proof}

Next, we prove Lemma~\ref{lemma:regret-bound} which shows that
\begin{align*}
&\PR(\theta_T)- \PR(\thetaPO)\\
&\leq 2(2\sqrt{\Dtheta})^\alpha L_z\varepsilon2^{-\alpha (\bot_{\hmax,p}+1)}+\frac{8\mathfrak{C}^*(f) + 8\sqrt{\log(T/\delta)}}{\sqrt{2^pm_0}} + \frac{4\mathfrak{C}^*(f) + 4\sqrt{\log(T/\delta)}}{\sqrt{\hmax m_0}}.
\end{align*}
\begin{proof}[{\bfseries Proof of Lemma \ref{lemma:regret-bound}}]
Let $p\in[0:\pmax]$, and let 
$$(h,i)\in \argmin\limits_{(h,i)}\left\{\PRhat(\theta_{h,i}):h\in[\hmax+1],\cP_{h,i}\in\cL_h,\ndeploy_{h,i}\geq 2^p\right\}.$$
Recall that $\theta_T(p)$ is set to $\theta_{h,i}$ and that we obtain $\hmax m_0$ new samples from $\cD(\theta_T(p))$  from which we construct $\PRhat(\theta_T(p))$. Moreover, $\PRhat(\theta_T)\leq \PRhat(\theta_T(p))$. As $\Eclean$ holds, it follows that
\begin{align}\label{eq:thm1-1}
\begin{aligned}
\PR(\theta_T)-\frac{2\mathfrak{C}^*(f) + 2\sqrt{\log(T/\delta)}}{\sqrt{\hmax m_0}}&\leq \PRhat(\theta_T) \\&\leq \PRhat(\theta_T(p))\\&\leq \PR(\theta_T(p))+\frac{2\mathfrak{C}^*(f) + 2\sqrt{\log(T/\delta)}}{\sqrt{\hmax m_0}}.
\end{aligned}
\end{align}
Again, as $\Eclean$ holds and $\theta_T(p)=\theta_{h,i}$,
\begin{equation}\label{eq:thm1-2}\PR(\theta_T(p))\leq \PRhat(\theta_{h,i}) + \frac{2\mathfrak{C}^*(f) + 2\sqrt{\log(T/\delta)}}{\sqrt{\ndeploy_{h,i} m_0}}\leq\PRhat(\theta_{h,i}) + \frac{2\mathfrak{C}^*(f) + 2\sqrt{\log(T/\delta)}}{\sqrt{2^p m_0}}. 
\end{equation}
Recall that $\bot_{\hmax,p}$ is the depth of the deepest cell containing $\thetaPO$ opened for at least $2^p$ times until \Cref{algorithm1} finishes opening cells of depth $\hmax$.
Let $(\bot_{\hmax,p}+1, i^\star)$ denote the deepest cell containing $\thetaPO$ and a solution deployed at least $2^p$ times. By the choice of $(h,i)$, we have 
\begin{equation}\label{eq:thm1-3}\PRhat(\theta_{h,i})\leq \PRhat(\theta_{\bot_{\hmax,p}+1, i^\star})\leq \PR(\theta_{\bot_{\hmax,p}+1, i^\star})+\frac{2\mathfrak{C}^*(f) + 2\sqrt{\log(T/\delta)}}{\sqrt{2^pm_0}} 
\end{equation}
where the second inequality holds because $\Eclean$ holds and $\ndeploy_{\bot_{\hmax,p}+1, i^\star}\geq 2^p$. Moreover, since the parent cell of $\cP_{\bot_{\hmax,p}+1, i^\star}$ is opened at least $2^p$ times, it means that the parent cell contains a solution deployed at least $2^p$ times. Then by~Lemma~\ref{lemma:cell-approximation}, it follows that
\begin{equation}\label{eq:thm1-4}\PR(\theta_{\bot_{\hmax,p}+1, i^\star})\leq \PR(\thetaPO)+2(2\sqrt{\Dtheta})^\alpha L_z\varepsilon2^{-\alpha (\bot_{\hmax,p}+1)}+ \frac{4\mathfrak{C}^*(f) + 4\sqrt{\log(T/\delta)}}{\sqrt{2^p m_0}}.
\end{equation}
In summary, we deduce from~\eqref{eq:thm1-1} -- \eqref{eq:thm1-4} that
\begin{align*}
&\PR(\theta_T)- \PR(\thetaPO)\\
&\leq 2(2\sqrt{\Dtheta})^\alpha L_z\varepsilon2^{-\alpha (\bot_{\hmax,p}+1)}+\frac{8\mathfrak{C}^*(f) + 8\sqrt{\log(T/\delta)}}{\sqrt{2^pm_0}} + \frac{4\mathfrak{C}^*(f) + 4\sqrt{\log(T/\delta)}}{\sqrt{\hmax m_0}},
\end{align*}
as required.
\end{proof}

Note that the regret bound given by~Lemma~\ref{lemma:regret-bound} holds for any $p\in[0:\pmax]$.
Hence, to prove an upper bound on the regret $\PR(\theta_T)- \PR(\thetaPO)$, we need to choose an appropriate $p$ that achieves a small value of
$$2(2\sqrt{\Dtheta})^\alpha L_z\varepsilon2^{-\alpha (\bot_{\hmax,p}+1)}+\frac{8\mathfrak{C}^*(f) + 8\sqrt{\log(T/\delta)}}{\sqrt{2^pm_0}}.$$
As in~\cite{bartlett19a}, the strategy is to choose $p$ under which there is a strong lower bound on $\bot_{\hmax,p}+1$. In our case, however, we have the additional term $\tilde O(1/\sqrt{2^p})$. In fact, we will argue that the choice of $p$, under which $\bot_{\hmax,p}+1$ is large, also makes the additional term small. 

For simplicity, we use notations $\rho$, $\nu$, and $B$ defined as
$$\rho = 2^{-\alpha},\qquad \nu = (2\sqrt{\Dtheta})^\alpha L_z\varepsilon,\qquad B=\frac{2\sqrt{2}\left(\mathfrak{C}^*(f) + \sqrt{\log(T/\delta)}\right)}{\sqrt{m_0}}.$$
With these notations, Lemma~\ref{lemma:depth} can be restated as follows.
\begin{lemma}\label{lemma:depth'}
Assume that $\Eclean$ holds for some $\delta\in(0,1)$.
Let $d$ denote the $(\nu, \rho,1)$-near-optimality dimension $d(\nu,\rho,1)$. For any $h\in[\lfloor\hmax/2^p\rfloor]$ and $p\in[0:\lfloor \log_2(\hmax/h)\rfloor]$, if the following condition holds, then $\bot_{h,p}=h$.
$$\frac{B}{\sqrt{2^{p+1}}}\leq \nu \rho^{h}\quad \text{and}\quad \frac{\hmax}{h 2^p}\geq \rho^{-dh}.$$
\end{lemma}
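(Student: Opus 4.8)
The plan is to observe that Lemma~\ref{lemma:depth'} is nothing more than Lemma~\ref{lemma:depth} rewritten with the shorthand $\rho = 2^{-\alpha}$, $\nu = (2\sqrt{\Dtheta})^\alpha L_z\varepsilon$, and $B=2\sqrt{2}(\mathfrak{C}^*(f) + \sqrt{\log(T/\delta)})/\sqrt{m_0}$, so the entire argument reduces to checking that the two hypotheses stated here coincide with the two hypotheses of Lemma~\ref{lemma:depth}. Since Lemma~\ref{lemma:depth} has already been established, once the hypotheses are matched the conclusion $\bot_{h,p}=h$ (and $\bot_{0,p}=0$) follows immediately.

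First I would translate the right-hand side of the first hypothesis. By the definitions of $\nu$ and $\rho$, we have $(2\sqrt{\Dtheta})^\alpha L_z\varepsilon\, 2^{-\alpha h} = \nu\,(2^{-\alpha})^h = \nu\rho^h$, so the right-hand sides agree. For the left-hand side, I would simplify $B/\sqrt{2^{p+1}}$: writing $\sqrt{2^{p+1}}=\sqrt{2}\,\sqrt{2^p}$ and substituting the definition of $B$ gives
$$\frac{B}{\sqrt{2^{p+1}}} = \frac{2\sqrt{2}\,(\mathfrak{C}^*(f)+\sqrt{\log(T/\delta)})}{\sqrt{m_0}\cdot \sqrt{2}\,\sqrt{2^p}} = \frac{2\mathfrak{C}^*(f)+2\sqrt{\log(T/\delta)}}{\sqrt{2^p m_0}},$$
which is exactly the left-hand side of the first hypothesis of Lemma~\ref{lemma:depth}. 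Hence the condition $B/\sqrt{2^{p+1}}\le \nu\rho^h$ is identical to the first condition of Lemma~\ref{lemma:depth}.

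Next I would handle the second hypothesis. Using $\rho = 2^{-\alpha}$, we have $2^{\alpha h d} = (2^\alpha)^{hd} = (2^{-\alpha})^{-dh} = \rho^{-dh}$, so the inequality $\hmax/(h2^p)\ge \rho^{-dh}$ is the same as $\hmax/(h2^p)\ge 2^{\alpha hd}$. Note also that the range constraints $h\in[\lfloor\hmax/2^p\rfloor]$ and $p\in[0:\lfloor \log_2(\hmax/h)\rfloor]$ are stated identically in both lemmas and need no translation. Thus both hypotheses of Lemma~\ref{lemma:depth'} are verbatim the hypotheses of Lemma~\ref{lemma:depth} under the new notation, and invoking Lemma~\ref{lemma:depth} completes the proof.

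There is essentially no obstacle here, as the statement is a reformulation rather than a new result; the only point requiring care is the bookkeeping of the factor $\sqrt{2}$ when rewriting $B/\sqrt{2^{p+1}}$, ensuring that the constant $2\sqrt{2}$ built into $B$ cancels against the $\sqrt{2}$ coming from $2^{p+1}$ to reproduce the constant $2$ appearing in the original threshold of Lemma~\ref{lemma:depth}.
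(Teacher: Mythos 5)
Your proposal is correct and matches the paper exactly: the paper gives no separate proof of Lemma~\ref{lemma:depth'}, introducing it only with the remark that ``Lemma~\ref{lemma:depth} can be restated as follows,'' and your substitution check---in particular that $B/\sqrt{2^{p+1}} = \bigl(2\mathfrak{C}^*(f)+2\sqrt{\log(T/\delta)}\bigr)/\sqrt{2^p m_0}$ and $\rho^{-dh}=2^{\alpha hd}$---is precisely the bookkeeping that justifies that restatement.
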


Next, we define $\tilde h$ and $\tilde p$ as the numbers satisfying the following condition.
$$\frac{\hmax \nu^2\rho^{2\tilde h}}{\tilde h B^2}=\rho^{- \tilde h d}\quad \text{and}\quad \frac{B}{\sqrt{2^{\tilde p}}}=\nu \rho^{\tilde h}.$$
Then by definition of the Lambert $W$ function, we have
$$\tilde h = \frac{1}{(d+2)\log (1/\rho)}W\left(\frac{\hmax \nu^2(d+2)\log(1/\rho) }{B^2}\right).$$
Here, $B\geq L_z\varepsilon\cdot 2^{-\alpha \tilde h}$ holds if and only if $2^{\tilde p}\geq 1$. Hence, the case when $2^{\tilde p}\geq 1$ corresponds to the high-noise regime and the setting where
$2^{\tilde p}< 1$ corresponds to the low-noise regime. Next, as in~\cite{bartlett19a}, we define $\ddot{h}$ and $\ddot{p}$ as follows.
\begin{itemize}
    \item (High-noise regime) Set $\ddot h =\tilde h$ and $\ddot p = \tilde p$.
    \item (Low-noise regime) Set $\ddot h$ as $\ddot h =\bar h$ that satisfies $\hmax / \bar h = \rho^{-d \bar h}$ and $\ddot p=0$.
\end{itemize}
Note that for this choice of $\ddot h$ and $\ddot p$, we have
${\hmax}/{\ddot h 2^{\ddot p}}= \rho^{- d \ddot h}$.
under both regimes. Moreover, with~Lemma~\ref{lemma:depth'}, we may argue that the following statement holds.

\begin{lemma}[\citet{bartlett19a}]\label{lemma:bartlett}
Assume that $\Eclean$ holds for some $\delta\in(0,1)$.
Let $d$ denote the $(\nu, \rho,1)$-near-optimality dimension $d(\nu,\rho,1)$. Then $\ddot h \leq \tilde h$ and 
$$\bot_{\hmax,\lfloor\ddot p\rfloor}+1\geq \lfloor\ddot h\rfloor +1 \geq \ddot h$$
under both the high-noise and low-noise regimes.
\end{lemma}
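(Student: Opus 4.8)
The plan is to prove the two assertions of the lemma separately, leaning on two monotonicity facts — monotonicity in $h$ of the deepest-opened-optimal-cell quantity $\bot_{h,p}$ and of an auxiliary scalar function — and on \Cref{lemma:depth'}. The identity $\hmax/(\ddot h\, 2^{\ddot p}) = \rho^{-d\ddot h}$, which holds in both regimes by the construction of $(\ddot h,\ddot p)$, will do most of the bookkeeping.

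First I would establish $\ddot h\leq \tilde h$. In the high-noise regime this is immediate, since $\ddot h=\tilde h$ by definition. In the low-noise regime $\ddot h=\bar h$ with $\hmax/\bar h=\rho^{-d\bar h}$, so I would introduce $g(h)=(\hmax/h)\rho^{dh}$, which is strictly decreasing in $h$ for both $d=0$ and $d>0$, and note $g(\bar h)=1$. Rewriting the defining equation of $\tilde h$ as $(\hmax/\tilde h)\rho^{(d+2)\tilde h}=B^2/\nu^2$ gives $g(\tilde h)=(B^2/\nu^2)\rho^{-2\tilde h}$; the low-noise condition $2^{\tilde p}<1$ together with $B/\sqrt{2^{\tilde p}}=\nu\rho^{\tilde h}$ yields $B^2/\nu^2<\rho^{2\tilde h}$, hence $g(\tilde h)<1=g(\bar h)$. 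Monotonicity of $g$ then forces $\tilde h>\bar h=\ddot h$.

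Next I would handle the chain $\bot_{\hmax,\lfloor\ddot p\rfloor}+1\geq \lfloor\ddot h\rfloor+1\geq \ddot h$. The right inequality is the elementary bound $\lfloor\ddot h\rfloor\geq \ddot h-1$. For the left inequality, equivalently $\bot_{\hmax,\lfloor\ddot p\rfloor}\geq \lfloor\ddot h\rfloor$, I would apply \Cref{lemma:depth'} at $h=\lfloor\ddot h\rfloor$ and $p=\lfloor\ddot p\rfloor$ (the case $\lfloor\ddot h\rfloor=0$ being trivial because depths are nonnegative) to get $\bot_{\lfloor\ddot h\rfloor,\lfloor\ddot p\rfloor}=\lfloor\ddot h\rfloor$, and then invoke monotonicity of $\bot_{h,p}$ in $h$ — continuing to open cells at greater depths only deepens the deepest opened cell containing $\thetaPO$ — to conclude $\bot_{\hmax,\lfloor\ddot p\rfloor}\geq \bot_{\lfloor\ddot h\rfloor,\lfloor\ddot p\rfloor}$. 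Verifying the hypotheses of \Cref{lemma:depth'} is where the identity above is used: passing to floors ($\lfloor\ddot h\rfloor\leq\ddot h$, $\lfloor\ddot p\rfloor\leq\ddot p$) and using $\rho^{-d\ddot h}\geq\rho^{-d\lfloor\ddot h\rfloor}$ gives the cardinality condition $\hmax/(\lfloor\ddot h\rfloor 2^{\lfloor\ddot p\rfloor})\geq\rho^{-d\lfloor\ddot h\rfloor}$, and the same identity yields the range conditions $\lfloor\ddot h\rfloor 2^{\lfloor\ddot p\rfloor}\leq\hmax$ and $\lfloor\ddot p\rfloor\leq\lfloor\log_2(\hmax/\lfloor\ddot h\rfloor)\rfloor$. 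The noise condition $B/\sqrt{2^{\lfloor\ddot p\rfloor+1}}\leq\nu\rho^{\lfloor\ddot h\rfloor}$ must be checked per regime: in the high-noise case, $2^{\lfloor\tilde p\rfloor}\geq 2^{\tilde p}/2$ lets the extra factor $\sqrt2$ absorb the floor, so $B/\sqrt{2^{\lfloor\tilde p\rfloor+1}}\leq B/\sqrt{2^{\tilde p}}=\nu\rho^{\tilde h}\leq\nu\rho^{\lfloor\tilde h\rfloor}$; in the low-noise case, where $\ddot p=0$, the inequality $\nu\rho^{\tilde h}>B$ (from $2^{\tilde p}<1$) combined with $\bar h<\tilde h$ gives $\nu\rho^{\lfloor\bar h\rfloor}\geq\nu\rho^{\bar h}>\nu\rho^{\tilde h}>B>B/\sqrt2$.

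I expect the main obstacle to be the careful bookkeeping of the floor operations inside the noise condition and the accompanying regime split, since this is precisely where the $\sqrt2$ slack built into the $p+1$ exponent of \Cref{lemma:depth'} gets consumed and where the low-noise comparison $\bar h\leq\tilde h$ must be married to the monotonicity of $h\mapsto\rho^h$. The cardinality and range conditions, by contrast, fall out directly from the defining equation $\hmax/(\ddot h\,2^{\ddot p})=\rho^{-d\ddot h}$, and the monotonicity of $\bot_{h,p}$ in $h$ is the only structural property of \Cref{algorithm1} invoked beyond \Cref{lemma:depth'} itself.
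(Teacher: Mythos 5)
Your proposal is correct and takes essentially the approach the paper intends: the paper itself omits the proof, citing \citet{bartlett19a} and remarking only that the claim follows ``with \Cref{lemma:depth'}'', and your argument is exactly that route --- verifying the three hypotheses of \Cref{lemma:depth'} at the floored pair $(\lfloor\ddot h\rfloor,\lfloor\ddot p\rfloor)$ via the identity $\hmax/(\ddot h\,2^{\ddot p})=\rho^{-d\ddot h}$, handling the noise condition per regime with the $\sqrt{2}$ slack, and finishing with monotonicity of $\bot_{h,p}$ in $h$, plus the monotone-function comparison $g(\tilde h)<1=g(\bar h)$ for $\ddot h\leq\tilde h$ in the low-noise case. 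All steps check out, including the edge case $\lfloor\ddot h\rfloor=0$ and the floor bookkeeping.
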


Now we are ready to complete the proof of \Cref{thm:data-driven}.

\begin{proof}[{\bfseries Proof of \Cref{thm:data-driven}}]
Under~$\Eclean$, Lemmas~\ref{lemma:regret-bound} and~\ref{lemma:bartlett} imply that 
$$\PR(\theta_T)- \PR(\thetaPO)\leq 2(2\sqrt{\Dtheta})^\alpha L_z\varepsilon2^{-\alpha \ddot h}+\frac{8\mathfrak{C}^*(f) + 8\sqrt{\log(T/\delta)}}{\sqrt{2^{\lfloor \ddot p\rfloor}m_0}} + \frac{4\mathfrak{C}^*(f) + 4\sqrt{\log(T/\delta)}}{\sqrt{\hmax m_0}}$$
holds.

Let us first consider the low-noise regime. Since $2^{\tilde p}<1$, we know that $B<\nu\rho^{\tilde h}$. By Lemma~\ref{lemma:bartlett}, we have $\ddot h\leq \tilde h$, which implies that $B<\nu\rho^{\tilde h}\leq \nu\rho^{\ddot h}$. Then as $\ddot p=0$ under the low-noise regime, it follows that 
$$\PR(\theta_T)- \PR(\thetaPO)\leq (2+2\sqrt{2})(2\sqrt{\Dtheta})^\alpha L_z\varepsilon2^{-\alpha \ddot h}+ \frac{4\mathfrak{C}^*(f) + 4\sqrt{\log(T/\delta)}}{\sqrt{\hmax m_0}}.$$
Moreover, if $\hmax\geq 1$, then as $B\leq \nu \rho^{\ddot h}$, 
$$\PR(\theta_T)- \PR(\thetaPO)\leq (2+3\sqrt{2})(2\sqrt{\Dtheta})^\alpha L_z\varepsilon2^{-\alpha \ddot h}.$$
When $d=0$, we have $\ddot h = \hmax$. When  $d>0$, we have
$$\ddot h = \frac{1}{\alpha d\log 2}W\left(\hmax \alpha d\log 2\right).$$
For the high-noise regime, we have $\ddot h = \tilde h$ and
$$\frac{8\mathfrak{C}^*(f) + 8\sqrt{\log(T/\delta)}}{\sqrt{2^{\lfloor \ddot p\rfloor}m_0}}=\frac{4 B}{\sqrt{2^{\lfloor \ddot p\rfloor+1}}}\leq \frac{4 B}{\sqrt{2^{\ddot p}}}=4\nu \rho^{\tilde h}.$$
Therefore, under the high-noise regime, we have
$$\PR(\theta_T)- \PR(\thetaPO)\leq 6(2\sqrt{\Dtheta})^\alpha L_z\varepsilon2^{-\alpha \tilde h}+ \frac{4\mathfrak{C}^*(f) + 4\sqrt{\log(T/\delta)}}{\sqrt{\hmax m_0}}.$$
Recall that $\tilde h$ is given by 
$$\tilde h = \frac{1}{\alpha(d+2)\log 2}W\left(\frac{ (4\Dtheta)^\alpha \alpha(d+2)\log 2 }{8(\mathfrak{C}^*(f) + 4\sqrt{\log(T/\delta)})^2}L_z^2\varepsilon^2m_0\hmax\right).$$
Lastly, \cite{Hoorfar2008} showed that if $x\geq e$, then $W(x)\geq \log(x/\log (x))$. Hence, if $d>0$ and $\hmax \alpha d \log 2\geq e$ under the low-noise regime, then $\theta$ satisfies
$$\PR(\theta) - \PR(\thetaPO) \leq (2+3\sqrt{2})(2\sqrt{\Dtheta})^\alpha L_z\varepsilon\left(\frac{\hmax \alpha d \log 2}{\log(\hmax \alpha d \log 2)}\right)^{-\frac{1}{d}}.$$
Moreover, if $B^2 \hmax \nu^2\alpha (d+2)\log2\geq e$, then
\begin{align*}
&\PR(\theta) - \PR(\thetaPO) \\
&\leq 6(2\sqrt{\Dtheta})^\alpha L_z\varepsilon\left(\frac{\hmax \nu^2\alpha (d+2)\log2 /B^2}{\log(\hmax \nu^2\alpha (d+2)\log2 /B^2)}\right)^{-\frac{1}{d+2}}+\frac{4\mathfrak{C}^*(f) + 4\sqrt{\log(T/\delta)}}{\sqrt{\hmax m_0}},
\end{align*}
as required.
\end{proof}

\vskip 0.2in
\bibliography{mybibfile}

\end{document}